\documentclass[journal, twoside]{IEEEtran}
\usepackage{times}
\usepackage[pdftex]{graphicx}
\usepackage{subfigure}
\usepackage{amsmath,amssymb,amsopn,amstext,amsfonts,amsthm}
\let\emptyset\varnothing
\usepackage{cancel}
\usepackage[space]{cite}
\usepackage{pdfsync}
\usepackage{balance}
\usepackage{color}
\usepackage{mathtools}
\usepackage[noend]{algpseudocode}
\usepackage{algorithm}
\usepackage{bm}
\usepackage{diagbox}
\usepackage{float}
\usepackage[outdir=./]{epstopdf}
\usepackage{url}
\usepackage{multirow}
\usepackage{makecell}
\usepackage{adjustbox}
\usepackage{tabularx, booktabs}
\usepackage{lipsum}
\usepackage{enumitem}
\usepackage{tikz}
\usepackage{textcomp}

\usepackage[font=small,skip=2pt]{caption}
\usepackage[linkcolor=black,citecolor=black,urlcolor=black,colorlinks=true]{hyperref}

%define theorem styles
\newtheoremstyle{remarkIndent}
  {0.5\topsep}   % ABOVESPACE
  {0.5\topsep}   % BELOWSPACE
  {\normalfont}  % BODYFONT
  {1.0em}        % INDENT (empty value is the same as 0pt)
  {\itshape}     % HEADFONT
  {.}            % HEADPUNCT
  {5pt plus 1pt minus 1pt} % HEADSPACE
  {}              % CUSTOM-HEAD-SPEC

\theoremstyle{remarkIndent}
\newtheorem{theorem}{Theorem}
\newtheorem{prop}{Proposition}
\newtheorem{problem}{Problem}

\newtheorem{lemma}{Lemma}
\newlength{\textfloatsepsave}
\setlength{\textfloatsepsave}{\textfloatsep}
%define end proof style

%define bib style
\bibliographystyle{IEEEtran}
%short hands

\newcommand{\norm}[1]{\lVert#1\rVert_2}

\newcommand{\point}{\mathbf}
\newcommand{\set}{\mathcal}
%define table stretch

%define graphic types
\DeclareGraphicsExtensions{.pdf}

\newcommand\copyrighttext{%
  \footnotesize \textcopyright 2019 IEEE. IEEE Transactions on Robotics. Personal use of this material is permitted. Permission from IEEE must be obtained for all other uses. }
\newcommand\copyrightnotice{%
\begin{tikzpicture}[remember picture,overlay]
\node[anchor=south,yshift=10pt] at (current page.south) {\fbox{\parbox{\dimexpr\textwidth-\fboxsep-\fboxrule\relax}{\copyrighttext}}};
\end{tikzpicture}%
}

\begin{document}

\title{An Efficient B-spline-Based Kinodynamic \\Replanning Framework for Quadrotors}
\author{Wenchao Ding, Wenliang Gao, Kaixuan Wang, and Shaojie Shen%
\thanks{Accepted final version. To Appear in IEEE Transactions on Robotics. \textcopyright 2019 IEEE. Personal use of this material is permitted. Permission from IEEE must be obtained for all other uses.
This work was supported by Hong Kong PhD Fellowship Scheme, HKUST-DJI Joint Innovation Laboratory. (\textit{Corresponding author: Wenchao Ding.}) }
\thanks{The authors are with the Department of Electronic and Computer Engineering, Hong Kong University of Science and Technology, Hong Kong, China (email: wdingae@ust.hk; wenliang.gao@ust.hk; kwangap@ust.hk; \mbox{eeshaojie@ust.hk}).}
\thanks{This paper has supplementary downloadable multimedia material available at \url{http://ieeexplore.ieee.org}.}
\thanks{Color versions of one or more of the figures in this paper are available online at \url{http://ieeexplore.ieee.org}.}
%\thanks{Digital Object Identifier (DOI): see top of this page.}
}

\markboth{Author's version}{DING \MakeLowercase{\textit{et al.}}: An Efficient B-spline-Based Kinodynamic Replanning Framework for Quadrotors}

\maketitle

\begin{abstract}
Trajectory replanning for quadrotors is essential to enable fully autonomous flight in unknown environments. Hierarchical motion planning frameworks, which combine path planning with path parameterization, are popular due to their time efficiency. However, the path planning cannot properly deal with non-static initial states of the quadrotor, which may result in non-smooth or even dynamically infeasible trajectories. In this paper, we present an efficient kinodynamic replanning framework by exploiting the advantageous properties of the B-spline, which facilitates dealing with the non-static state and guarantees safety and dynamical feasibility. Our framework starts with an efficient B-spline-based kinodynamic (EBK) search algorithm which finds a feasible trajectory with minimum control effort and time. To compensate for the discretization induced by the EBK search, an elastic optimization (EO) approach is proposed to refine the control point placement to the optimal location. Systematic comparisons against the state-of-the-art are conducted to validate the performance. Comprehensive onboard experiments using two different vision-based quadrotors are carried out showing the general applicability of the framework.
\end{abstract}

\begin{IEEEkeywords}
Aerial systems: perception and autonomy, motion and path planning, collision avoidance, trajectory planning.
\end{IEEEkeywords}

\copyrightnotice

\section{Introduction}
\IEEEPARstart{A}{utonomous} navigation for quadrotors in unknown environments has gained significant interest for its practical usage in various inspection and exploration tasks. To fulfill the need of fully autonomous exploration in unknown environments, trajectory replanning is of great significance. Replanning requires a real-time response to unexpected obstacles to guarantee safety while satisfying the low-level feasibility constraints induced by the non-trivial dynamics.

Many existing methods~\cite{mellinger2011minsnap, gao2016online, richter2016polyunqp, liu2017sfc, chen2016online} tackle this challenging problem using a hierarchical framework, which first finds a geometric path and then locally optimizes the path to a dynamically feasible trajectory with respect to a given time allocation. Although this framework is efficient, inadequacy exists between the path finding and the local path parameterization. Specifically, the parameterization may be restricted by the geometric path to a homotopy class that does not contain a globally optimal (or even feasible) solution, especially when faced with non-static initial states (non-zero velocities or any other higher-order derivatives), as shown in Fig.~\ref{fig:cover_example}.

\begin{figure}[t]
	\centering
	\includegraphics[width=0.43\textwidth]{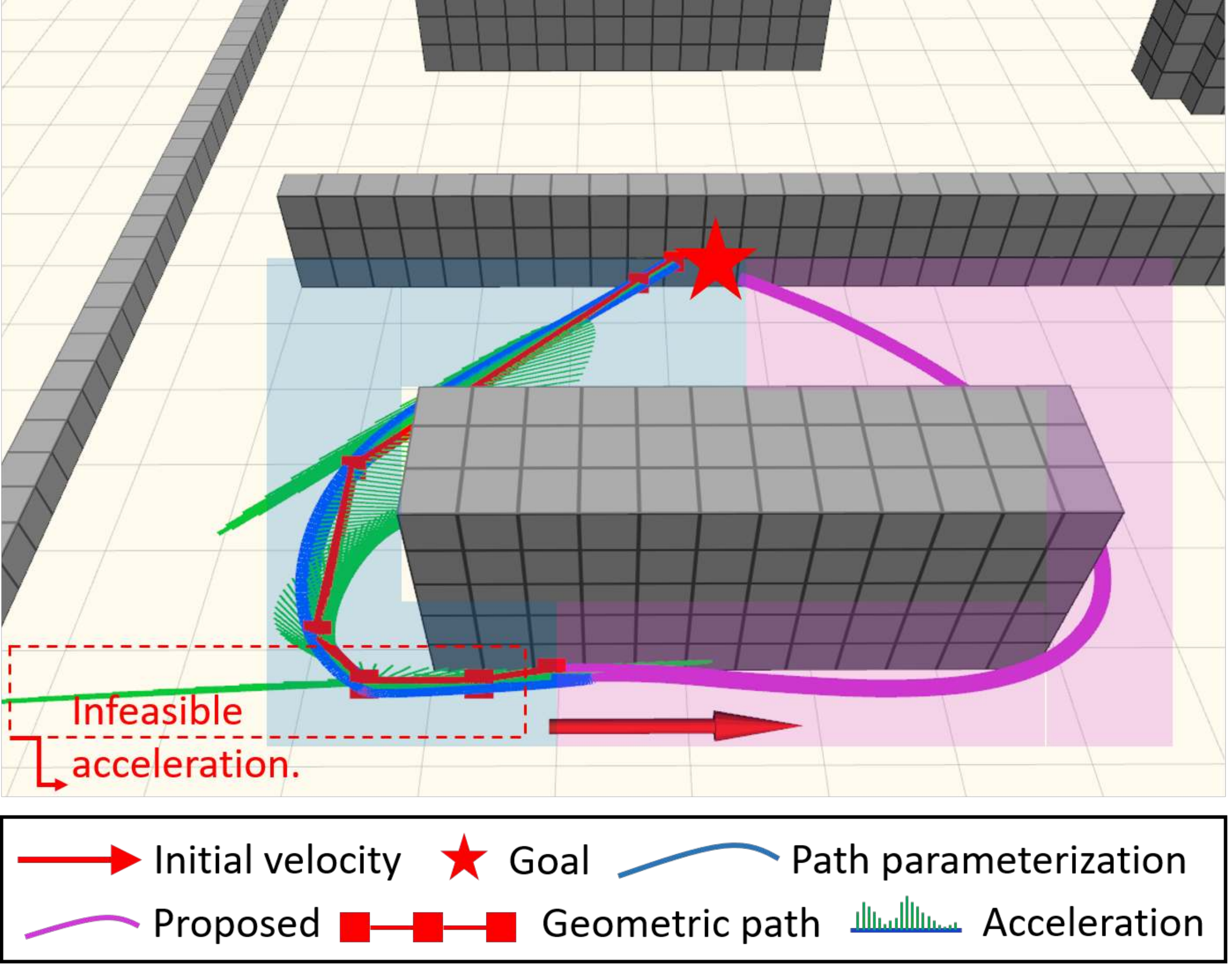}
	\caption{Illustration of the motivating example. The initial state has non-zero velocity (\textit{red arrow}). The traditional geometric planner finds the shortest path (red squares) and then parameterizes it using a piecewise polynomial (\textit{blue}). However, the local path parameterization is restricted to a homotopy class (\textit{blue} area), and the resultant trajectory is jerky (even infeasible) with respect to the given time allocation. In contrast, our framework produces a dynamically feasible trajectory (\textit{purple line}) using kinodynamic planning.}\label{fig:cover_example}
	\vspace{-0.5cm}
\end{figure}

\begin{figure}[h]
	\begin{center}
		\subfigure[Monocular Vision-Based Quadrotor Testbed\label{fig:drone}]{\includegraphics[trim={0cm 0cm 0cm 2.5cm},clip,width=0.44\textwidth]{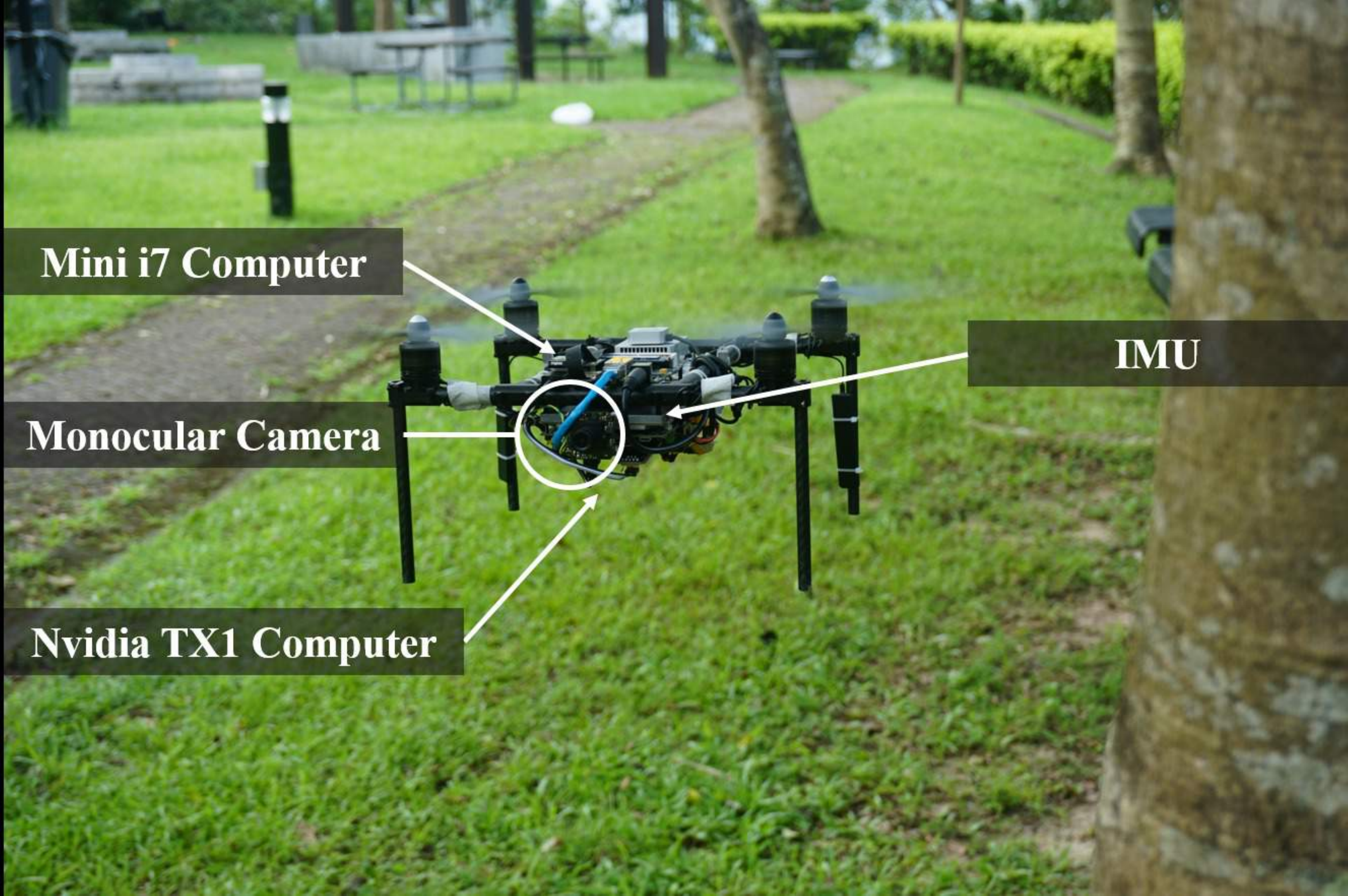}}
		\subfigure[Dual-fisheye Vision-Based Quadrotor Testbed\label{fig:fisheye_drone}]{\includegraphics[trim={0cm 0cm 0cm 2.5cm},clip,width=0.44\textwidth]{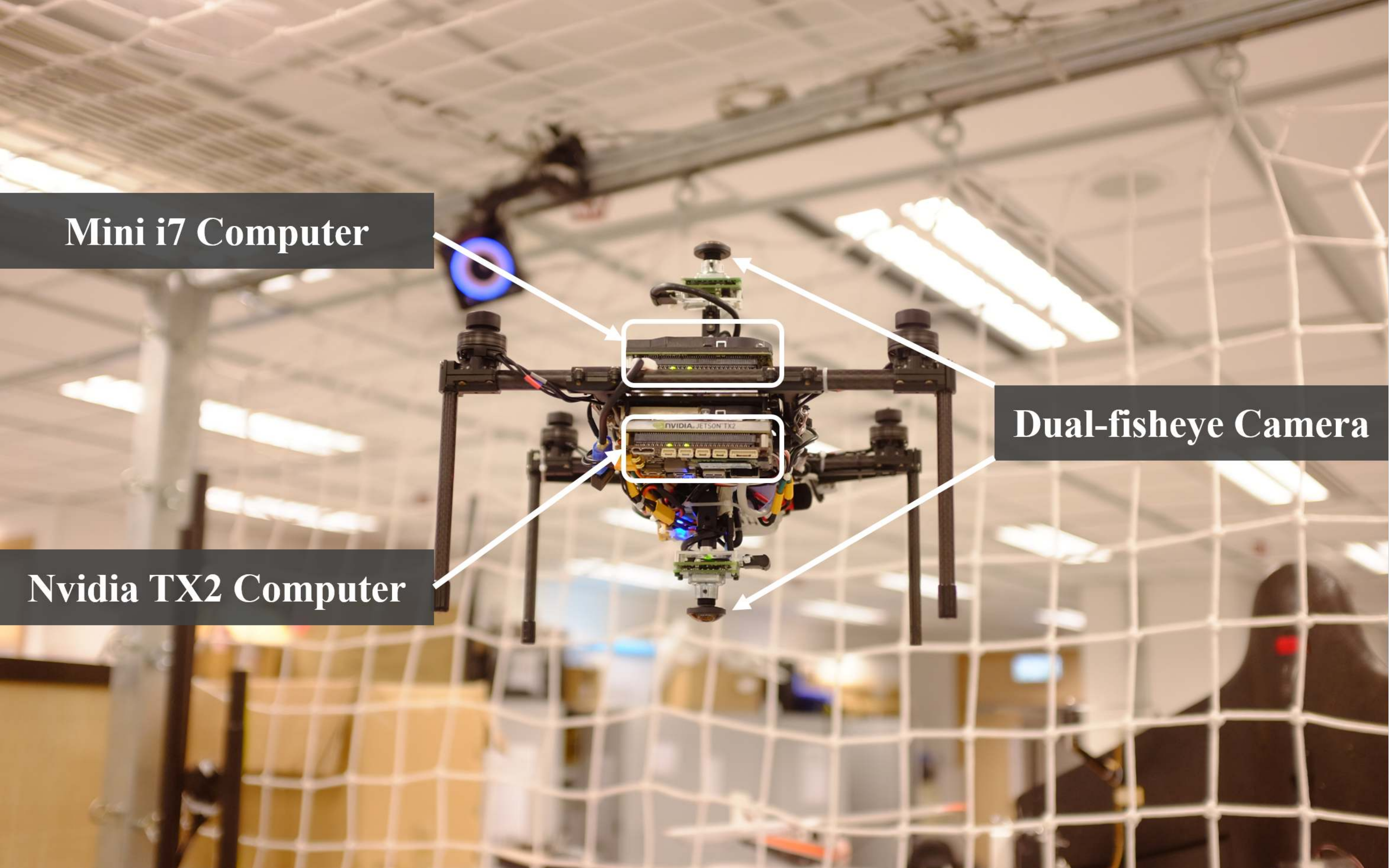}}
	\end{center}
	\caption{Illustration of our (a) monocular vision-based quadrotor testbed and (b) dual-fisheye vision-based quadrotor testbed.}
	\vspace{-0.3cm}
\end{figure}

To address the limitations of the hierarchical methods, it is essential to use kinodynamic motion planners, which directly find time-parameterized trajectories that are globally optimal with respect to control efforts and dynamical limits. The incorporation of kinodynamic planners into replanning facilitates dealing with non-static initial states and enhances replanning consistency.

Sampling-based motion planning algorithms, such as rapidly-exploring random trees (RRTs)~\cite{lavalle2001randomized} and their variants~\cite{webb2013kinodynamic, gammell2015bit,karaman2011optimal, janson2015fast, kuwata2008motion}, are popular in the kinematic/kinodynamic planning literature. Asymptotical optimality has been proved for some of them~\cite{webb2013kinodynamic, karaman2011optimal, janson2015fast, gammell2015bit}. However, when applied to complex kinodynamic systems, they typically require solving a computationally expensive non-linear two-point boundary value problem (BVP)~\cite{xie2015kinobit, li2016sst} and cannot run in real-time. Liu \textit{et al.}~\cite{liu2017smp} explored a search-based counterpart and proposed a heuristic-guided resolution-complete (optimal with respect to discretization) search method using linear quadratic minimum time control. However, for high-order kinodynamic systems or a large dynamic range, the run-time efficiency is inadequate.

In this paper, we present a kinodynamic replanning framework which addresses the efficiency bottleneck. Our framework starts with an efficient B-spline-based kinodynamic (EBK) search algorithm, which finds B-spline control points on a spatial grid. We introduce the novel concept of vertex tuple to keep the search problem simple and analyzable, which enables a thorough theoretical characterization of the problem. Built on top of the structure of the optimal solution, a graph aggregation technique is proposed to minimize the computation time through a controllable discretization of the search space. An offline-computable minimum inflation is adopted to avoid unnecessary collision checking and further accelerate the online search. Compared to state-of-the-art methods (Sect.~\ref{sec:analysis}), our kinodynamic search finds the lowest-cost dynamically feasible trajectories in real time.

To compensate for the discretization in the search, an elastic optimization (EO) approach is proposed to refine the control point placement to the optimal location, by solving a convex quadratically constrained quadratic programming (QCQP) problem. The two components are integrated into a receding horizon replanner based on the local control property of the B-spline.

Our replanning framework is not only theoretically analyzed, but also validated in simulated and onboard experiments. Superior performance is shown through comprehensive comparisons against the state-of-the-art kinodynamic planning and trajectory optimization methods. Moreover, the practical impact of our framework is verified through onboard experiments using a monocular vision-based quadrotor and a dual-fisheye vision-based quadrotor in unknown indoor and outdoor environments. We summarize our contributions as follows:
\begin{itemize}[leftmargin=*]
	\item An EBK search algorithm which provides an initial-state-aware dynamically feasible time-parameterized trajectory.
	\item An EO approach that refines the control point placement to the optimal location while preserving the safety and dynamical feasibility.
	\item Systematic comparisons against the state-of-the-art showing the superior performance of the proposed framework.
	\item Integration of our framework into a real monocular vision-based quadrotor and a dual-fisheye vision-based quadrotor as well as extensive experiments demonstrating fully autonomous navigation in unknown, complex indoor and outdoor environments.
\end{itemize}

A basic version of our framework was originally presented in~\cite{ding18replanning}, where we introduced the real-time B-spline-based kinodynamic (RBK) search. Although the RBK search achieves high computational efficiency, the absence of theoretical optimality analysis in~\cite{ding18replanning} limits confidence in its solution quality and further limits its theoretical impact. In this paper, instead of directly developing efficient methods, we tackle the kinodynamic search problem in a systematic way: 1) We first characterize the complexity and optimal solution of the search problem using a novel vertex tuple structure. 2) We then establish the quality-efficiency tradeoff using a novel graph aggregation technique, which provides a user-specified parameter to control algorithm efficiency and solution quality. The above two theoretical additions render the more flexible and theoretically reliable EBK search. It also turns out that the preliminary version in~\cite{ding18replanning} is perfectly contained in the EBK search. Apart from the theoretical additions, more comprehensive experimental analyses in a wide variety of environments are presented to support the new characteristics.

The relevant literature is discussed in Sect.~\ref{sec:related_works}. An overview of the proposed replanning framework is provided in Sect.~\ref{sec:overview}. Mathematical background and advantageous properties of B-spline are introduced in Sect.~\ref{sec:Bspline_property}. The problem formulation and algorithm detail of the EBK search are elaborated in Sect.~\ref{sec:kinodynamic_search}. The EO approach is presented in Sect.~\ref{sec:elastic_optimization}. Implementation details are given in Sect.~\ref{sec:implementation_details}. Systematic comparisons against the state-of-the-art methods are provided in Sect.~\ref{sec:analysis}, and onboard experimental results are illustrated in Sect.~\ref{sec:experimental}. Finally, a conclusion and further possible research directions are provided in Sect.~\ref{sec:conclusion}.
\begin{figure*}[t]
	\centering
	\includegraphics[width=0.97\textwidth]{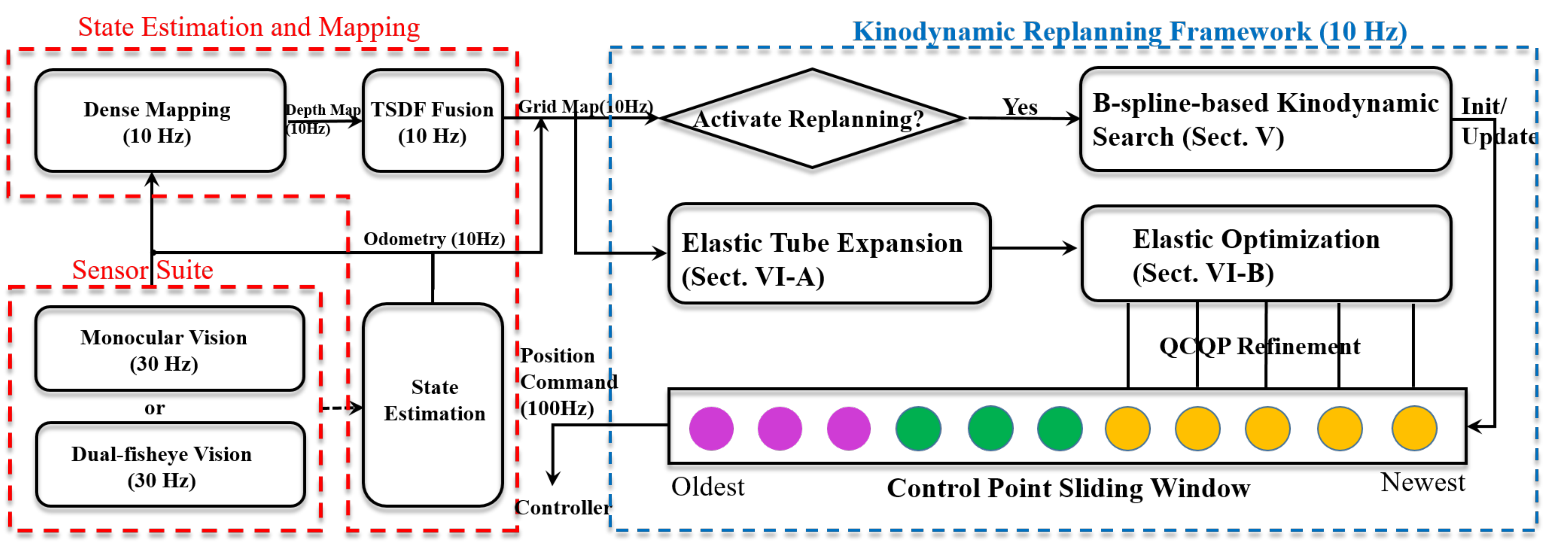}
	\caption{A diagram of our kinodynamic replanning framework together with state estimation and mapping modules.\label{fig:system_overview}}
	\vspace{-0.25cm}
\end{figure*}

\section{Related Works}
\label{sec:related_works}
There is extensive literature on motion planning techniques for quadrotors from various perspectives, such as control-based methods~\cite{van2012lqg, zhou2014vector, bareiss2015stochastic}, search-based methods~\cite{liu2017smp, liu2017search,likhachev2009dstar,aine2016multi}, sampling-based methods~\cite{karaman2011optimal, karaman2010incremental, webb2013kinodynamic, allen2016real, pivtoraiko2013sampleprimitive} and optimization-based methods~\cite{chen2016online, gao2016online, oleynikova2016ct, deits2015iris}. It is difficult to give a full literature review of all these techniques, so in this section, we choose the most relevant and organize them into two categories, namely, hierarchical motion planning techniques and kinodynamic motion planning techniques.

\subsection{Hierarchical Motion Planning}
Hierarchical motion planning refers to a high-level geometric path planner coupled with a low-level time parameterization scheme. The high-level geometric planner is concerned with finding an obstacle-free path, while the low-level parameterization scheme takes care of the vehicle dynamical constraints and generates a time-parameterized trajectory for execution. For quadrotors which have non-trivial dynamics, directly generating a trajectory in the high-dimensional state space is time consuming, while smoothing a given geometric path is computationally efficient (with a suitable relaxation)~\cite{kannan2013close}. As such, the hierarchical framework is popular for quadrotors, and it enables a number of online methods~\cite{mellinger2011minsnap, gao2016online, richter2016polyunqp, liu2017sfc, chen2016online}.

Two pioneering works~\cite{mellinger2011minsnap, richter2016polyunqp} extract waypoints from the geometric path and formulate the trajectory generation problem as quadratic programming (QP) on polynomial coefficients. These methods are based on the differential flatness of the quadrotor~\cite{mellinger2011minsnap}. Due to the deviation of the polynomial trajectory from the straight-line collision-free path, an iterative waypoint insertion scheme is adopted~\cite{richter2016polyunqp}. However, how many additional waypoints are needed is not quantified. Chen \textit{et al.}~\cite{chen2016online} propose a corridor-based geometric planner based on the octree-based map structure~\cite{chen2017improving}. The control effort can be reduced by generating the trajectory in a series of connected cubes. Apart from that, they propose an iterative process of adding constraints on polynomial extremas to cope with the deviation from the corridor, and prove that a finite number of iterations is needed to guarantee safety. Liu \textit{et al.}~\cite{liu2017sfc} further generalize the corridor representation to a series of connected convex polygons.

Although the hierarchical methods have made significant achievements, they suffer from the common problem that the geometric planner is unaware of the vehicle dynamics, resulting in inadequacy between the path planning and path parameterization, especially when faced with non-static initial states. The example in Fig.~\ref{fig:cover_example} motivates us to explore the problem from the kinodynamic planning perspective, which is discussed in Sect.~\ref{sec:review_kinodynamic}.

\subsection{Kinodynamic Motion Planning}\label{sec:review_kinodynamic}
The kinodynamic motion planner directly explores the high-dimensional state space, and outputs a time-parameterized trajectory, which fundamentally avoids the inadequacy between path planning and parameterization. RRTs~\cite{lavalle2001randomized} and their variants~\cite{webb2013kinodynamic, gammell2015bit, karaman2010incremental,karaman2011optimal, janson2015fast, kuwata2008motion} were originally designed for kinematic systems and can be easily extended to kinodynamic systems. These methods provide an efficient way of exploring the high-dimensional state space, and some of them possess asymptotical optimality~\cite{webb2013kinodynamic, karaman2011optimal, janson2015fast, gammell2015bit}. However, for robots with non-trivial dynamics, the tree expansion typically involves solving the BVP, which is non-linear and challenging. Webb~\textit{et al.}~\cite{webb2013kinodynamic} propose a fixed-final-state-free-final-time optimal controller which solves the BVP for linear (or linearized) controllable systems in closed form. Xie~\textit{et al.}~\cite{xie2015kinobit} propose an efficient BVP solver for general kinodynamic systems using sequential quadratic programming (SQP). Li~\textit{et al.}~\cite{li2016sst} work in another direction, namely, expanding the tree using random control propagation, for cases where system models are complex and BVP solvers are not available.

Despite the fact that the efficiency of the kinodynamic planning techniques keeps improving~\cite{xie2015kinobit, li2016sst}, it is still prohibitively expensive for replanning. Allen~\textit{et al.}~\cite{allen2016real} work towards a real-time kinodynamic planning framework by combining FMT*~\cite{janson2015fast} with a support vector machine (SVM) for the classification of the reachable set. This framework~\cite{allen2016real} reduces the calling of the BVP solver to gain efficiency. However, the solution quality largely depends on the number of states pre-sampled. On the other hand, Liu~\textit{et al.}~\cite{liu2017smp} explore the search-based kinodynamic planning counterpart and develop efficient heuristics by solving a linear quadratic minimum time problem. Their solution is resolution-complete with respect to the discretization on the control input, and achieves near real-time performance. Note that both~\cite{liu2017smp} and~\cite{allen2016real} use a simplified system model, i.e., a double or triple integrator, to reduce the computation complexity. However, the resultant trajectory only has limited continuity. To improve the smoothness, both~\cite{liu2017smp} and~\cite{allen2016real} adopt trajectory reparameterization using the unconstrained QP formulation~\cite{richter2016polyunqp}, which may break the dynamical feasibility and safety.

In contrast, the proposed EBK search adopts a high-order B-spline parameterization with continuity up to snap, which can be directly used to control the quadrotor. Moreover, the advantageous properties of the B-spline facilitate the kinodynamic replanning as follows:
\begin{itemize}
	\item Local control property for incrementally constructing the B-spline trajectory in the kinodynamic search and local refinement of the trajectory during replanning.
	\item Convex hull property for enforcing collision-free constraints and providing a dynamical feasibility guarantee for the entire trajectory.
\end{itemize}

Given the same run-time budget according to the real-time requirement, the EBK search finds a lower-cost trajectory, as validated by the comprehensive comparisons against the state-of-the-art in Sect.~\ref{sec:analysis}. Apart from this, the proposed refinement module, namely, the EO approach, can preserve the safety and dynamical feasibility by taking advantage of the convex hull property of the B-spline.

\section{Overview}\label{sec:overview}
The structure of the proposed framework is shown in Fig.~\ref{fig:system_overview}. The replanning framework is built on top of the state estimation and dense mapping module, which are discussed in Sec.~\ref{sec:implementation_details}.
The frequency of the grid map update is $10$ Hz. As such, the real-time requirement in this paper refers to a run-time of less than $100$ ms. The updated map and the initial state of the quadrotor are fed to the EBK search module (Sect.~\ref{sec:review_kinodynamic}), and the replanning strategy is elaborated in Sect.~\ref{sec:implementation_details}. The control points are constantly refined by the proposed EO approach (Sect.~\ref{sec:elastic_optimization}), which consists of an elastic tube expansion module (Sec.~\ref{sec:elastic_expansion}) for free space characterization and a convex optimization formulation (Sec.~\ref{sec:elastic_opt_formulation}) for trajectory refinement. The confirmed control points are evaluated, and position commands are generated accordingly.

\section{B-spline Curve and Replanning}\label{sec:Bspline_property}
For the proposed kinodynamic planning framework, we adopt a B-spline parameterization for its advantageous properties, namely, local control and convex hull property, and we further adopt the uniform B-spline for its convenient closed-form evaluations. In this section, we elaborate these properties and explain how they can be applied to the replanning system.

Given $n+1$ control points $\mathbf{p}_0, \mathbf{p}_1, \ldots, \mathbf{p}_n$ and knot vector $\{t_0, t_1, \ldots, t_m \}$, the B-spline curve $\mathbf{s}(t)$ of degree $k$ is defined as follows:
\begin{equation}
	\mathbf{s}(t) = \sum_{i=0}^{n} \point{p}_i N_{i,k}(t),
\end{equation}
where $N_{i,k}(t)$ is the B-spline blending function of degree $k$, which can be evaluated recursively as follows:
\begin{equation}
	\begin{aligned}
		N_{i,0}(t) &=
				\begin{cases}
					1 &\text{if $t_i \leq t < t_{i+1}$}\\
				    0 &\text{otherwise}
				\end{cases}\\
		N_{i,k}(t) &= \frac{t-t_i}{t_{i+k}-t_i}N_{i,k-1}(t) + \frac{t_{i+k+1}-t}{t_{i+k+1}-t_{i+1}}N_{i+1,k-1}(t).
	\end{aligned}
\end{equation}

The total number of knots should satisfy $m+1 = n+k+2$. The uniform B-spline is a special type of B-spline whose knot vector is uniformly distributed. Suppose the knot vector is separated with equidistance $\Delta_t$. The half-open interval $[t_i, t_{i+1})$ is called the $i-$th knot span. We normalize each knot span using $u = (t-t_i)/\Delta_t$, and for the $i$-th knot span, only $k+1$ blending functions are non-zero, corresponding to $k+1$ control points $\mathbf{p}_{i-k}, \ldots, \mathbf{p}_i$. We stack the $k+1$ control points and call the stacked coordinate matrix a control point span $\mathbf{P}_{i-k} \coloneqq {\left[ \mathbf{p}_{i-k} \, \mathbf{p}_{i-k+1} \cdots \mathbf{p}_{i} \right]}^ {\intercal} \in \mathbb{R}^{ (k+1)\times3}$. Since the blending functions $N_{i,k}(t)$ are shifted versions of each other for the uniform B-spline, we have closed-form matrix representations~\cite{qin2000bsplineMatrix} for parametric evaluation. Let $j=i-k$, and the position and the derivatives of the B-spline curve corresponding to the $j$-th control point span can be evaluated as follows:
\begin{equation}
	\label{eq:bspline_derivative}
	\frac{d \mathbf{s}_j(u)}{d^{l}u} = \frac{1}{ {(\Delta t)}^l} \frac{d \mathbf{b}^{\intercal}}{d^l u} \mathbf{M}_k \mathbf{P}_j,
\end{equation}
where $l$ denotes the order of the derivative ($l=0$ means the position), $\mathbf{b} = {\left[ 1 \, u \, u^2\, \cdots\, u^{k} \right]}^{\intercal} \in \mathbb{R}^{k+1}$ denotes the basis vector, and $\mathbf{M}_k = (m_{i,j})\in \mathbb{R}^{ (k+1)\times(k+1)}$ denotes the blending matrix, where $m_{i,j} = \frac{1}{k!}{k \choose k-i}\sum_{s=j}^{k}{(-1)}^{s-j}{k+1 \choose s-j}{(k-s)}^{k-i}$. According to Eq.~\ref{eq:bspline_derivative}, the evaluation of the derivatives of the B-spline curve can be expressed by a linear matrix multiplication in terms of the control point span $\mathbf{P}_j$. The paper uses a quintic uniform B-spline ($k=5$) to ensure the continuity up to snap for controlling quadrotors.

As described by Mellinger\cite{mellinger2011minsnap}, the control cost of a quadrotor is closely related to the integral over squared derivatives of the planned trajectory, which can also be evaluated in closed form in the case of the uniform B-spline. The total control cost $E_j^l$ of the $j$-th control point span can be expressed by the integral over the squared derivatives of degree $l$ (e.g., for the min-snap trajectory, $l=4$) as follows:
\begin{equation}
	\label{eq:control_cost}
	E_j^l = \int_{0}^{1} {\left( \frac{d \mathbf{s}_j(u)}{d^{l}u} \right)^2 du}
	= \mathbf{P}_{j}^{\intercal} \mathbf{M}_k^{\intercal} \mathbf{Q}_l \mathbf{M}_k \mathbf{P}_j,
\end{equation}
where $\mathbf{Q}_l = \int_{0}^{1} \left( \frac{d \mathbf{b}}{d^l u} \right) {\left( \frac{d \mathbf{b}}{d^l u} \right)}^{\intercal} du /{(\Delta_t)}^{2l-1}$ is the Hessian matrix of the $l$-th squared derivative, which is constant for the uniform B-spline. The control cost $E_j^l$ is quadratic with respect to the control point span $\mathbf{P}_j$. Note that the cost evaluation of a span only depends on the stacked control point coordinates of this span.

\subsection{Local Control Property and Replanning}
The \textit{local control} is one of the important properties of B-spline, making it suitable for replanning. Specifically, the evaluation of any point of the B-spline curve is controlled by a single control point span containing $k+1$ control points, and any control point only affects $k+1$ control point spans. We incorporate the local control property into a receding horizon (re-)planner, and we divide the planned trajectory into three types, namely, executed trajectory, executing trajectory and optimizing trajectory. The executed trajectory means the part of the trajectory which has already been executed, the executing trajectory means the part of the trajectory corresponding to the control point span being executed, and the optimizing trajectory means the part of the trajectory whose supporting control points are potentially under optimization.

Thanks to the local control property, modification of the supporting control points of the optimizing trajectory will not affect the evaluation of the executing trajectory, as shown in Fig.~\ref{fig:bspline_replan}. Unlike~\cite{yang2010analytical} and~\cite{yang2015generation} where local reshaping may cause the violation of dynamical constraints, the dynamical feasibility of the executing trajectory can be preserved by leveraging the local control property. Moreover, the locality also makes it possible to optimize any subset of control points without re-generation of the whole trajectory, which is computationally efficient. The locality also helps to preserve a smooth trajectory since the next executing control point span always shares $k$ control points with the current executing control point span, yielding $k$-th-order continuity and a consistent trajectory.
\begin{figure}[t]
	\centering
	\includegraphics[width=0.35\textwidth]{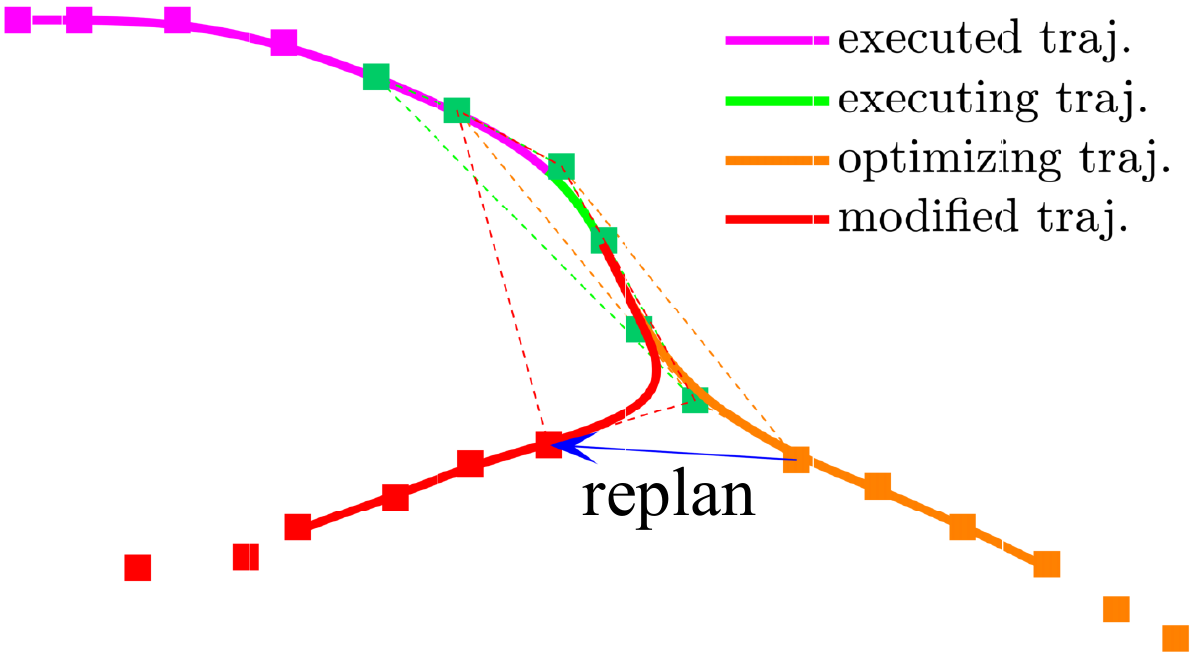}
	\caption{Illustration of the B-spline local control property and its application to the replanning system. The control points are shown by squares. The control points corresponding to the executing trajectory are shown in \textit{green}. The original locations of the control points ahead of the executing control point are in \textit{orange}, and their subsequently modified positions are marked in \textit{red}. Due to the locality of the B-spline, the replan causes no perturbation to the executing trajectory.\label{fig:bspline_replan}}
	\vspace{-0.5cm}
\end{figure}
\subsection{Convex Hull Property and Dynamical Feasibility}
Another important property of the B-spline is the \textit{convex hull} property. A B-spline (or B\`{e}zier~\cite{ding2019safe}) trajectory is strictly bounded inside the convex hull supported by the corresponding control point span. Strictly speaking, dynamical feasibility should be induced by the robot's kinematic and dynamical constraints. Given polynomial/spline parameterization, a common practice to enforce dynamical feasibility is to use maximum velocity and maximum acceleration bounds~\cite{liu2017sfc, feigao2017hg, usenko2017bspgradient}. We follow this practice in this paper. Note that for piecewise polynomial parameterization methods~\cite{liu2017sfc, gao2016online}, the dynamical feasibility constraints are enforced on a finite number of checkpoints. Denser checkpoints will enhance the robustness but yield higher computation complexity. In contrast, by using the convex hull property, the \textit{entire} velocity and acceleration profile can be strictly bounded.

We utilize the fact that the derivative of the B-spline of degree $k$ is a B-spline of degree $k-1$, which also enjoys the convex hull property. Therefore, if the supporting control points are bounded inside the convex hull expanded by the allowed maximum derivative, the derivative spline is subsequently bounded, as elaborated in Prop.~\ref{prop:linear_derivative_bound}. Note that Prop.~\ref{prop:linear_derivative_bound} is a sufficient but not necessary condition. A toy example illustrating the relation between the convex hull property and dynamical feasibility is shown in Fig.~\ref{fig:bspline_chull}.
\begin{prop}\label{prop:linear_derivative_bound}
	Given a uniform B-spline of degree $k$ and knot separation $\Delta_t$, there exists a constant linear combination $\mathbf{S}_l = \mathbf{M}_k^{-1} \mathbf{C}_l \mathbf{M}_k /{(\Delta_t)}^{l} \in \mathbb{R}^{(k+1) \times (k+1)}$ such that  $ u_{l,D}^{\text{min}}\bm{1}_{(k+1)\times 1} \leq \mathbf{S}_l\mathbf{P}^{D} \leq u_{l,D}^{\text{max}}\bm{1}_{(k+1)\times 1} $ is a sufficient condition for the derivative along coordinate $D$ to be thoroughly bounded; i.e., $u_{l,D}^{\text{min}} \leq \frac{d \mathbf{s}^D(u)}{d^l u} \leq u_{l,D}^{\text{max}}$, $\forall u\in [0,1]$, where $\mathbf{P}^{D} \in \mathbb{R}^{k+1}$ is coordinate $D \in \left\lbrace x, y, z\right\rbrace $ of the control point span $\mathbf{P}$, and $\mathbf{C}_l \in \mathbb{R}^{ (k+1)\times (k+1)}$ is a constant mapping matrix of the $l$-th derivative satisfying $\frac{d\mathbf{b}}{d^l u} = \mathbf{C}_l \mathbf{b}$.
	\footnote{ $\mathbf{S}_l$ and $\mathbf{C}_l$ are fixed given the degree $k$ and the derivative order $l$.}
	%\footnote{ $|\cdot|$ means the element-wise absolute value of a vector.}
\end{prop}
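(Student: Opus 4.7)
The plan is to combine the closed-form matrix representation of B-spline derivatives from Eq.~\ref{eq:bspline_derivative} with the convex hull property of the uniform B-spline. First, I would substitute the relation $\frac{d\mathbf{b}}{d^l u} = \mathbf{C}_l \mathbf{b}$ into Eq.~\ref{eq:bspline_derivative} to express the $l$-th derivative along coordinate $D$ on a single knot span as
\begin{equation*}
	\frac{d \mathbf{s}_j^D(u)}{d^l u} = \frac{1}{(\Delta_t)^l}\, \mathbf{b}^{\intercal} \mathbf{C}_l^{\intercal} \mathbf{M}_k \mathbf{P}_j^D.
\end{equation*}
Then, by inserting the identity $\mathbf{M}_k \mathbf{M}_k^{-1}$ and regrouping, I can rewrite this as $\mathbf{b}^{\intercal} \mathbf{M}_k \bigl( \mathbf{S}_l \mathbf{P}_j^D \bigr)$, thereby identifying $\mathbf{S}_l$ exactly as in the statement (up to the transpose convention used for $\mathbf{C}_l$). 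This exhibits the $l$-th derivative, restricted to any single knot span, as an ordinary uniform B-spline evaluation whose control coefficient vector is $\mathbf{S}_l \mathbf{P}_j^D$.

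Next, I would invoke the two standard properties of the uniform B-spline blending functions on the normalized span $[0,1]$: non-negativity, $N_{i,k}(u) \geq 0$, and partition of unity, $\sum_{i=0}^{k} N_{i,k}(u) = 1$. Since the row vector $\mathbf{b}^{\intercal} \mathbf{M}_k$ is precisely the stacked vector of blending function values $[N_{0,k}(u), \ldots, N_{k,k}(u)]$ (this is exactly the evaluation used implicitly in Eqs.~\ref{eq:bspline_derivative}--\ref{eq:control_cost}), the scalar $\mathbf{b}^{\intercal} \mathbf{M}_k (\mathbf{S}_l \mathbf{P}_j^D)$ is a convex combination of the $k+1$ entries of $\mathbf{S}_l \mathbf{P}_j^D$. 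Consequently, if every entry of $\mathbf{S}_l \mathbf{P}^D$ lies in $[u_{l,D}^{\mathrm{min}}, u_{l,D}^{\mathrm{max}}]$, which is precisely the componentwise vector inequality stated in the proposition, then the derivative value must also lie in this interval for every $u \in [0,1]$, delivering the claimed pointwise bound.

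The main obstacle in this proof is the algebraic bookkeeping: correctly tracking transposes between $\mathbf{C}_l$ and $\mathbf{C}_l^{\intercal}$ when regrouping, and verifying that $\mathbf{b}^{\intercal} \mathbf{M}_k$ genuinely reproduces the values of the $k+1$ blending functions that are non-zero on the active knot span. Once these identifications are in place, the result reduces to the generic principle that a polynomial expressed in a non-negative partition-of-unity basis lies within the convex hull of its coefficients. This same observation also clarifies why the condition is sufficient but not necessary: the convex hull generally over-approximates the reachable range of the derivative on the span, so tighter control point configurations may still satisfy the true derivative bound while violating the componentwise inequality.
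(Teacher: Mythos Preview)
Your proposal is correct and follows essentially the same route as the paper's own proof: rewrite the $l$-th derivative via $\frac{d\mathbf{b}}{d^l u}=\mathbf{C}_l\mathbf{b}$, insert $\mathbf{M}_k\mathbf{M}_k^{-1}$ to exhibit $\mathbf{S}_l\mathbf{P}^D$ as the control coefficients of a B-spline evaluation $\mathbf{b}^{\intercal}\mathbf{M}_k(\cdot)$, and then invoke the convex hull property. Your version is in fact slightly more explicit than the paper's, since you spell out the non-negativity and partition-of-unity argument and you correctly flag the $\mathbf{C}_l$ versus $\mathbf{C}_l^{\intercal}$ bookkeeping that the paper glosses over.
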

\begin{proof}
	Please refer to Appendix~\ref{sec:appendix_prop_bound} for the detailed proof.
\end{proof}

\begin{figure}[t]
	\centering
	\subfigure{\includegraphics[trim={0cm 0cm 0cm 0cm},clip,width=0.42\textwidth]{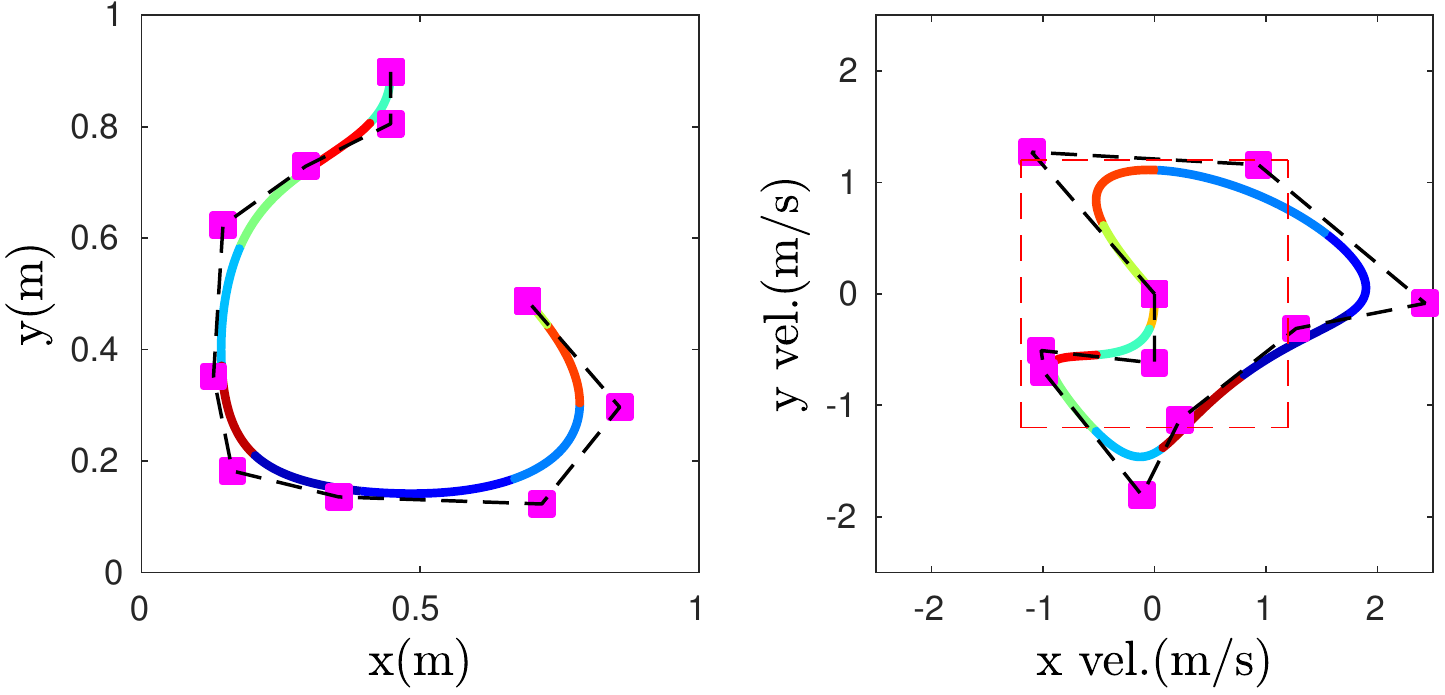}}
	\subfigure{\includegraphics[trim={0cm 0cm 0cm 0cm},clip,width=0.42\textwidth]{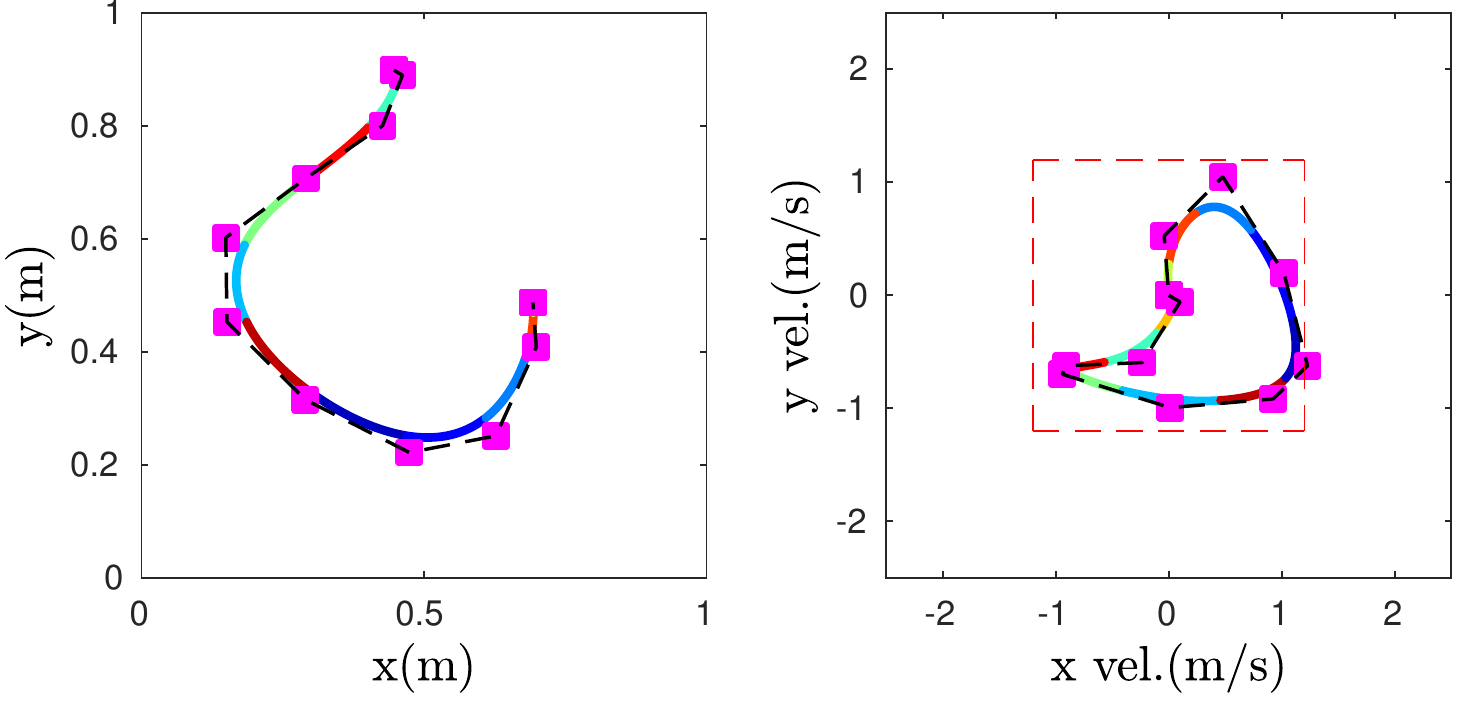}}
	\caption{Illustration of the convex hull property. The dashed $red$ box shows the feasible velocity hull (1.2 $m/s$ for each axis). Applying Prop.~\ref{prop:linear_derivative_bound} under the objective of minimum change to control point positions (bottom left figure), the resulting velocity profile is shown at the bottom right, where the velocity profile is strictly bounded.\label{fig:bspline_chull}}
	\vspace{-0.4cm}
\end{figure}

\section{B-spline-Based Kinodynamic Search}\label{sec:kinodynamic_search}
\subsection{Motivating Example}\label{sec:motivaing_example}
As shown in the motivating example in Fig.~\ref{fig:cover_example}, hierarchical motion planners may produce sub-optimal or even dynamically infeasible trajectories given a non-static initial state of the quadrotor. The reason is that the geometric planner has no knowledge about the vehicle dynamics and restricts the solution space of path parameterization to a \textit{homotopy class} of the geometric shortest path. The inadequacy motivates us to propose an efficient kinodynamic planning algorithm which can work in real-time. However, kinodynamic planning is typically time consuming~\cite{allen2016real, liu2017smp}. The major computation of the traditional kinodynamic planning lies in three tasks, namely, covering the large state space, solving the BVP and collision checking.

Given the advantageous properties of the B-spline introduced in Sect.~\ref{sec:Bspline_property}, we propose using uniform B-spline parameterization in kinodynamic planning, which facilitates reducing the computation time for the above three tasks. Specifically, we propose a spatial-grid-based deterministic graph search to place B-spline control points. The proposed search algorithm has three major features as follows:
\begin{itemize}
	\item Controllable discretization of the state space: Due to the locality of the B-spline, it is possible to incrementally sample the B-spline control points during the search. A vertex tuple structure is proposed to recover the Markovian assumption and make the problem analyzable. A novel graph aggregation technique is proposed to control the discretization of the state space, which achieves a speed-quality tradeoff.
	\item Closed-form evaluations of control cost and dynamical feasibility: The control cost and feasibility of the B-spline can be evaluated in closed forms efficiently.
	\item Offline-computable inflation to avoid collision checking: The maximum deviation of uniform B-spline from the free-cells can be characterized offline and compensated for by workspace inflation.
\end{itemize}
The kinodynamic search accounts for the total control efforts and dynamical limits, which is a systematic way to deal with the non-static initial states in replanning.

\subsection{Problem Formulation}\label{sec:problem_formulation}
In this section, we formally present the problem of the B-spline-based kinodynamic search on a spatial grid. The problem is formalized as a deterministic graph search where the action is the placement of the control points. Suppose the topological graph associated with the grid map is denoted as $\mathcal{G} \coloneqq (V,E)$, where $V$ is the set of vertices denoting the collection of free cells and $E$ denotes the set of edges $(i,j)\subset V\times V$ between all adjacent vertices $i$ and $j$. The adjacency of the vertices depends on the grid connectivity adopted. In this paper, every cell in the 3-D grid has 26 neighbors which are cells connected to this cell at a Chebyshev distance of 1.\footnote{Note that the connectivity can also be defined based on a Chebyshev distance larger than one, which will result in a non-uniform control point placement and a higher computational complexity.}

Given uniform B-spline parameterization of degree $k$ and knot separation $\Delta_t$, the proposed search method finds a finite sequence $\pi = (v_0, v_1,\ldots, v_T)$ of vertices representing an \textit{admissible} control point placement which satisfies $v_i \in V, (v_{i-1}, v_i)\in E$ for each $i=1,\ldots,T$ and connects the given initial state $\pi_s = (v_s^0, v_s^1, \ldots, v_s^k)$ and goal state $\pi_g=(v_g^0, v_g^1, \ldots, v_g^k)$, i.e., $(v_s^k, v_0) \in E$ and $(v_T, v_g^0)\in E$. The sequence $\pi$ possibly contains repetition since we allow placing the control points at the same cell.
$\pi_s$ and $\pi_g$ are two tuples, both containing $k+1$ vertices which form the control point span according to the definition of the B-spline in Sect.~\ref{sec:Bspline_property}. Therefore, $\pi_s$ and $\pi_g$ actually represent two short trajectories, different from the initial and goal positions used in geometric planners and the position-velocity-acceleration state vector used in these kinodynamic planners~\cite{webb2013kinodynamic, allen2016real, liu2017smp}.

Since the B-spline is evaluated in terms of the control point span, we re-organize $\pi$ by combining neighboring $k+1$ vertices as one vertex tuple. Combining the sequence $\pi$ with $\pi_s$ and $\pi_g$, the overall sequence can be formalized as $\tilde{\pi}=(v_s^0,\ldots,v_s^k, v_0, \ldots, v_T, v_g^0, \ldots, v_g^k)$. We define the \textit{ordered} sub-sequence containing consecutive $k+1$ vertices of $\tilde{\pi}$ as a $k$-\textit{degree vertex tuple}, which is denoted by ${[\tilde{\pi}]}^k$.
We provide a toy example in Fig.~\ref{fig:vertex_tuple} showing how $3$-degree vertex tuples can be constructed.

We denote by ${[\tilde{\pi}]}^k_j$ the $j$-th $k$-degree tuple in $\tilde{\pi}$, and two neighboring tuples, ${[\tilde{\pi}]}^k_{j-1}$ and ${[\tilde{\pi}]}^k_{j}$, overlap for $k$ vertices. According to this convention, ${[\tilde{\pi}]}^k_0 = \pi_s$ and ${[\tilde{\pi}]}^k_{J} = \pi_g$, where $J+1=k+T+3$. Each vertex tuple represents a short trajectory, and a sequence of neighboring vertex tuples represents a continuous trajectory. We associate with each $k$-degree tuple $[\tilde{\pi}]$ a \textit{strictly positive} cost function $f_{k,\Delta_t}:{[\tilde{\pi}]}^k \to \mathbb{R}_+$. Note that the cost function has to be strictly positive to cope with the possible repetition in $\pi$. We state the problem as follows:
\begin{problem}\label{prob:optimal_search}
	\normalfont Given a uniform B-spline of degree $k$ and knot separation $\Delta_t$, initial state $\pi_s$ and goal state $\pi_g$, find admissible control point placement $\pi=(v_0, v_1,\ldots, v_T)$ on $\mathcal{G}$ such that the following cost function is minimized:
	\vspace{-0.1cm}
	\begin{equation*}
	\mathcal{J}_{k,\Delta_t}(\tilde{\pi}) = \sum_{j=0}^{J}f_{k,\Delta_t}({[\tilde{\pi}]}^k_j),
	\end{equation*}
	\vspace{-0.1cm}
	where ${[\tilde{\pi}]}^k_j$ is a $k$-degree vertex tuple, whose corresponding trajectory should satisfy collision-free and dynamical feasibility requirements.
\end{problem}

We adopt a cost function following the idea of the linear quadratic minimum time control problem~\cite{liu2017smp, verriest1991linear}, where the control cost is represented by Eq.~\ref{eq:control_cost} with a tradeoff penalty on the execution time $\Delta_t$. Mathematically, the cost function $f_{k,\Delta_t}$ is represented as follows:
\vspace{-0.1cm}
\begin{equation}\label{eq:node_cost}
	f_{k,\Delta_t}({[\tilde{\pi}]}^k_j)=\lambda \Delta_t + \int_{0}^{1} {\left( \frac{d \mathbf{s}_{{[\tilde{\pi}]}^k_j}(u)}{d^{l}u} \right)}^2 du,
\end{equation}
\vspace{-0.1cm}
where ${[\tilde{\pi}]}^k_j$ can be rewritten into matrix form, as in Sect.~\ref{sec:Bspline_property}; the integral can be evaluated according to Eq.~\ref{eq:control_cost}; and $\lambda$ is the weight for the trajectory execution time. The criterion to check the collision-free and dynamical feasibility of the vertex tuple is introduced in Sect.~\ref{sec:feasibility_condition}.

\subsection{Optimal B-spline-Based Kinodynamic Search}\label{sec:search_algorithm}
The difficulty of solving Prob.~\ref{prob:optimal_search} is that the placement of any control point depends on the placed $k$ control points (not only the predecessor) within the vertex tuple. Regarding the placement of a single control point as an action, the Markovian assumption does not hold, which makes traditional graph search algorithms inapplicable. However, we find that Problem~\ref{prob:optimal_search} can be transformed into an equivalent standard shortest path problem on a higher dimensional directed graph $\mathcal{G}_H=(V_H,E_H)$ induced by $\mathcal{G}$ by regarding the whole vertex tuple as a ``vertex''. Since the vertex tuple is the basic evaluation unit of the B-spline, the placement of the next vertex tuple will only depend on its predecessor, which follows the Markovian assumption. The transformation enables the usage of well-characterized shortest path search algorithms, such as Dijkstra's~\cite{dijkstra1959note} and A*~\cite{hart1968astar}. In the following, we elaborate the transformation.
\begin{figure}[t]
	\begin{center}
		%trim left-down-right-up
		\includegraphics[trim={0cm 0cm 0cm 0cm},clip,width=0.40\textwidth]{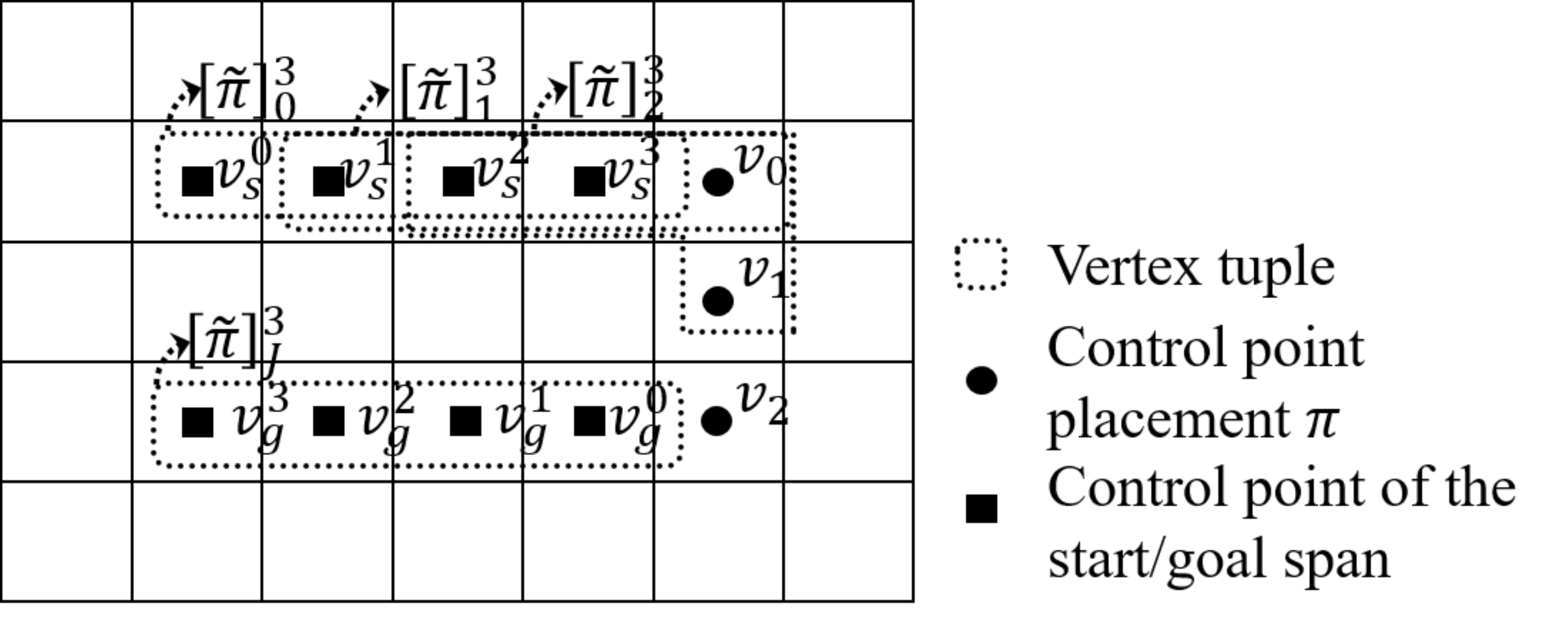}
	\end{center}
	\caption{Illustration of the construction process of 3-degree vertex tuples from an admissible path. Each vertex tuple is formed by combining four consecutive control points. ${[\tilde{\pi}]}_0^{3}$ and ${[\tilde{\pi}]}_1^{3}$ are two neighboring vertex tuples since they overlap for three vertices.\label{fig:vertex_tuple}}
	\vspace{-0.5cm}
\end{figure}

Similar to the construction process of the vertex tuple in Sect.~\ref{sec:problem_formulation}, we construct $\mathcal{G}_H$ as follows: 1) each vertex tuple $\hat{v}_i=(v_i, v_{i+1}, \ldots, v_{i+k}) \in V_H$ is created by combining a sequence of adjacent vertices of $\mathcal{G}$ satisfying $(v_{j-1},v_{j})\in E$ for $j=i+1,\ldots, i+k$; and 2) two vertices $\hat{v}_i$ and $\hat{v}_j$ on $V_H$ are adjacent if and only if the last $k$ vertices of $\hat{v}_i$ overlap with the first $k$ vertices of $\hat{v}_j$. The construction of $\mathcal{G}_H$ groups the associated control point coordinates into a high-dimensional state. Note that each dimension of the combined state has the same physical meaning, i.e., the spatial coordinates of the control point. This observation motivates us to come up with a low-dispersion search algorithm, as presented in Sect.~\ref{sec:modification_efficiency}.

In Fig.~\ref{fig:graph_construction}, we show an example of how the trace of control points on $\mathcal{G}$ can be mapped to the path on $\mathcal{G}_H$. The initial vertex tuple $\hat{v}_s$ is shown by squares. Starting from $\hat{v}_s$, we consider two directions of the next-step placement, which form $\hat{v}_0$ and $\hat{v}_1$, respectively. In a similar way, starting from $\hat{v}_0$, two expansion directions are considered, forming $\hat{v}_3$ and $\hat{v}_4$, while for $\hat{v}_1$, one direction ($\hat{v}_5$) is considered. Note that although the last vertex of $\hat{v}_4$ and $\hat{v}_5$ are in the same spatial cell, in the induced high-dimensional graph $\mathcal{G}_H$, they are two distinct vertex tuples. Following the expansion of control points, we form a tree of vertex tuples on $\mathcal{G}_H$. From the expansion process, we observe that, given the initial and goal vertex tuple, the problem of finding the optimal control point placement is equivalent to finding the shortest path on the graph $\mathcal{G}_H$. We refer interested readers to Appendix~\ref{sec:appendix_graph_relation} for further details.

\begin{algorithm}[t]
	\begin{algorithmic}[1]
	\Function{Initialize}{$\pi_s, \pi_g, k, \Delta_t$}
	\State $\mathcal{O} \leftarrow \emptyset$; $\mathcal{L} \leftarrow \emptyset$
	\State $n \leftarrow \Call{Index}{\pi_s}$;
	\State $h(\pi_s) \leftarrow \Call{Heuristic}{\pi_s, \pi_g}$; $g(\pi_s) \leftarrow f_{k,\Delta_t}(\pi_s)$;
	\State $\mathcal{O} \leftarrow \Call{Insert}{\mathcal{O}, g(\pi_s)+h(\pi_s), \pi_s} $;
	\State $\mathcal{L} \leftarrow \Call{Insert}{\mathcal{L}, n, \pi_s}$
	\EndFunction
	\Function{Main}{${\pi_s}, {\pi_g}, {\mathcal{G},k, \Delta_t}$}
	\State$(\mathcal{O}, \mathcal{L}) \leftarrow \Call{Initialize}{\pi_s, \pi_g, k, \Delta_t} $;
	\While{$\mathcal{O}\not= \emptyset$}
		\State ${(m, \hat{v}_i)} \leftarrow \Call{Pop}{\mathcal{O}} $;
		\If{$m = \Call{Index}{\pi_g}$}
			\State \textbf{return} \textbf{success}
		\EndIf
		\For{$\hat{v}_j \in  \Call{\underline{FeasibleSuccs}}{\hat{v}_i, \mathcal{G}, k, \Delta_t} $}
			\State $n \leftarrow \Call{\underline{Index}} {\hat{v}_j} $;
			\If{\textbf{not} $\Call{Visited}{n, \mathcal{L}} $ }
				\State $g(\hat{v}_j) \leftarrow \infty$
			\EndIf
			\If{$g(\hat{v}_j) > g(\hat{v}_i) + \underline{f_{k,\Delta_t}(\hat{v}_j)} $ }
				\State $g(\hat{v}_j) \leftarrow g(\hat{v}_i) + f_{k,\Delta_t}(\hat{v}_j) $
				\State $h(\hat{v}_j) \leftarrow \Call{Heuristic}{\hat{v}_j, \pi_g}$
				\State $\mathcal{O} \leftarrow \Call{Insert}{\mathcal{O}, g(\hat{v}_j)+h(\hat{v}_j), \hat{v}_j}$
			\EndIf
		\EndFor
	\EndWhile
	\EndFunction
	\end{algorithmic}
\caption{\label{algo:optimal_search} Optimal B-spline-Based Kinodynamic Search}
\end{algorithm}

Recall that in Sect.~\ref{sec:problem_formulation}, we allow the repetition of vertices since some necessary vertex tuples rely on repetition; for instance, the same vertex being repeated $k+1$ times actually represents a static state (for $\Delta_t$). According to the definition of $f_{k,\Delta_t}({[\tilde{\pi}]}^k)$, the cost of $\mathcal{G}_H$ is defined on vertices $V_H$ instead of the edges. Although repetition is allowed, each vertex $\hat{v} \in V_H$ of $\mathcal{G}_H$ is associated with a \textit{strictly positive} cost so that repetition is properly penalized.

Note that any path on $\mathcal{G}_H$ is a time-parameterized B-spline trajectory instead of a geometric path. The dynamical feasibility and control cost are taken into account by evaluating the corresponding short trajectories of the nodes of $\mathcal{G}_H$.
Problem~\ref{prob:optimal_search} can be solved optimally using traditional label-correcting algorithms such as Dijkstra's~\cite{dijkstra1959note} and A*~\cite{hart1968astar} on the induced graph $\mathcal{G}_H$. The optimal control point placement $\pi^*$ can then be re-constructed. The optimal B-spline-based kinodynamic (OBK) search algorithm is outlined in Algo.~\ref{algo:optimal_search}.

Typically, label-correcting algorithms maintain one or multiple sets of vertices as so-called \textit{fringes}~\cite{russell2016artificial}. For example, A*~\cite{hart1968astar} maintains two fringes (known as the \textit{OPEN} set and the \textit{CLOSED} set) to reduce the expansion of nodes and save computation. Similarly, Algo.~\ref{algo:optimal_search} maintains two fringes, namely, the \textit{OPEN} set (as denoted by $\mathcal{O}$) and the \textit{VISITED} set (as denoted by $\mathcal{L}$). The visited set $\mathcal{L}$ provides query and retrieving functions for vertices. We use \textproc{Index}$(\hat{v})$ (Algo.~\ref{algo:index_optimal}) to assign a unique integer index to each distinct vertex tuple. Specifically, Algo.~\ref{algo:index_optimal} collects the extracted coordinates $\mathcal{I}(\hat{v})$ (using the \textproc{Coord}$(v)$ function) for each vertex $v$ in the tuple $\hat{v}$, and uses the \textproc{UniqueEncode}$(\cdot)$ function to generate a unique hash encoding for a series of integer coordinates $\mathcal{I}(\hat{v})$.

We construct the nodes of $\mathcal{G}_H$ only on demand during the search process as the graph size may be prohibitively large. At the beginning of the search, the whole $\mathcal{G}_H$ is not explicitly constructed, and the visited set $\mathcal{L}$ and the open set $\mathcal{O}$ are both initialized to empty. After the insertion of the initial state $\pi_s$, a tree of nodes is gradually expanded based on the $\Call{FeasibleSuccs}{\cdot}$ function and the priority queue structure maintained by $\mathcal{O}$. Every time a new node is found (whether or not a successor node is a new node is identified by $\mathcal{L}$, which is implemented using a hash map structure), the node is added to $\mathcal{L}$ to trace its open/closed status. In practice, only a small proportion of the nodes of $\mathcal{G}_H$ are constructed before the search succeeds. A toy example\footnote{The setup of this example is the same as the qualitative experiment in Fig.~\ref{fig:simple3d_tuple}, where the grid size is $51\times51\times5$ and the B-spline degree is five.} which compares the number of actual expanded nodes with the estimated graph size is shown in Tab.~\ref{tab:ondemand_graph_construct}. Note that the EBK method used in Tab.~\ref{tab:ondemand_graph_construct} is an efficient version of Algo.~\ref{algo:optimal_search}, which is discussed in detail in Sect.~\ref{sec:modification_efficiency}. The estimated graph size is computed based on the graph aggregation technique introduced in Sect.~\ref{sec:bk_performance_analysis}.

\begin{table}[b]
	\centering
	\caption{Illustration of the on-demand graph construction.\label{tab:ondemand_graph_construct}}
	\begin{tabular}{@{}lcccc@{}}
	\toprule
	\textbf{\scriptsize{Method}}
	&\textbf{\makecell{\scriptsize{Run}\\ \scriptsize{Time} (s)}}
	&\textbf{\makecell{\scriptsize{Actual $\#$ of}\\ \scriptsize{Exp. Nodes}}}
	&\textbf{\makecell{\scriptsize{Total $\#$ of}\\ \scriptsize{Nodes (Esti.)}}}
	&\textbf{\makecell{\scriptsize{Percentage}\\($\%$)}} \\
	\midrule
	\scriptsize{EBK-D1}  & \scriptsize{0.034}   & \scriptsize{$2,334$}     & \scriptsize{$13,005$}       & \scriptsize{17.9} \\
	\scriptsize{EBK-D2}  & \scriptsize{0.383}   & \scriptsize{$36,818$}    & \scriptsize{$351,135$}      & \scriptsize{10.5} \\
	\scriptsize{EBK-D3}  & \scriptsize{7.422}   & \scriptsize{$460,469$}   & \scriptsize{$9,480,645$}    & \scriptsize{4.9} \\
	\scriptsize{EBK-D4}  & \scriptsize{173.710} & \scriptsize{$9,096,338$} & \scriptsize{$255,977,415$}  & \scriptsize{3.6} \\
	\bottomrule
	\end{tabular}
\end{table}

Note that there are three functions making Algo.~\ref{algo:optimal_search} different from the traditional geometric path search: 1) the cost function $f_{k,\Delta_t}(\cdot)$ evaluates the control effort of the k-degree vertex tuple, instead of a simple path length measure; 2) the function \textproc{Index}$(\cdot)$ regards the k-degree vertex tuple as the ``state'', which expands the high-dimensional state space supported by the control point coordinates, while the geometric path search typically regards positions as states; and 3) the function \textproc{FeasibleSuccs}$(\cdot)$ will expand to the neighboring k-degree tuples with dynamical feasibility checking, while the geometric path search cannot check feasibility without parameterization. As for the heuristic function $\Call{Heuristic}{\cdot}$, we adopt the admissible minimum time heuristic  in~\cite{liu2017smp}.

\begin{algorithm}[t]
	\begin{algorithmic}[1]
	\Function{Index}{$\hat{v}$}
	\State $\mathcal{I}(\hat{v})=\emptyset$;
	\For{\textbf{all} $ v \in \hat{v}$}
		\State $\mathcal{I}(\hat{v}) \leftarrow \mathcal{I}(\hat{v}) \cup \Call{Coord}{v}$
	\EndFor
	\State \textbf{return} $\Call{UniqueEncode}{\mathcal{I}(\hat{v})}$
	\EndFunction
	\end{algorithmic}
\caption{\label{algo:index_optimal} Given a k-degree vertex tuple $\hat{v}\in V_H$, assign a unique index to each distinct tuple.}
\end{algorithm}

It is worth noting that the design of Algo.~\ref{algo:optimal_search} heavily relies on the properties of the B-spline. Thanks to the local control property, the cost evaluation and feasibility checking can be done locally based on the k-degree vertex tuple. Instead of solving the BVP, the proposed search method expands to new states on the high-dimensional graph $\mathcal{G}_H$ by expanding low-dimensional control point coordinates, which are associated with closed forms for cost evaluation. Moreover, checking for collision is time consuming in traditional kinodynamic planners~\cite{webb2013kinodynamic,liu2017smp, allen2016real}. By using B-spline parameterization, the process can be avoided by characterizing the B-spline deviation, as introduced in Sect.~\ref{sec:feasibility_condition}. The limitation of our method is that the expansion of control points is restricted by the resolution and connection of the grid, which results in limited representations of B-spline trajectories.

\subsection{Feasibility Condition} \label{sec:feasibility_condition}
The dynamical feasibility of the k-degree vertex tuple can be validated via checking the extrema of the derivative of the B-spline. Since the derivative of the B-spline is another B-spline with decreasing degree, the velocity spline is of degree $k-1$ and the acceleration spline is of degree $k-2$. Considering that the convex hull property in Prop.~\ref{prop:linear_derivative_bound} is a sufficient but not necessary condition, directly using Prop.~\ref{prop:linear_derivative_bound} for feasibility checking may be conservative.
Actually, there is a non-conservative approach for feasibility checking by using the closed-form solutions of the extremas of the uniform B-spline. Take the fifth-degree uniform B-spline as an example. The B-spline can be rewritten in a monomial basis according to Eq.~\ref{eq:bspline_derivative}. The velocity profile is a degree-$4$ polynomial whose extremas can be checked by finding the roots of its derivative (degree-$3$) in closed form.

For traditional kinodynamic planners~\cite{liu2017smp,webb2013kinodynamic,allen2016real}, collision checking is an expensive process and may become the computation bottleneck of the algorithm~\cite{kleinbort2016collision}. Position-only shortest path search on the graph of the cell decompositions does not require collision checking since the piecewise linear connection between cell centers is restricted to collision-free cells. For the proposed method, given the B-spline parameterization of degree $k$ and cell size of the decomposed environment, the B-spline may deviate from the piecewise linear connection due to the fact that it does not exactly pass through the control points. However, the maximum distance that the B-spline curve deviates from the piecewise linear collection can be characterized offline, which is compensable by moderate obstacle inflation. The inflation needed is characterized in Appendix~\ref{sec:appendix_prop_collifree}. In practice, since the degree of the B-spline is fixed and the cell size is not tuned frequently, the inflation can be calculated once and then used for many experiments.

\begin{figure}[t]
	\begin{center}
		%trim left-down-right-up
		\subfigure[Mapping from the traces on $\mathcal{G}$ to $\mathcal{G}_H$ \label{fig:graph_construction}]{\includegraphics[trim={0cm 0cm 0cm 0cm},clip,width=0.42\textwidth]{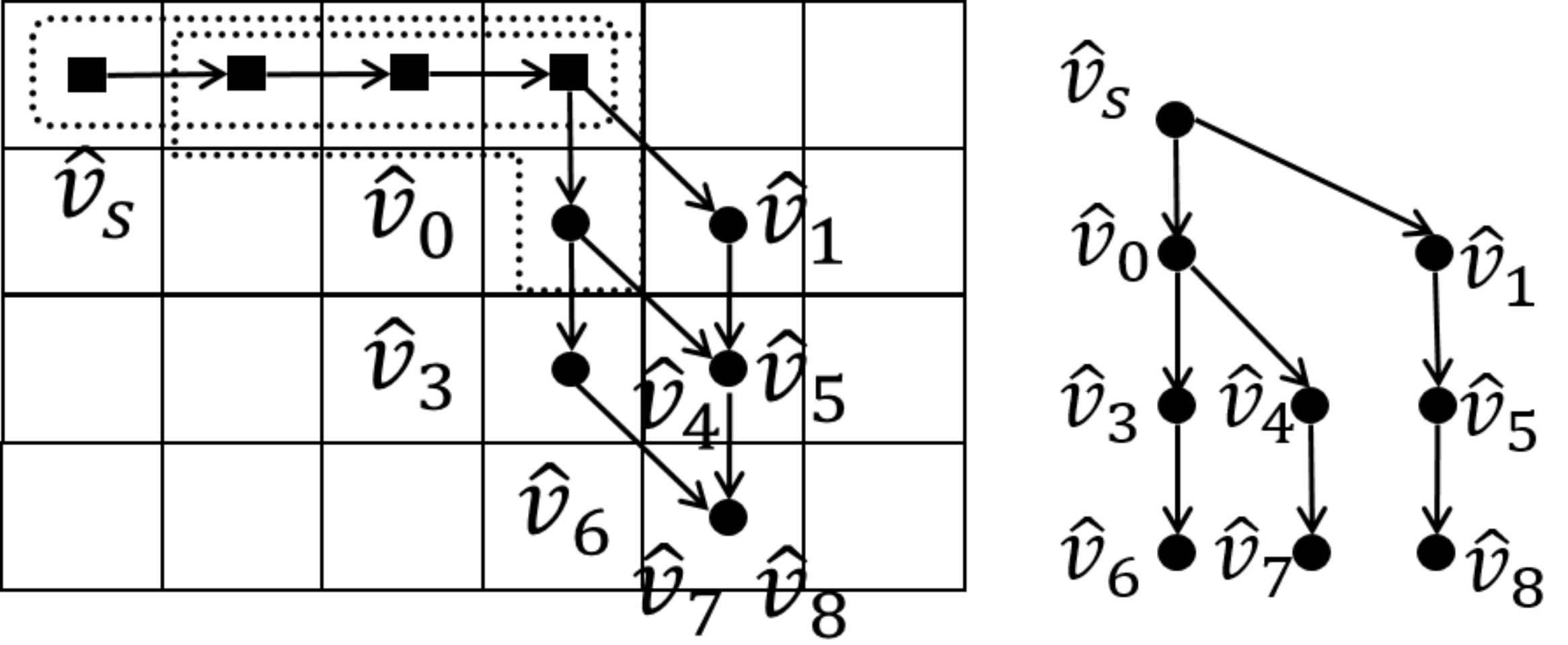}}
		\subfigure[Graph aggregation\label{fig:graph_aggregation}]{\includegraphics[trim={0cm 0cm 0cm 0cm},clip,width=0.42\textwidth]{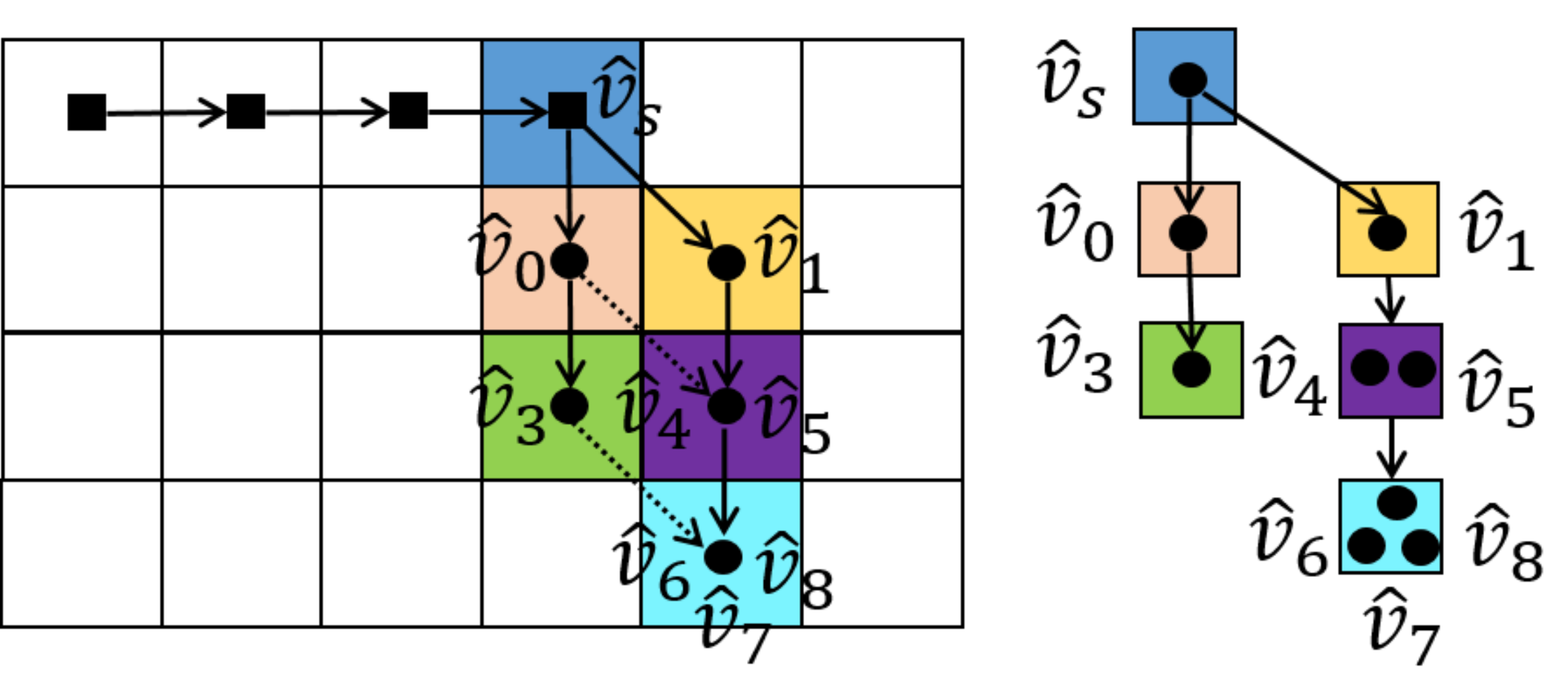}}
	\end{center}
	\caption{Illustration of the mapping process to $\mathcal{G}_H$ and the graph aggregation process used by the EBK search.}
	\vspace{-0.3cm}
\end{figure}

\subsection{Efficient Low Dispersion Search} \label{sec:modification_efficiency}
Before proposing the efficient methods, we present a complexity analysis of the OBK search. According to the definition of the k-degree vertex tuple, the total number of vertices $V_H$ of the graph $\mathcal{G}_H$ grows exponentially w.r.t. the degree $k$ of B-spline parameterization, i.e., $|V_H| = O(|V|^{k+1})$. Note that according to Prop.~\ref{prop:graph_transformation}, Algo.~\ref{algo:optimal_search} shares a similar complexity with the execution of Dijkstra's algorithm on the graph $\mathcal{G}_H$ if the heuristic is set to zero. According to the known result that each vertex is expanded at most once for Dijkstra's algorithm (see~\cite{bertsekas1995dynamic, cormen2009introduction}), the maximum number of iterations of Algo.~\ref{algo:optimal_search} is upper bounded by $|V_H|=O(|V|^{k+1})$, which characterizes the worst-case execution time of the OBK search. Actually, since $|V_H|$ and $|E_H|$ grow exponentially with $k$, the worst-case execution time of \textit{any} algorithm that optimally solves Problem~\ref{prob:optimal_search} scales exponentially with $k$.

Given the observation that the state in the OBK search is homogeneous (i.e., all the dimensions are control point coordinates), we can aggregate the nodes of $\mathcal{G}_H$ based on the proximity of the coordinates to gain efficiency. Specifically, according to Algo.~\ref{algo:index_optimal}, each vertex $\hat{v}\in V_H$ is marked with a unique integer index. In the modified \textproc{Index}$(\cdot)$ function in Algo.~\ref{algo:index_efficient}, we encode the vertex $\hat{v}$ only based on the coordinates for the last $d$ vertices such that the vertices which share the same partial coordinates will be regarded as the same node. By aggregating the nodes of $\mathcal{G}_H$, the dimension of the search space is directly controlled by the user-specified parameter $d$. The modification essentially conducts a low dispersion search on the high-dimensional graph $\mathcal{G}_H$ with a control on the number of expanded nodes.

Different $d$ values will determine how the vertex tuples are aggregated, which in turn affects the solution quality and algorithm efficiency. Note that although the search space is reduced, the continuity and smoothness of the resultant trajectory is maintained since the modification preserves the sharing of the B-spline coordinates. The resultant search method is called EBK search and a formal analysis of the EBK search is provided in Sect.~\ref{sec:bk_performance_analysis}.
\begin{algorithm}[t]
	\begin{algorithmic}[1]
	\Function{Index}{$\hat{v}, d$}
	\State $\mathcal{I}(\hat{v})=\emptyset$;
	\For{\textbf{all} $i \in \{k-d+1, \ldots, k\}$}
		\State $\mathcal{I}(\hat{v}) \leftarrow \mathcal{I}(\hat{v}) \cup \Call{Coord}{ \hat{v}[i]}$
	\EndFor
	\State \textbf{return} $\Call{UniqueEncode}{\mathcal{I}(\hat{v})}$
	\EndFunction
	\end{algorithmic}
\caption{\label{algo:index_efficient} Given a k-degree vertex tuple $\hat{v}\in V_H$, assign an integer index based on the selected coordinates of the tuple.}
\end{algorithm}

\subsection{Analysis of the EBK Search}\label{sec:bk_performance_analysis}
As introduced in Sect.~\ref{sec:modification_efficiency}, the user-specified parameter $d$ determines how vertex tuples in the graph $\mathcal{G}_H$ are aggregated. We provide a toy example of $d=1$ in Fig.~\ref{fig:graph_aggregation} to understand the graph aggregation.
When choosing $d=1$, as in Fig.~\ref{fig:graph_aggregation}, vertex tuples are aggregated based on the last vertex of the tuple. For instance, $\hat{v}_4$ and $\hat{v}_5$ share the same last vertex and are aggregated into the same node, as marked in \textit{purple}. In the same way, the three distinct nodes $\hat{v}_6$, $\hat{v}_7$ and $\hat{v}_8$ are aggregated into the same \textit{cyan} node.

The edges of $\mathcal{G}_H$ are also reduced accordingly. For example, the edges $(\hat{v}_4, \hat{v}_7)$ and $(\hat{v}_5, \hat{v}_8)$ are aggregated since they are connecting the same two aggregated nodes. Note that once a path to the aggregated node is determined, such as the path \textit{blue}-\textit{yellow}-\textit{purple}-\textit{cyan}, the vertex tuple associated with each aggregated node is determined. In this example, the \textit{purple} node is associated with $\hat{v}_5$ and the \textit{cyan} node is associated with $\hat{v}_8$. Therefore, given the initial vertex tuple, any path on the aggregated graph, can be uniquely transformed to a path on the graph $\mathcal{G}_H$. And, apparently, a path on the graph $\mathcal{G}_H$ can be transformed to a path of aggregated nodes. The equivalence states that we are actually conducting a graph search on the aggregated graph with a controllable number of vertices, i.e., a low-dispersion search on the original high dimensional graph $\mathcal{G}_H$. The resultant path on the aggregated graph can be reconstructed as an admissible path on $\mathcal{G}_H$.

For the simple case of $d=1$, as illustrated in Fig.~\ref{fig:graph_aggregation}, the size of the aggregated graph is the same as the original graph $\mathcal{G}$. Therefore, the EBK search can be as efficient as a shortest path search on the spatial grid by choosing a small $d$. And the advantage of the EBK search is that it directly outputs a time-parameterized dynamically feasible trajectory. It turns out that the EBK search is resolution complete with respect to the aggregated graph, and we refer interested readers to a detailed analysis of the EBK search in Appendix~\ref{sec:appendix_resolution_complete}.

By choosing a small $d<k+1$, a large number of vertex tuples are aggregated into one group, and the ``resolution'' of the graph becomes large. Due to the aggregation, the representation of the trajectory is limited. An intuitive example is that, when choosing $d=1$, the search process will never choose to place the same control point in the same grid cell due to the strictly positive cost. As a result, the trajectory obtained may fail to reach the exact end state, such as a static state. However, the issue can be addressed by choosing a larger $d$ and sacrificing efficiency.

Compared to the preliminary version, i.e., the RBK search in~\cite{ding18replanning}, the EBK search is more flexible and allows for control of the algorithm efficiency and solution quality. The connection is that the RBK search is essentially the EBK search using $d=1$.
\section{Elastic Optimization} \label{sec:elastic_optimization}
To compensate for the discretization introduced by the EBK search and further improve the trajectory quality, we present the EO approach, which refines the control point placement to the optimal location w.r.t. the free space. Our approach is motivated by the seminal work~\cite{quinlan1993elastic}, in which a collision-free ``tube'' around the initial path is identified and the path is ``stretched'' within the tube so that the shape is optimized.
Mathematically, the tube is defined as a series of balls, with the ball centers denoted as $\mathcal{P} \coloneqq \left\lbrace \point{p}_0, \point{p}_1, \ldots, \point{p}_T \right\rbrace$ and corresponding radiuses denoted as $\mathcal{R} \coloneqq \left\lbrace r_0, r_1, \ldots, r_T \right\rbrace$, where $\mathbf{p}_i$ denotes the ball center and $r_i$ denotes the radius. The tube is defined to be ``well-connected'' if and only if $\norm{\point{p}_i - \point{p}_{i+i}} \le r_i + r_{i+1}, \, \forall i\in   \{0,\ldots,T-1\} $. Compared to~\cite{quinlan1993elastic} which cannot handle dynamical feasibility constraints for complex kinodynamic systems such as quadrotors, we propose a convex optimization formulation based on B-spline parameterization, which uses the convex hull property to enforce feasibility.

Note that Zhu~\textit{et al.}~\cite{zhu2015ces} also propose a convex elastic smoothing formulation for car-like robots. There are two major differences: 1) The formulation in~\cite{zhu2015ces} is based on the dynamics of car-like robots and cannot be applied to complex dynamic systems, while our formulation uses high-order B-spline parameterization, which can be directly used to control quadrotors. 2) In~\cite{zhu2015ces}, the smoothed trajectory may collide with obstacles due to the geometric incompleteness of the tube constraint (as shown in Fig.~\ref{fig:eo_inflation}) and only a heuristic waypoint insertion/obstacle inflation scheme is provided, while the EO approach has a theoretical safety guarantee, which is achieved by a two-level inflation scheme to ensure the connectivity of the tube and a finite iterative control point insertion process.

\subsection{Elastic Tube Expansion}\label{sec:elastic_expansion}
In~\cite{quinlan1993elastic} and~\cite{zhu2015ces}, the elastic tube is a series of connected balls which are centered at the waypoints of the reference path. Intuitively, the tube generated in this way cannot fully utilize the free space around it, as shown in Fig.~\ref{fig:tube_expansion}. We therefore propose a lightweight tube expansion algorithm so that the tube can roughly represent the locally largest free space. Given the initial control point placement $\pi=(v_0, \ldots, v_T)$ provided by Algo.~\ref{algo:optimal_search}, we first extract the coordinates of $\pi$, and denote the collection of coordinates as $\mathcal{P} \coloneqq \left\lbrace \point{p}_0, \point{p}_1, \ldots, \point{p}_T \right\rbrace$ following the notation used in Sect.~\ref{sec:Bspline_property}.

\begin{algorithm}[t]
	\begin{algorithmic}[1]
	\Function{TubeExpansion}{$\mathcal{P}, \mathcal{C}^{\text{ELAS}}$}
	\State Initializes: $d_\text{infl}^{\min}, d_\text{infl}^{\max}, d_{\text{thres}}.\, \set{R} = \set{R^{\prime}} = \set{Q} = \emptyset$;
	\For{\textbf{all} $ \mathbf{p}_i \in \mathcal{P}$}
		\State $(\point{n}_i,r_i)\leftarrow \Call{NNSearch}{\point{p}_i,\set{C}^{\text{ELAS}}}$;
		\State $\overrightarrow{n} = (\point{p}_i - \point{n}_i)/\Call{norm}{\point{p}_i - \point{n}_i}$;
		\While{$d_\text{infl}^{\max} - d_\text{infl}^{\min} > d_\text{infl}^{\text{tol}}$}
			\State $d \leftarrow \left( d_\text{infl}^{\max} + d_\text{infl}^{\min} \right) /2 $, $\point{p}_{i,\text{infl}}  \leftarrow \point{p}_i + d \cdot \overrightarrow{n} $;
			\State $(\point{n}_i^{\prime},r_i^{\prime})\leftarrow \Call{NNSearch}{\point{p}_{i,\text{infl}},\set{C}^{\text{ELAS}}}$;
			\If{$\Call{Abs}{r_i^{\prime} - d - r_i} > d_{\text{thres}} $}
				\State $d_\text{infl}^{\max}  \leftarrow d $
			\Else
				\State $d_\text{infl}^{\min}  \leftarrow d $
			\EndIf
		\EndWhile
		\State $\mathcal{Q} \leftarrow \mathcal{Q} \cup \point{p}_{i,\text{infl}}$, $\set{R}^{\prime} \leftarrow \set{R}^{\prime} \cup r_i^{\prime}$
	\EndFor
	\State \textbf{return} $\mathcal{Q}, \mathcal{R}^{\prime}$
	\EndFunction
	\end{algorithmic}
\caption{\label{algo:tube_expansion}Expand elastic tube in configuration space $\mathcal{C}^{\text{ELAS}}$.}
\end{algorithm}

The elastic tube expansion algorithm (Algo.~\ref{algo:tube_expansion}) can be divided into two steps: First, we construct the initial tube, by conducting a radius search for the initial placement $\set{P}$, and obtain the nearest obstacle position $\mathbf{n}_i$. Second, we push the center of the bubbles in the direction $\overrightarrow{n}$ (away from the nearest obstacle) while satisfying the criterion that the new bubble contains the original bubble, as required by condition \textproc{Abs}$(r_i^{\prime} - d - r_i )\leq d_{\text{thres}}$, as shown in Fig.~\ref{fig:tube_expansion}. The inflation process is implemented in a binary search manner. Algo.~\ref{algo:tube_expansion} will finally find a series of local maximum volume bubble centers $\set{Q} \coloneqq \left\lbrace \point{q}_0, \point{q}_1,\ldots,\point{q}_T \right\rbrace$ based on the initial tube $\set{P}$. For the parameter settings, $d_\text{infl}^{\max}$ and $ d_\text{infl}^{\min}$ are the maximum and minimum inflation distance, respectively; $d_{\text{thres}}$ is the threshold for checking whether the new bubble contains the original one, and should be set to a small value, e.g., less than the map resolution; and $d_\text{infl}^{\text{tol}}$ is the binary search end condition, which can be set to the resolution of the map. The function \textproc{NNSearch} is the nearest neighborhood search, which can be done efficiently if a KD-tree is maintained. The efficiency of Algo.~\ref{algo:tube_expansion} is verified in Section.~\ref{sec:analysis}.

\begin{figure}[t]
	\begin{center}
		%trim left-down-right-up
		\subfigure[\label{fig:tube_expansion}]{\includegraphics[trim={0cm 0cm 0cm 0cm},clip,width=0.233\textwidth]{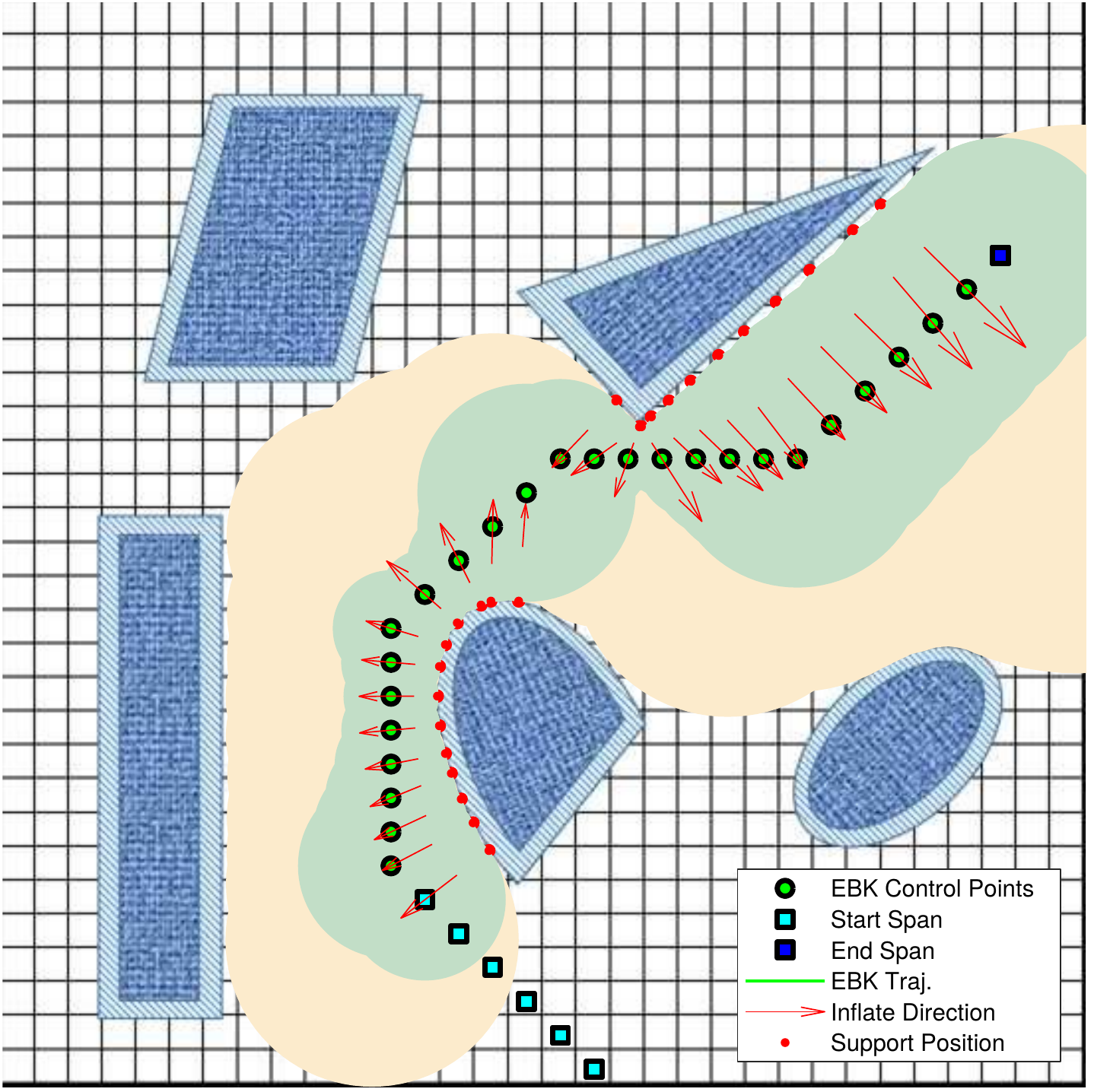}}
		\subfigure[\label{fig:elastic_optimization}]{\includegraphics[trim={0cm 0cm 0cm 0cm},clip,width=0.233\textwidth]{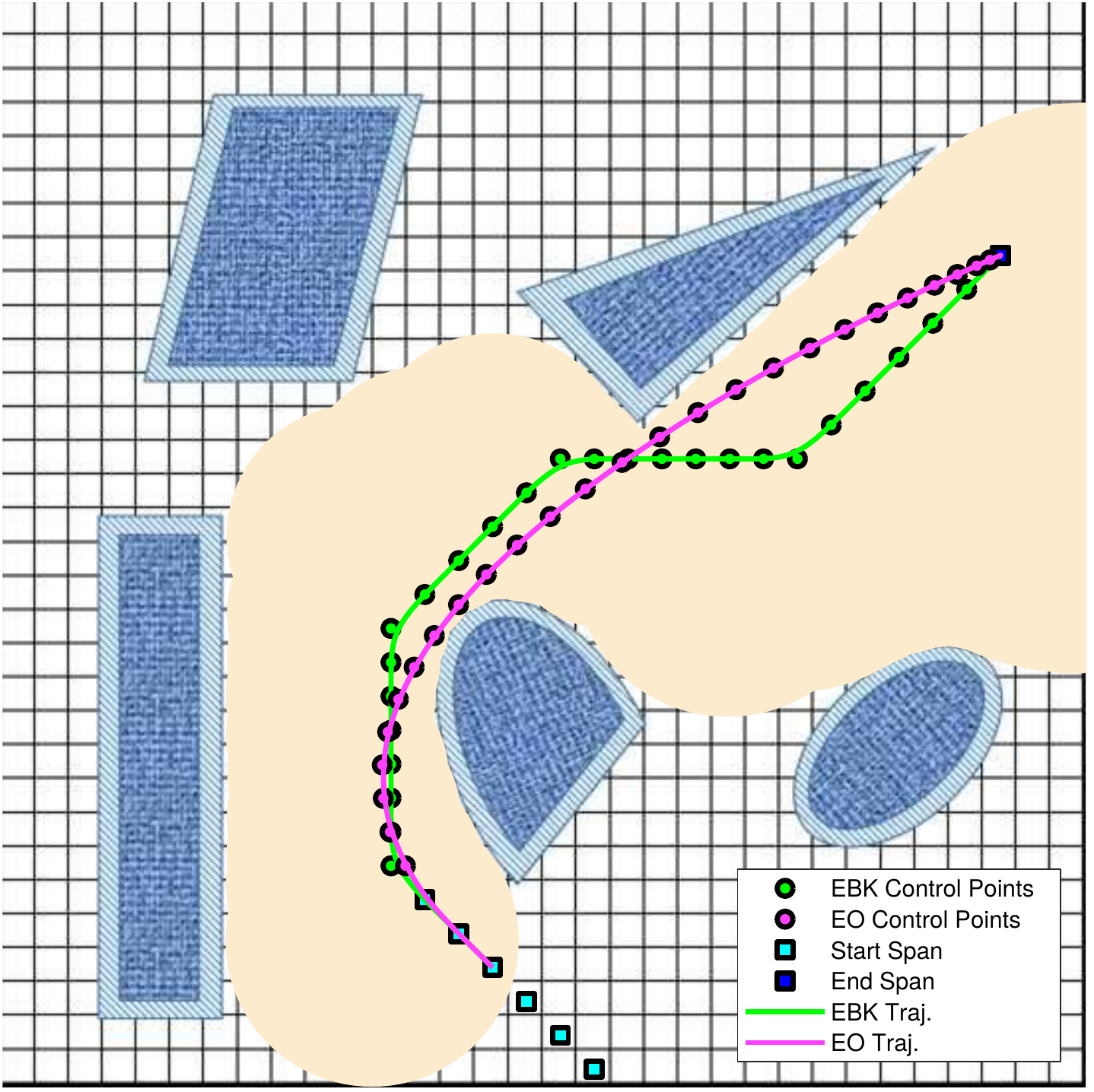}}
	\end{center}
	\caption{Illustration of the EO approach: (a) shows the elastic tube expansion process (Sect.~\ref{sec:elastic_expansion}); (b) shows the optimization process (Sect.~\ref{sec:elastic_opt_formulation}). In (a), the original tube is marked in~\textit{green}, while the inflated tube is marked in~\textit{yellow}. \label{fig:eo_pipeline}}
	\vspace{-0.5cm}
\end{figure}

\subsection{Elastic Optimization Formulation}
\label{sec:elastic_opt_formulation}
Given the inflated ball centers $\mathcal{Q}=\left\lbrace \point{q}_0, \point{q}_1, \ldots, \point{q}_T \right\rbrace$ and corresponding radiuses $\mathcal{R}^{\prime}=\left\lbrace r^{\prime}_0, r^{\prime}_1, \ldots r^{\prime}_T \right\rbrace $, the EO formulation minimizes the total control effort by finding the optimal placement $\mathcal{P}^{\ast}=\left\lbrace \point{p}^{\ast}_{0}, \point{p}^{\ast}_{1}, \ldots, \point{p}^{\ast}_{T} \right\rbrace$ while satisfying the safety and dynamical feasibility constraints. The safety constraints are enforced by constraining the control point position inside the 3-D balls, and in Sect.~\ref{sec:safety_guarantee}, we will discuss how to theoretically guarantee the safety of the resultant trajectory. The dynamical feasibility constraints are enforced using Prop.~\ref{prop:linear_derivative_bound}.

Note that like the EBK search, we consider the initial and goal state of the quadrotor. Denote the coordinates of $\pi_s$ and $\pi_g$ as $\mathcal{P}^{s}= \left\lbrace \point{p}^{0}_s, \point{p}^{1}_s, \ldots, \point{p}^{k}_s \right\rbrace$ and $\mathcal{P}^{g}= \left\lbrace \point{p}^{0}_g, \point{p}^{1}_g, \ldots, \point{p}^{k}_g \right\rbrace$, respectively. These coordinates are fixed during optimization. Similar to the construction of the k-degree vertex tuple in Sect.~\ref{sec:problem_formulation}, we concatenate $\mathcal{P}^s$, $\mathcal{P} $ and $\mathcal{P}^g$ and formalize the concatenation in terms of the control point spans. Following the similar notation in Sect.~\ref{sec:problem_formulation}, we denote by ${[\tilde{\mathcal{P}}]}^k_j$ the $j$-th control point span of the concatenated coordinates $\tilde{\mathcal{P}}$. It follows that ${[\tilde{\mathcal{P}}]}^k_0 = \mathcal{P}^s$ and ${[\tilde{\mathcal{P}}]}^k_J = \mathcal{P}^g$, where $J=k+T+2$. We denote by $\mathbf{P}_j$ the stacked coordinates matrix of ${[\tilde{\mathcal{P}}]}^k_j$, with $\mathbf{P}^D_j$ denoting the $D\in \{x,y,z\}$ axis. The optimization problem can be expressed as follows:
\begin{equation}
	\begin{aligned}
		& \text{min} & & \sum_{j=0}^{J} f_{k,\Delta_t}({[\tilde{\mathcal{P}}]}^k_j)  \\
		& \text{s.t} 						   & & {[\tilde{\mathcal{P}}]}^k_0 = \mathcal{P}^s, {[\tilde{\mathcal{P}}]}^k_J = \mathcal{P}^g\\
		&								       & & \left\Vert \point{p}_i - \point{q}_i \right\Vert_2 \leq r_i^{\prime}, & \forall i \in \left\lbrace 0,\ldots, T\right\rbrace\\
		&									   & &|\mathbf{S}\mathbf{P}_j^{D}| \leq u_{l,D}^{\text{max}}\bm{1}_{(k+1)\times 1}, &\forall j, D\in\{x,y,z\},
	\end{aligned}
\end{equation}
where the first constraint expresses that the initial and goal states need to be fixed. And the second constraint (quadratic) restricts the control points inside the expanded tube. The third constraint (linear) is to ensure the dynamical feasibility using the sufficient condition in Prop.~\ref{prop:linear_derivative_bound}. Since the control points can be adjusted in continuous free space, the potential conservativeness brought by Prop.~\ref{prop:linear_derivative_bound} is minor in practice.
The overall formulation is a QCQP which can be solved efficiently using off-the-shelf convex solvers. The time cost is fixed given the initial placement $\mathcal{P}$, so it is not included in the objective.

\subsection{Enforcing Safety Guarantee}
\label{sec:safety_guarantee}
\begin{figure}[t]
	\begin{center}
		%trim left-down-right-up
		\subfigure[\label{fig:eo_inflation}]{\includegraphics[trim={0cm 7cm 0cm 0cm},clip,width=0.233\textwidth]{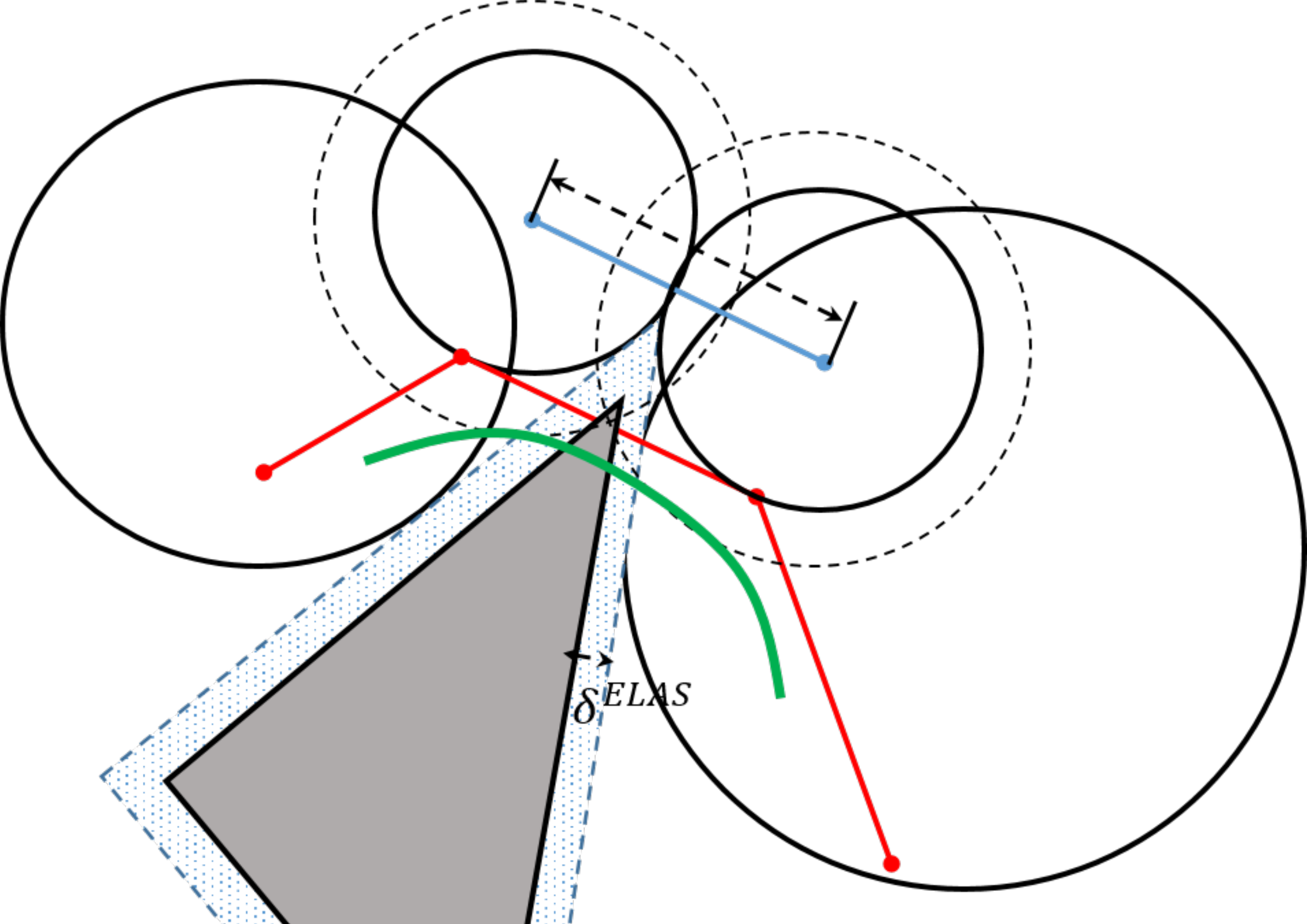}}
		\subfigure[\label{fig:twolevel_inflation}]{\includegraphics[trim={0cm 0cm 0cm 0cm},clip,width=0.233\textwidth]{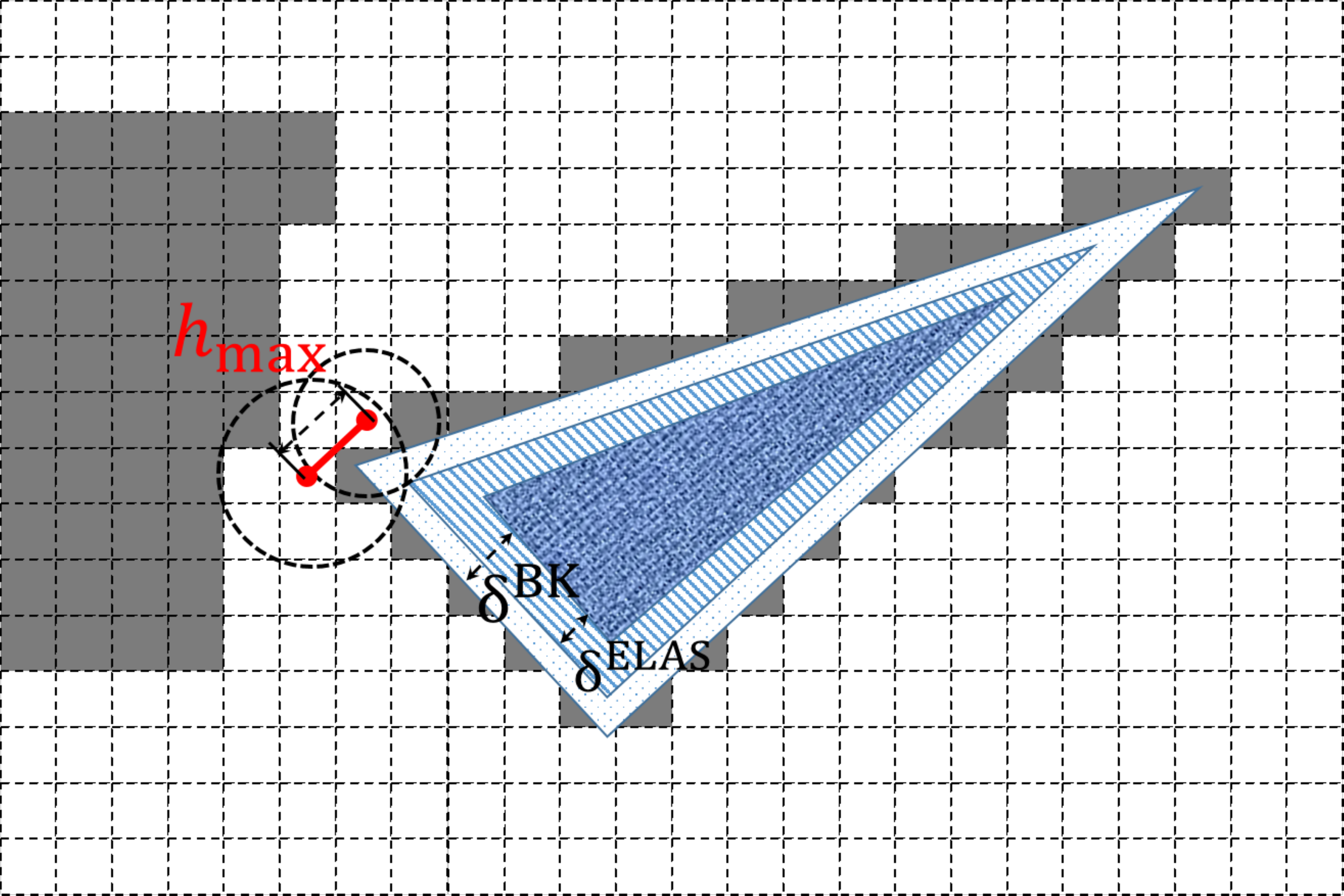}}
		\subfigure[\label{fig:iterative_insertion}]{\includegraphics[trim={0cm 0cm 0cm 0cm},clip,width=0.43\textwidth]{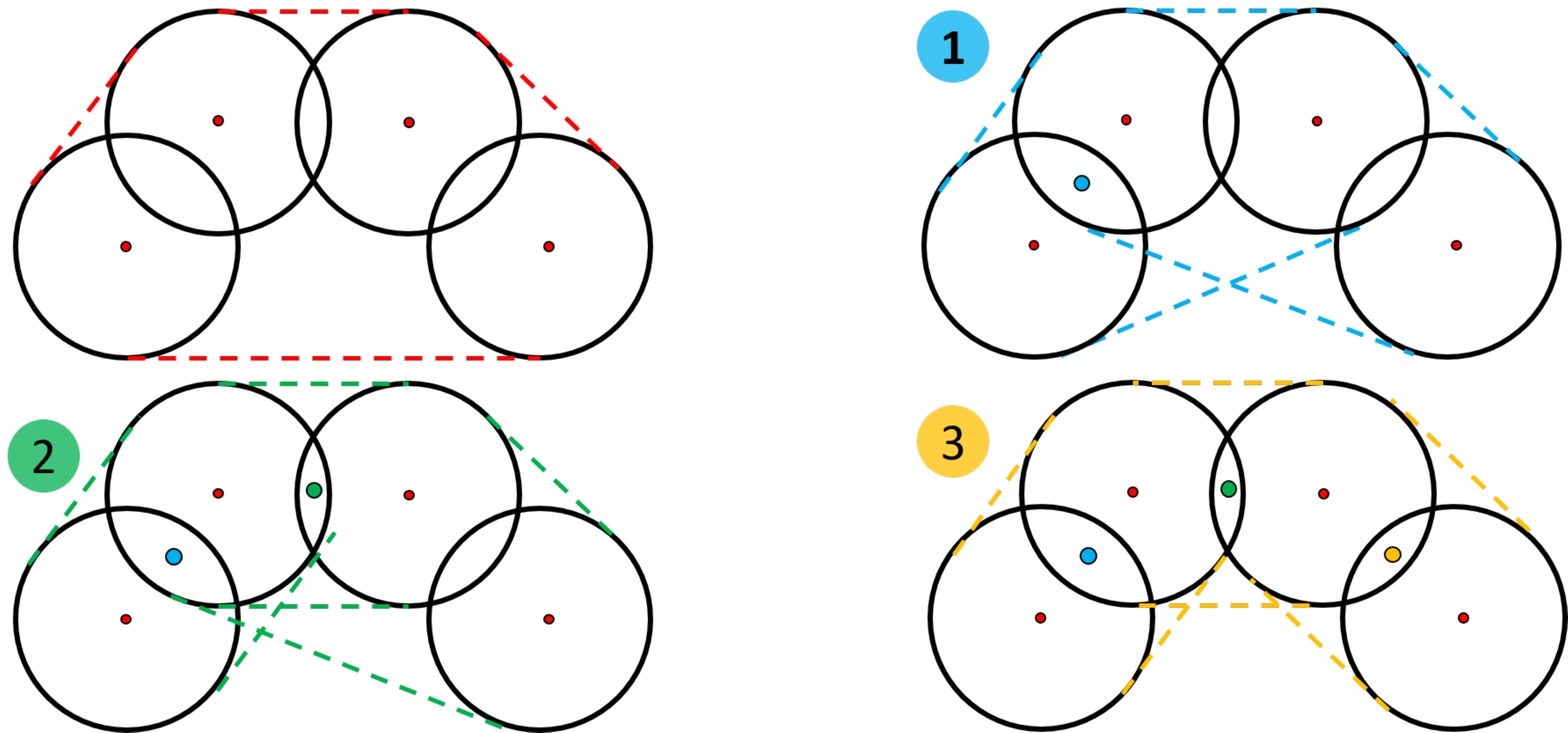}}
	\end{center}
	\caption{Illustration of (a) the geometric incompleteness of the ball constraint, (b) the two-level inflation scheme, and (c) the iterative convex hull shrinking process for enforcing the safety guarantee. It can be observed in (a) that even the straight-line segments between the balls may collide with the obstacle.}
	\vspace{-0.7cm}
\end{figure}
Restricting control points inside the balls is not a sufficient condition for safety, for the following two reasons: 1) the balls are placed with a finite density and constraining the control points in the balls cannot preserve the collision-free characteristic even for straight-line segments between control points (Fig.~\ref{fig:eo_inflation}), and 2) the B-spline does not exactly pass the control points and deviates from the straight-line segments. The first issue is also observed in~\cite{zhu2015ces}, which proposes using waypoint insertion/obstacle inflation to handle the problem. However, there is no quantification of how much inflation or how many insertions are needed and only a heuristic is given. The second issue is inherently similar to one common issue faced by piecewise polynomial parameterization~\cite{richter2016polyunqp, chen2016online, gao2016online, liu2017sfc}: the polynomial may deviate from the collision-free straight-line segments between waypoints or exceed the safe flight corridor. Chen \textit{et al.}~\cite{chen2016online} propose a finite iterative process by adding constraints on polynomial extremas based on a cube corridor (linear constraints), but this is not directly applicable to B-spline parameterization with quadratic constraints.

In our case, the issues can be resolved using a \textit{two-level inflation scheme} and an iterative \textit{convex hull shrinking process}. The two-level inflation scheme is to ensure the connectivity of the elastic tube, and furthermore, the inflation needed is quantified in the scheme. For simplicity, we first consider the original tube before applying the tube expansion algorithm (Algo.~\ref{algo:tube_expansion}). The two-level obstacle inflation scheme is as follows: the configuration space in which we conduct kinodynamic search with larger obstacle inflation $\delta^{\text{BK}}$ is $\mathcal{C}^{\text{BK}}$, while we generate the elastic tube and optimize the control points in the configuration space $\mathcal{C}^{\text{ELAS}}$ with smaller obstacle inflation $\delta^{\text{ELAS}}$, as shown in Fig.~\ref{fig:twolevel_inflation}. The difference adds additional clearance to any point in the configuration space $\mathcal{C}^{\text{BK}}$. Without the difference, the minimum radius of the ball at the grid center is $\min(c_x, c_y, c_z)$, where $c_x$, $c_y$ and $c_z$ are the grid size. Denote the maximum separation distance of two neighboring control points in the 3-D grid as $h_{\max}$. It follows that, if the difference $\delta^{\text{BK}}-\delta^{\text{ELAS}} > h_{\text{max}}/2 -\min(c_x, c_y, c_z)$ holds, the additional clearance will ensure that the two neighboring balls overlap, thus ensuring the connectivity of the elastic tube. Note that Algo.~\ref{algo:tube_expansion} maintains the connectivity of the tube by ensuring that the inflated ball contains the original ball while keeping the same support point on the obstacle. For the low-level inflation, given $h_{\max}$, $\delta^{\text{ELAS}} \geq \frac{\sqrt{2}-1}{2} h_{\max}$ is sufficient for the straight-line segments to be contained in the free space~\cite{ding18replanning}.

Based on the two-level inflation scheme, we propose an iterative \textit{convex hull shrinking} process, which pulls the B-spline trajectory back to the free space in the case of collision. The idea of the process is to iteratively add control points to the original B-spline control point sequence. The newly added control points are constrained in the intersection of two consecutive balls.
\footnote{According to the introduction in Sect.~\ref{sec:Bspline_property}, the total number of knots should satisfy $m+1 = (n+1)+k+1$. Since uniform B-spline is used, when a new control point is inserted, the number of knots is increased by one, while the knot separation $\Delta_t$ remains the same. The new control point sequence still matches the new knot vector.}
The process is built upon the convex hull property of the B-spline and constrains the B-spline trajectory by shrinking the convex hull.
We highlight that only a \textit{finite} number of control points are needed to enforce the safety of the B-spline trajectory. This could save a significant amount of computation power compared to the methods which apply dense constraint points based on a conservative heuristic~\cite{zhu2015ces, mellinger2011minsnap}. We conclude this feature in Theorem~\ref{thm:finite_safety}.
\begin{theorem}\label{thm:finite_safety}
Given a well-connected elastic tube, a B-spline trajectory that fits within the tube can be generated by iteratively adding constrained control points to the original B-spline control point sequence. The newly added control point is constrained inside the intersection of neighboring balls. The iterative process succeeds in a finite number of iterations, or infeasibility is reported when the dynamical feasibility cannot be satisfied, given the current tube and control point sequence.
\end{theorem}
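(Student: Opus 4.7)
The strategy is to prove safety via the convex hull property of the B-spline and to prove termination through a combination of refinement analysis and a monotone infeasibility criterion. Recall from Sect.~\ref{sec:Bspline_property} that each segment $\mathbf{s}_j(u)$ is contained in the convex hull of its supporting control point span $\mathbf{P}_j$. Safety of the whole trajectory therefore reduces to showing that every such convex hull lies in the union of tube balls $\bigcup_i B_i$, where $B_i$ denotes the ball centered at $\mathbf{q}_i$ with radius $r_i'$.

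First, I would formalize the insertion rule. After each solve of the QCQP from Sect.~\ref{sec:elastic_opt_formulation}, I would check each segment for a safety violation. If some segment $\mathbf{s}_j$ exits $\bigcup_i B_i$, I would identify the neighboring balls $B_i, B_{i+1}$ between which the excursion occurs; well-connectedness guarantees $B_i \cap B_{i+1} \neq \emptyset$, so a new control point can be constrained to this intersection and added to the QCQP's decision variables. The enlarged problem is re-solved, yielding a new candidate trajectory whose control polygon now has an additional vertex pinned inside the tube at the location that caused the violation.

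Second, I would argue the reduction in hull size. By a knot-insertion / de Boor refinement argument, each insertion replaces a local portion of the control polygon with convex combinations of its current vertices, so each affected $\mathrm{conv}(\mathbf{P}_j)$ becomes a strict subset of the previous one, with diameter contracted by a factor bounded strictly below one (depending only on the blending matrix $\mathbf{M}_k$). Iterating, the maximum hull diameter $\max_j \mathrm{diam}(\mathrm{conv}(\mathbf{P}_j))$ tends to zero. Once it falls below the minimum overlap width of neighboring balls, each span's hull is contained in a pair of consecutive balls, and the constraint pinning an interior control point to their intersection guarantees inclusion in $\bigcup_i B_i$. This establishes termination in a finite number of iterations in the success case.

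For the infeasibility clause, each insertion shortens local polygon edges and, via Prop.~\ref{prop:linear_derivative_bound}, tightens the sufficient conditions on the derivative hulls. If at any iteration the QCQP becomes infeasible under its combined ball, boundary, and derivative constraints, the algorithm returns infeasibility, matching the theorem's second clause. The main obstacle is making the diameter-contraction claim fully quantitative: one must relate the per-insertion contraction to the intrinsic geometry of the tube (minimum ball radius, minimum overlap, maximum inter-ball separation) and to $\mathbf{M}_k$, so as to derive an a priori bound on the number of insertions before either safety or infeasibility is certified. This bookkeeping, rather than any conceptual issue, is what I expect the formal proof to spend most of its effort on.
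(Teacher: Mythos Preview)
Your contraction argument rests on a misidentification of the insertion mechanism, and this is a genuine gap. You invoke de~Boor / knot-insertion refinement, claiming each insertion ``replaces a local portion of the control polygon with convex combinations of its current vertices,'' so that $\mathrm{conv}(\mathbf{P}_j)$ shrinks by a factor bounded below one. That would be true for classical \emph{curve-preserving} knot insertion, but the paper's procedure is different: a new control point is appended as a fresh QCQP decision variable, constrained only to lie in $B_i\cap B_{i+1}$, and the enlarged problem is re-solved. The new point is not a convex combination of its neighbors, and the remaining control points may move after re-optimization, so neither the nesting $\mathrm{conv}(\mathbf{P}_j^{\text{new}})\subset\mathrm{conv}(\mathbf{P}_j^{\text{old}})$ nor a uniform contraction factor follows. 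Your termination argument therefore does not go through as written, and the ``bookkeeping'' you anticipate is not the issue---the underlying mechanism is wrong.

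The paper's proof sidesteps hull-diameter tracking entirely with a direct counting argument. Restrict attention to one span of $k+1$ original control points, each in its own ball, giving $k$ consecutive intersection regions. In the worst case insert $k$ constrained control points in \emph{each} intersection, for a total of $k+1+k^2$ control points. Because consecutive original points are now separated by $k$ inserted points, any window of $k+1$ consecutive control points contains exactly one original point $\mathbf{p}_i$ together with at most $k$ neighbors drawn from $B_{i-1}\cap B_i$ or $B_i\cap B_{i+1}$; all of these lie in $B_i$. By the convex hull property every resulting segment is contained in a single tube ball, so safety holds. This yields the explicit finite bound of $k^2$ insertions per original span, with infeasibility reported only if the dynamical constraints of Prop.~\ref{prop:linear_derivative_bound} fail along the way.
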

\begin{proof}
Please refer to Appendix~\ref{sec:appendix_thm_safety} for the detailed proof.
\end{proof}

In Fig.~\ref{fig:iterative_insertion}, we provide a toy example of the convex hull shrinking process. The initial placement is constrained in four balls, and the convex hull envelope exceeds the free space. If collision is detected, we add one additional control point (\textit{blue dot}) which is constrained to be inside the intersection of the first and second ball. The extended EO is expected to be executed again, and the convex hull shrinks. Similarly, if the collision is still not resolved, we iteratively add control points to the intersection space. It can be shown that at most nine iterations are needed to resolve the collision in this case.

\section{Implementation Details}\label{sec:implementation_details}
\subsection{Receding Horizon Replanner Using Local Control}
As shown in Fig.~\ref{fig:bspline_replan}, B-spline has the local control property which facilitates the receding horizon (re)planning. Specifically, all the control points are organized in a sliding window. The control points corresponding to the executed and executing trajectory are committed and fixed. The disturbance caused by the optimization will not affect the feasibility of the executing trajectory due to the local control. A stopping policy will be activated if no feasible solution is found before the end of the executing trajectory.

When replanning is activated, the EBK search is called to update the placement for the control points under optimization. Note that the initial and goal state of the EBK search can be determined by the control points inside the window according to the local planning range. Note that the control points from the sliding window are in the continuous space after reshaping. However, the EBK search should use the discretized control points as the reference initial/ goal state to preserve optimality. The strategy of getting the reference states for the EBK search are to find the closest span pattern in terms of position and velocity error while matching the last control point to the grid cell. We constantly gather a fixed number of control points (e.g., twelve) for EO as the window moves forward.

There are two modes for the activation of replanning, namely, active mode and passive mode. For the passive mode, the EBK search is only called when collision is detected, while for the active mode, the EBK search is constantly activated as the sliding window moves forwards. Since the active mode can constantly improve the trajectory quality, it is more robust when the mapping quality is limited, but it is more computationally expensive.

\subsection{Monocular Vision-Based Testbed}
The monocular quadrotor testbed is equipped with a monocular camera (30 Hz), one IMU (100 Hz), an Intel i7 processor and an NVIDIA Jetson TX1 (Fig.~\ref{fig:drone}). The localization, mapping and planning modules are all running onboard. The localization module is based on our Monocular Visual Inertial Navigation System (VINS-Mono)~\cite{qin2017vins}, and the mapping module is based on our monocular dense mapping method~\cite{wang18quadtree} and truncated signed distance field (TSDF) fusion. No prior knowledge of the environment is required.

\subsection{Dual-Fisheye Vision-Based Testbed}
The dual-fisheye quadrotor testbed is equipped with two fisheye cameras (30 Hz), one IMU (100 Hz), an Intel i7 processor and an NVIDIA Jetson TX2 (Fig.~\ref{fig:fisheye_drone}). All modules are running onboard. It is worth noting that by using the two fisheye cameras, the system can provide omnidirectional perception and the quadrotor is able to fly a round-trip with a fixed yaw angle. The mapping module is based on our dual-fisheye omnidirectional stereo system~\cite{gao2017dual}. Also no prior knowledge of the environment is required.

\section{Analysis}\label{sec:analysis}
In this section, we present an analysis of the proposed kinodynamic planning framework. We begin with two individual analyses for the EBK search and the EO approach, respectively. To analyze the EBK search, we compare the proposed method with two kinodynamic planning algorithms, namely, search-based motion primitive (SMP)~\cite{liu2017smp} and kinodynamic RRT* (kRRT*)~\cite{webb2013kinodynamic}, representing both the search-based method and sampling-based method, respectively. For the EO, we compare the EO approach with two state-of-the-art trajectory optimization techniques, namely, continuous trajectory (CT) optimization~\cite{oleynikova2016ct} and gradient-based safe (GS) trajectory optimization~\cite{feigao2017hg}, which are popular non-linear optimization techniques for trajectory refinement. After the individual tests, we analyze the run-time efficiency of the proposed replanning framework, and compare the whole replanning system with the SMP~\cite{liu2017smp} method. We run all the simulations on a desktop computer equipped with an Intel I7-8700K CPU.

\subsection{Analysis of the B-spline-Based Kinodynamic Search} \label{sec:analysis_bk}

Recall that the motivation for introducing kinodynamic search to replanning is to facilitate dealing with the non-static initial states of the quadrotors. To this end, we analyze the performance of the kinodynamic search by planning from a given non-static initial state to varying goal states in a $10 \times 10 \times 2$ m test field. We compare our results with SMP~\cite{liu2017smp} and kinodynamic RRT*~\cite{webb2013kinodynamic} in terms of the trajectory quality and time efficiency, under the same planning setup. At the same time, we also illustrate the results of a hierarchical geometric planner as a common baseline. Specifically, the geometric planner first uses A* under the Euclidean distance measure to find the shortest path, and then parameterizes the path using the unconstrained QP formulation introduced in~\cite{richter2016polyunqp}.

The kinodynamic planners from~\cite{webb2013kinodynamic},~\cite{liu2017smp} and our method all have a tuning parameter to control the algorithm complexity and solution quality. For instance, SMP~\cite{liu2017smp} can control the discretization resolution for the control input. To further investigate the optimality-efficiency tradeoff achieved by each algorithm, we also demonstrate the results for these algorithms under different parameter setups. Note that we focus on the real-time replanning scenario, so we are concerned with the operation region where the run-time efficiency is close to real-time. Specifically, we compare the following methods:
\begin{itemize}[leftmargin=*]
\item \textit{$\text{Geometric}^{\dagger}$}: A path finder using A* and a fifth-degree polynomial parameterization using the unconstrained QP formulation~\cite{richter2016polyunqp} by minimizing the average integral of acceleration.
\item \textit{$\text{kRRT*-T100}^{\dagger}$}: The kinodynamic RRT* planner~\cite{webb2013kinodynamic} using a 3-D acceleration-controlled double integrator system under a 100 ms termination condition for the sampling.
\item \textit{kRRT*-T600}: The kinodynamic RRT* planner under a 600 ms termination condition.
\item \textit{$\text{SMP-U3}^{\dagger}$}: The SMP planner~\cite{liu2017smp} using a 3-D acceleration-controlled double integrator system with three discrete control inputs for each axis.
\item \textit{SMP-U5}: The SMP planner with five discrete control inputs for each axis.
\item \textit{$\text{EBK-D1}^{\dagger}$}: Our EBK planner using fifth-degree B-spline parameterization and $d=1$.
\item \textit{EBK-D2}: The EBK planner using $d=2$.
\end{itemize}

The reason for using the acceleration-controlled double integrator system for kinodynamic RRT* and SMP is that if a higher-order system were used, the run-time would be too large, making it inapplicable to replanning. For a similar reason, a termination time larger than $600$ ms for kRRT* and number of control inputs larger than five for SMP are not considered. The methods marked with $\dagger$ are amenable to real-time, and the other methods serve as showcases illustrating how the trajectory quality can be improved given a larger computation time budget.
\begin{figure}[t]
	\begin{center}
		\subfigure[Geometric method\label{fig:simple3d_geo}]{\includegraphics[width=0.49\columnwidth]{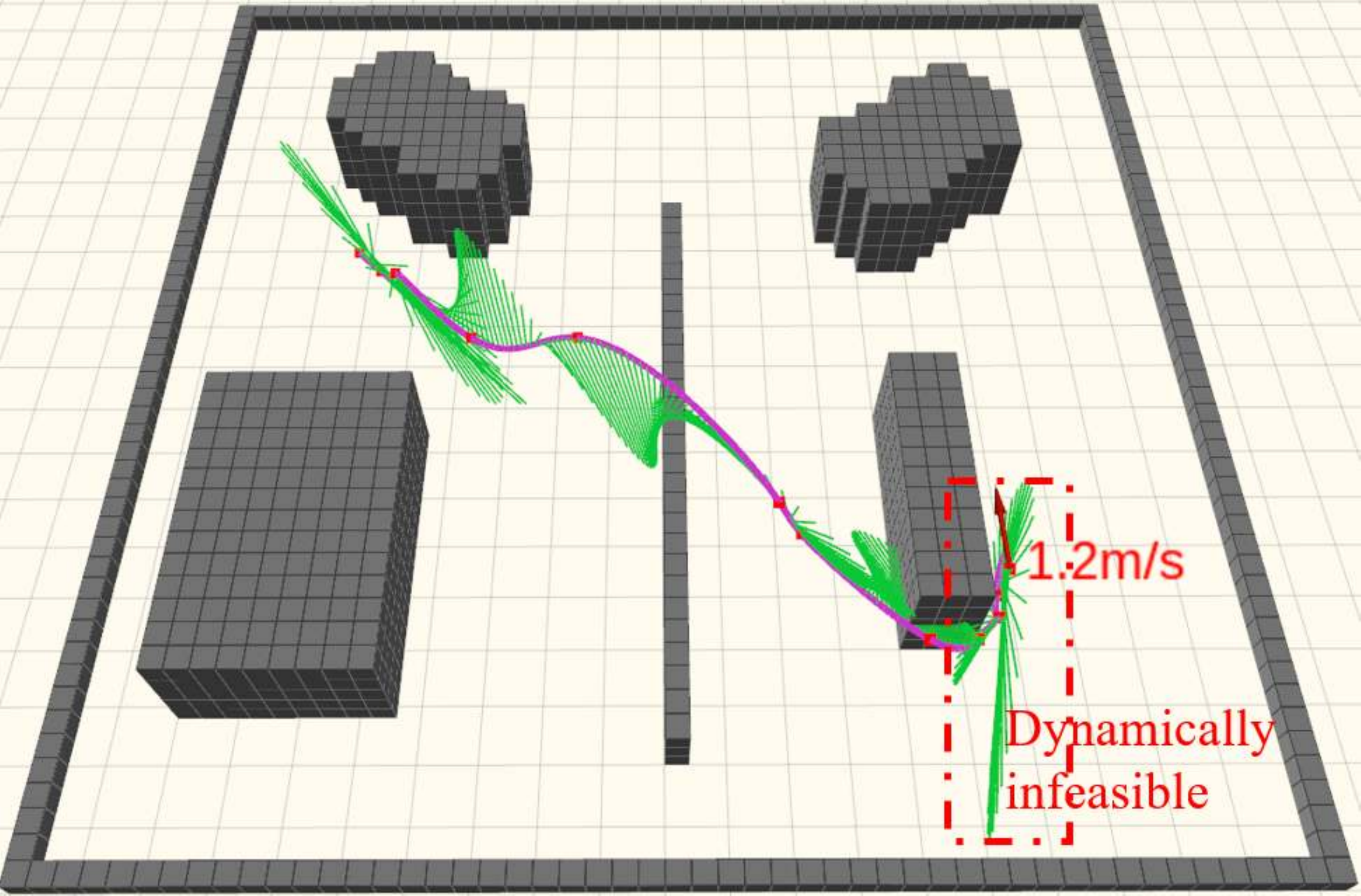}}
		\subfigure[Kinodynamic RRT*-T100\label{fig:simple3d_krrt100} ]{\includegraphics[width=0.49\columnwidth]{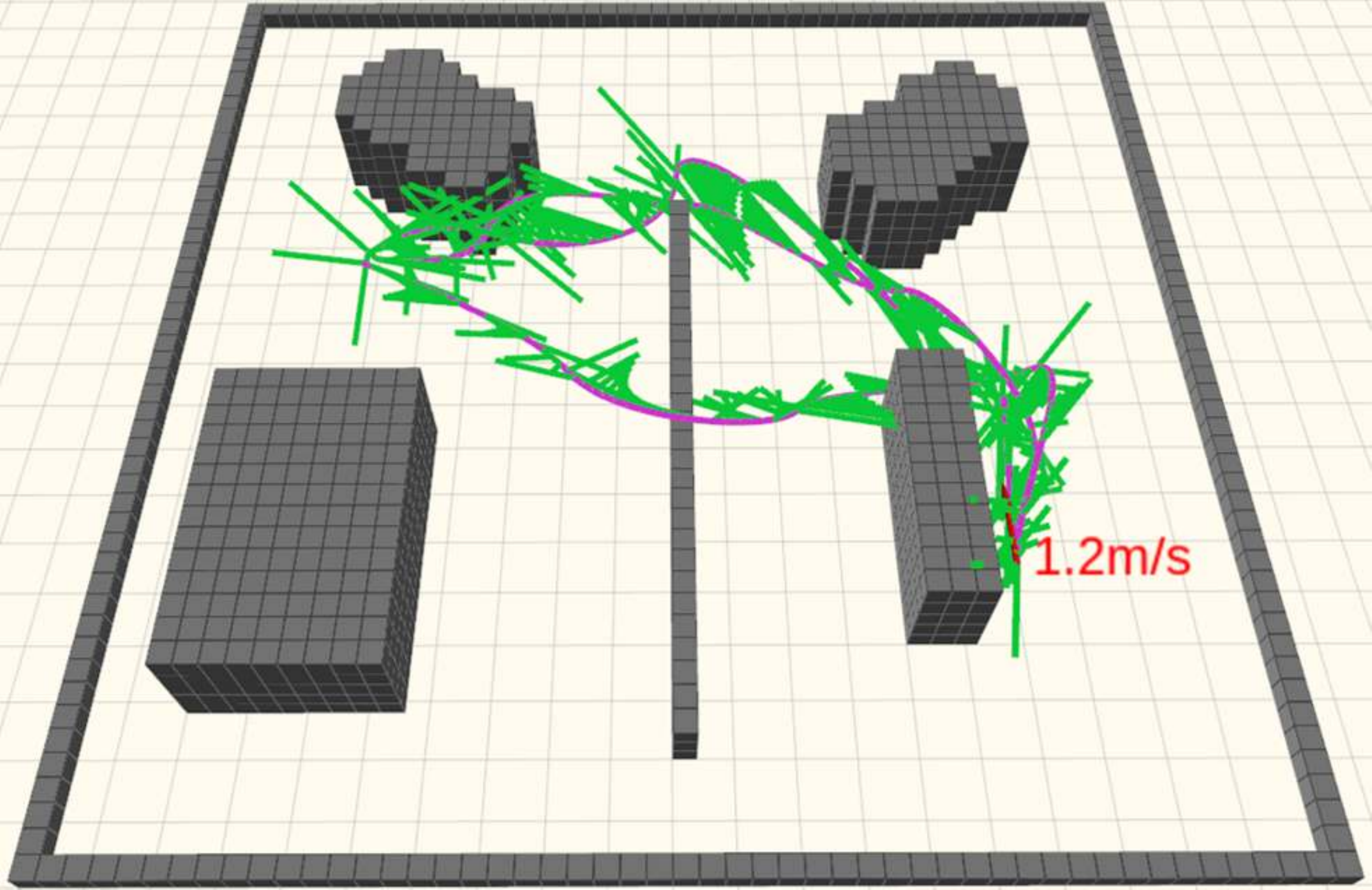}}
		\subfigure[SMP-U3\label{fig:simple3d_smpu3}]{\includegraphics[width=0.49\columnwidth]{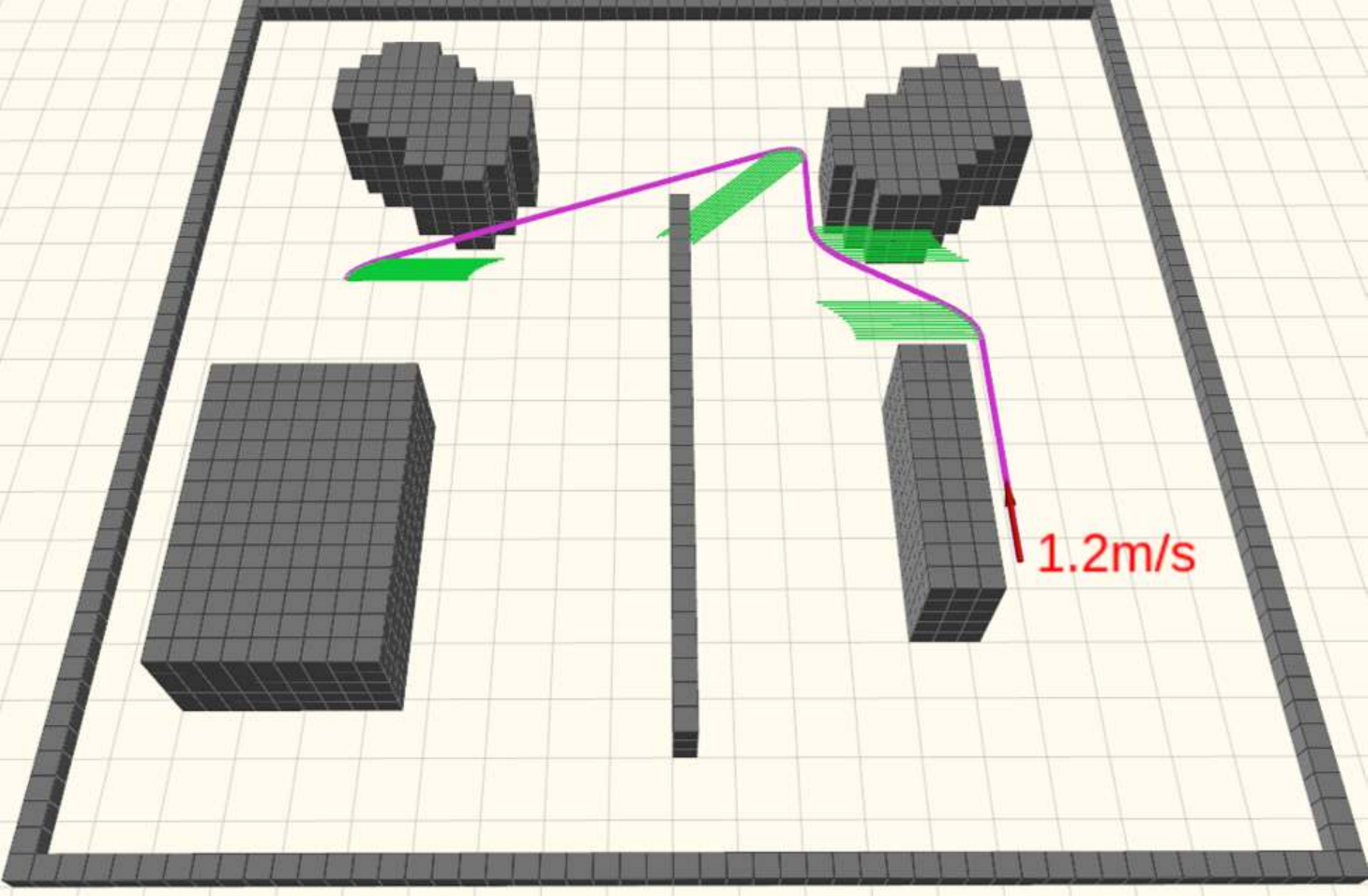}}
		\subfigure[SMP-U5\label{fig:simple3d_smpu5}]{\includegraphics[width=0.49\columnwidth]{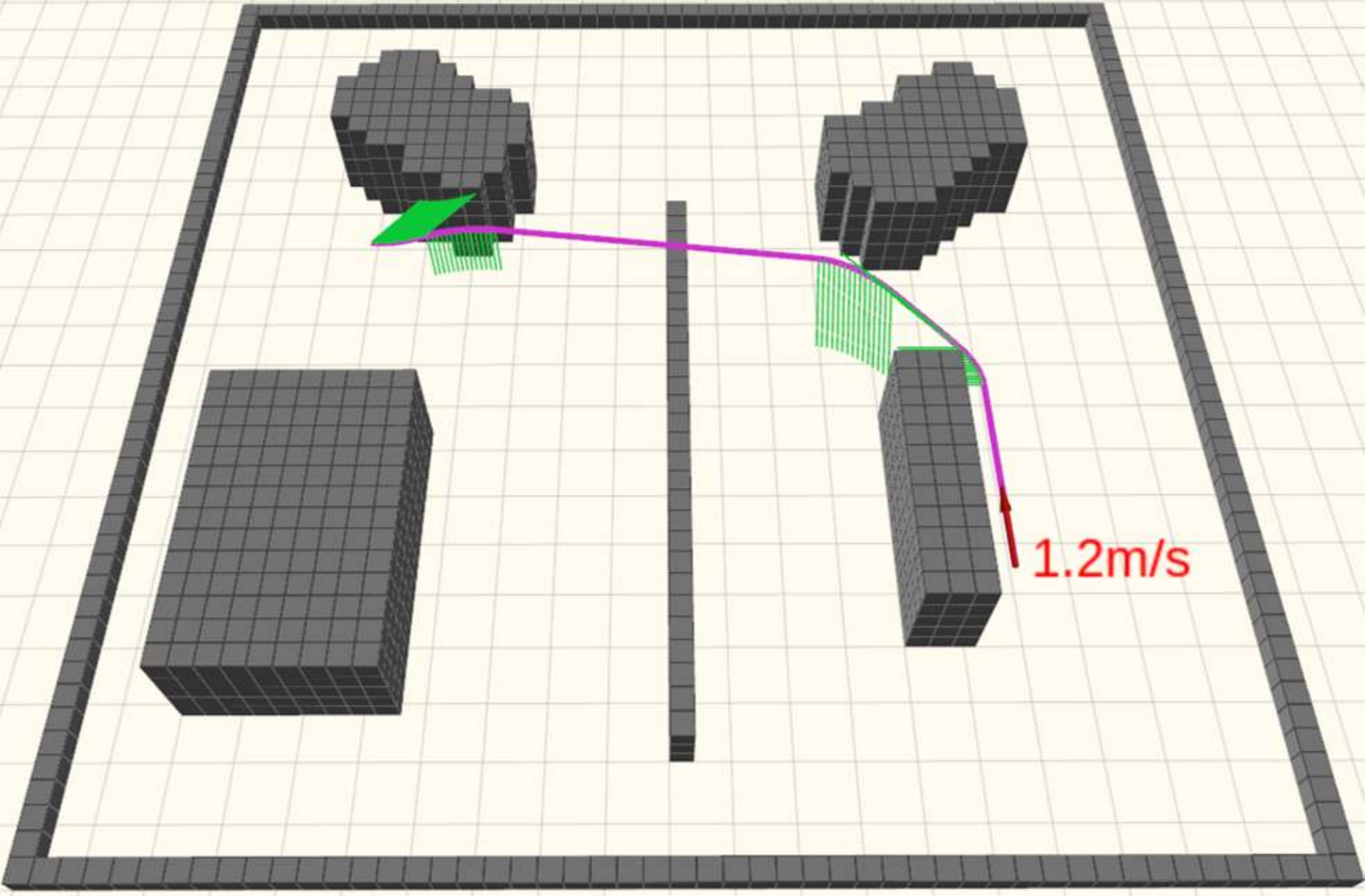}}
		\subfigure[EBK-D1\label{fig:simple3d_bkd3}]{\includegraphics[width=0.49\columnwidth]{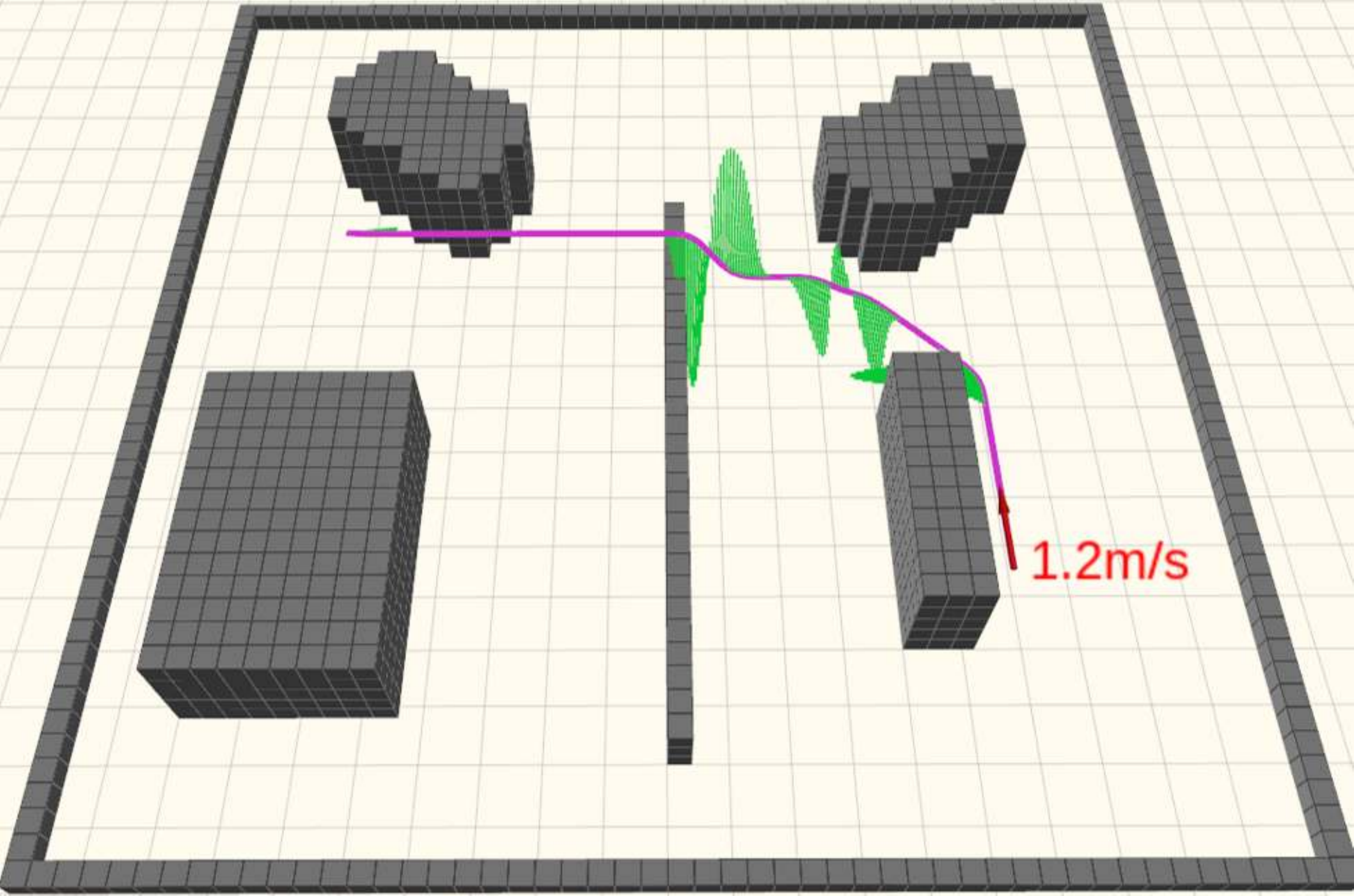}}
		\subfigure[EBK-D2\label{fig:simple3d_bkd6} ]{\includegraphics[width=0.49\columnwidth]{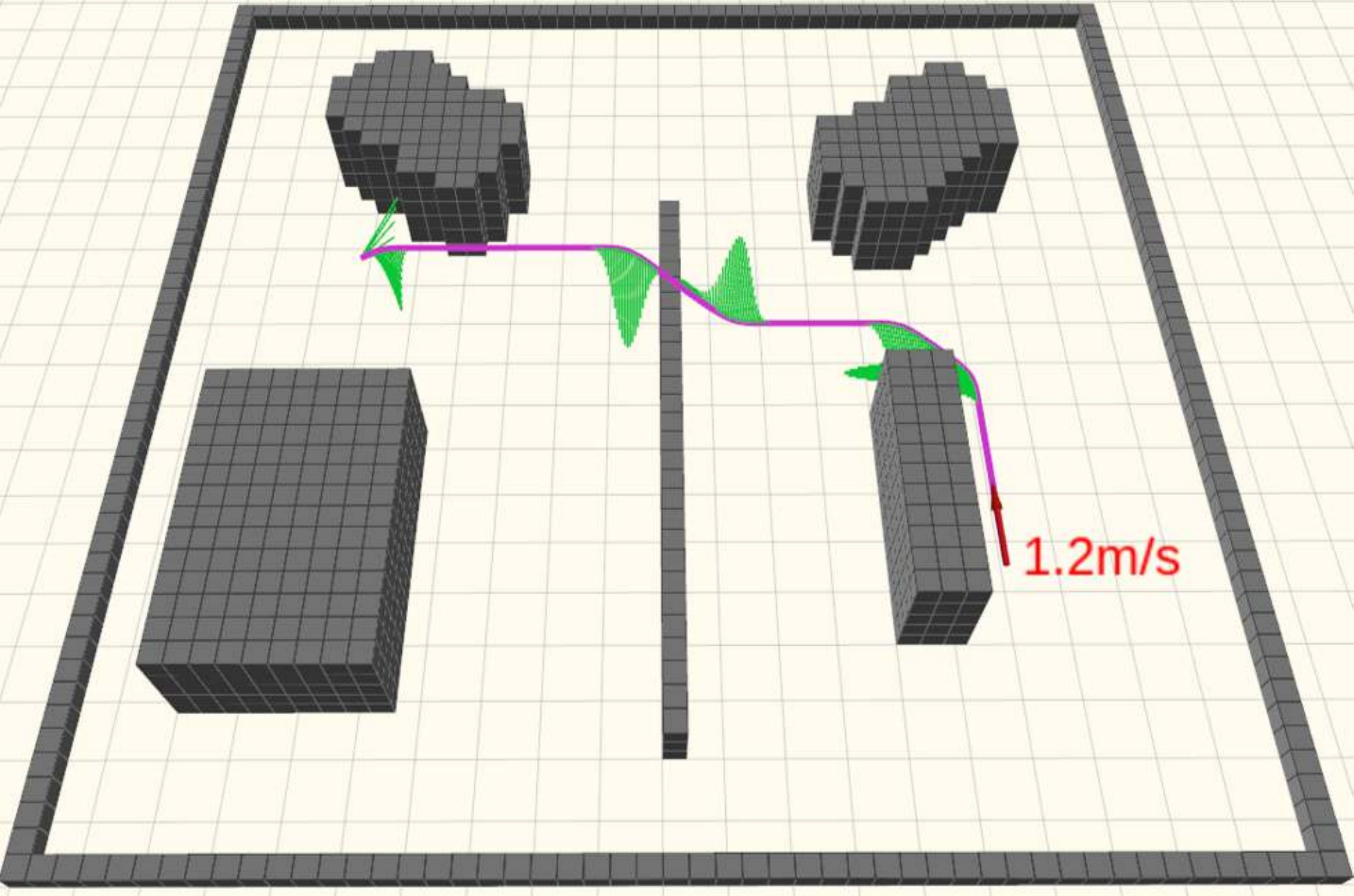}}
	\end{center}
	\caption{Comparisons of different kinodynamic planning approaches.\label{fig:simple3d_tuple} }
	\vspace{-0.4cm}
\end{figure}
All the kinodynamic planners have a similar form of the cost function according to Eq.~\ref{eq:control_cost}. Since both kRRT* and SMP adopt the acceleration-controlled double integrator system, the order of the derivative $l$ we can take is $2$ for all the experiments. The weight $\lambda$ of the trajectory time is set to $20$. We consider two additional metrics, namely, average acceleration and maximum acceleration, which represent the smoothness and feasibility of the resultant trajectory.

For the methods where cell decomposition is needed, such as A* for the geometric method and EBK, the environment is decomposed into cells with a fixed size of 0.2 $m$ for each dimension. The geometric planner requires a \textit{time allocation} module since the path does not contain any time information. Generating the optimal time profile for a path is actually not a trivial problem~\cite{gao18timeall}, and the common practice is based on a heuristic allocation method, such as a trapezoid velocity profile~\cite{liu2017sfc, oleynikova2016ct}. We follow this practice and set the average speed to achieve a similar trajectory duration to the kinodynamic planners. Time allocation is not needed for kinodynamic planners since they directly produce time-profiled trajectories. The initial state is set to a fixed position with non-zero velocity 1.2 $m/s$ and zero acceleration, as shown in Fig.~\ref{fig:simple3d_tuple}. The goal state is static, and its position is regularly sampled with a distance separation of $0.7$ m. In total, there are $136$ collision-free goals available. The velocity and acceleration limit are set to 2 $m/s$ and 4.7 $m/s^2$ for each axis. The $\Delta_t$ for the EBK method is set to $0.17$ s. The statistics averaged over the $136$ rounds of planning are shown in Tab.~\ref{tab:kino_compare}, and the qualitative results for the planning to the same goal state are shown in Fig.~\ref{fig:simple3d_tuple}.

\begin{table}[t]
  \vspace{+0.12cm}
	\centering
	\caption{Comparison of different kinodynamic planning approaches.\label{tab:kino_compare}}
	\begin{tabular}{@{}lccccc@{}}
	\toprule
	 \textbf{Method}
	&\textbf{\makecell{\scriptsize{Run}\\ \scriptsize{Time} (s)}}
	&\textbf{\makecell{\scriptsize{Traj.}\\ \scriptsize{Dur.} (s)}}
	&\textbf{\makecell{\scriptsize{Acc. Cost}\\ ($m^2/s^3$)}}
	&\textbf{\makecell{\scriptsize{Ave. Acc.}\\($m/s^2$)}}
	&\textbf{\makecell{\scriptsize{Max. Acc.}\\($m/s^2$)}} \\
	\midrule
	kRRT*-T600  			     & 0.661 & 4.04 & 30.6  & 2.42 & 4.18\\
	SMP-U5      				 & 2.878 & \textbf{3.86}   & \textbf{12.8}  & 1.36 & 2.97\\
	EBK-D2       				 & \textbf{0.383} & 4.52 & 14.4  & 0.95 & 4.60\\
	\hline
	$\text{Geometric}^\dagger$ & 0.018 & 4.34 & 108.8 & 4.60 & 9.48\\
	$\text{kRRT*-T100}^\dagger$  & 0.110 & 4.60 & 38.8  & 2.54 & 4.28\\
	$\text{SMP-U3}^\dagger$      & 0.111 & 4.47 & 32.3  & 1.70 & 4.70\\
	$\text{EBK-D1}^\dagger$       & \textbf{0.034} & \textbf{4.31} & \textbf{15.2}  & 1.10 & 4.56\\
	\bottomrule
	\end{tabular}
	\vspace{-0.7cm}
\end{table}

\begin{table*}[h]
	\centering
	\caption{Comparison of different trajectory optimization approaches.\label{tab:optimization_comparison}}
	\begin{tabular}{@{}llccccccccc@{}} \toprule
	 \textbf{Map}
	&\textbf{Method}
	&{\makecell{\textbf{Density}\\(pillars$/m^2$)} }
	&\textbf{\makecell{Success. \\Frac. ($\%$)}}
	&\textbf{\makecell{Run-\\Time (s)}}
	&\textbf{\makecell{Traj.\\Dura. (s)}}
	&\textbf{\makecell{Jerk Cost\\ ($m^2/s^5$)}}
	&\textbf{\makecell{Ave. Vel.\\($m/s$)}}
	&\textbf{\makecell{Ave. Acc.\\($m/s^2$)}}
	&\textbf{\makecell{Max. Acc.\\($m/s^2$)}}
	&\textbf{\makecell{Traj. Len.\\($m$)}}\\
	\midrule
	\multirow{9}{*}{\makecell{Random\\Map}}
								&\multirow{3}{*}{CT~\cite{oleynikova2016ct}} & 0.1 & 95  & 0.030 & 8.1 & 70.2  & 1.73 & 0.93 & 2.63 & 14.3 \\
								&                                            & 0.2 & 68  & 0.082 & 8.0 & 91.6  & 1.72 & 1.01 & 2.90 & 13.9 \\
								&                                            & 0.4 & 46  & 0.073 & 7.9 & 105.6 & 1.69 & 1.05 & 3.10 & 13.7 \\
								\cline{2-11}
	              &GS~\cite{feigao2017hg} & 0.1 & \textbf{100} & 0.062          & 11.1 			 	  & 186.8 				 & 1.39 & 0.87 & 2.99 & 15.6 \\
								&EO (proposed)          & 0.1 & \textbf{100} & \textbf{0.012} & \textbf{10.6} & \textbf{174.0} & 1.43 & 0.38 & 1.35 & 14.1 \\
								\cline{2-11}
								&GS~\cite{feigao2017hg} & 0.2 & \textbf{100} & 0.075 				 & 12.8          & 205.9          & 1.28 & 0.85 & 2.89 & 15.9 \\
								&EO (proposed)          & 0.2 & \textbf{100} & \textbf{0.012} & \textbf{11.3} & \textbf{181.2} & 1.38 & 0.49 & 1.85 & 14.5\\
								\cline{2-11}
								&GS~\cite{feigao2017hg} & 0.4 & \textbf{100} & 0.206          & 12.9          & 190.8          & 1.33 & 0.89 & 3.10 & 17.4\\
								&EO (proposed)          & 0.4 & \textbf{100} & \textbf{0.013} & \textbf{12.4} & \textbf{132.4} & 1.36 & 0.55 & 1.34 & 16.2 \\
								\bottomrule
	\end{tabular}
	\vspace{-0.2cm}
\end{table*}

According to the qualitative results in Fig.~\ref{fig:simple3d_tuple}, there is a significant difference in the performance. For the geometric method (Fig.~\ref{fig:simple3d_geo}), since the shortest path diverges from the initial velocity direction, the parameterization is jerky at the beginning. Moreover, as the unconstrained QP cannot enforce the dynamical feasibility, the generated trajectory is infeasible due to the non-static initial state. As for the kRRT* with a $100$ ms time budget (Fig.~\ref{fig:simple3d_krrt100}), it can respect the initial state of the quadrotor since the control effort is directly considered in the sampling process. It also guarantees the dynamical feasibility of the resultant trajectory by constraining the control input and states along the tree edges. However, the trajectory quality is unsatisfactory due to the limited sampling. We also observe some unpredictable randomized behavior as shown by the three rounds of planning with exactly the same initial state and goal state in Fig.~\ref{fig:simple3d_krrt100}. Meanwhile, for the SMP method, the shape of the trajectory of SMP-U3 is not natural since the resolution of the control input is large.\footnote{The acceleration bound we use is larger than that in~\cite{liu2017smp}.} Some maneuvers such as flying over the little step in the middle are not included in the solution space of SMP-U3. SMP-U5 performs better than SMP-U3 due to finer discretization. Finally, EBK-D1 and EBK-D2 both generate an initial-state-aware smooth trajectory. EBK-D2 finds a slightly better trajectory than EBK-D1 according to the acceleration profile.

It is notable that the trajectory provided by the EBK method has continuity up to snap, which is good for controlling quadrotors. We can observe from Fig.~\ref{fig:simple3d_tuple} that the kRRT* trajectory only has continuity up to acceleration and that of the SMP has continuity up to velocity, due to the restriction of computation complexity and order of the system model.

From the quantitative results in Tab.~\ref{tab:kino_compare}, among the real-time methods (with $\dagger$), EBK-D1 finds the lowest cost trajectory, achieving a $15.2$ $m^2/s^3$ total cost within $0.034$ s. Our method shows superior performance given the real-time requirement. It is of interest to examine the situation where we have a time budget in the range of seconds. In that case, SMP-U5 achieves a lower cost than EBK-D1 and EBK-D2. The reason is that the trajectory duration of EBK-D2 cannot be efficiently reduced since the control points have to be expanded step by step on the discrete grid. This illustrates the limitation induced by the discretization for the EBK method. However, the difference between the SMP method and EBK method is minor, meaning the EBK method can provide competitive solutions given a run-time budget of seconds.

\subsection{Analysis of the Elastic Optimization}
\label{sec:eo_performance}
To evaluate the performance of EO, we compare our method with two state-of-the-art trajectory optimization methods: the CT method~\cite{oleynikova2016ct} and GS method~\cite{feigao2017hg}. Two test environments are provided, namely, a random map (shown in Fig.~\ref{fig:analysis_opt_qualit_randmap}) and a Perlin noise map\footnote{https://github.com/HKUST-Aerial-Robotics/mockamap} (shown in Fig.~\ref{fig:analysis_opt_qualit_perlin}). The map size is fixed to $20\,m \times 20\,m \times 4\,m$ for both maps. The start location is fixed to the center of the test field for the qualitative experiments in Fig.~\ref{fig:analysis_eo_qualit_randmap} and Fig.~\ref{fig:analysis_eo_qualit_perlin}, and is fixed to the left bottom corner for all the experiments in Tab.~\ref{tab:optimization_comparison} to test the performance for long trajectories. The goal location is regularly sampled in the test field with a distance separation of $1$ m. We vary the obstacle density of the random map from 0.1 pillars/$m^2$ to 0.4 pillars/$m^2$, where the pillar size is set to $0.5\,m \times 0.5\,m$. The qualitative results are provided in Fig.~\ref{fig:optimization_qualitative} and the quantitative statistics of the optimization performance are organized in Tab.~\ref{tab:optimization_comparison}. Note that we omit the statistics for the Perlin noise map in Tab.~\ref{tab:optimization_comparison} since the trend is similar to that of the random map. All the optimization methods can handle high-order parameterization and they are set to minimizing the integral of the squared jerk. According to Eq.~\ref{eq:control_cost}, the jerk cost listed in Tab.~\ref{tab:optimization_comparison} has unit $m^2/s^5$ and represents the accumulated integral of the squared change rate of the acceleration, which represents the total control efforts.

The CT method uses a gradient-based non-linear optimization process to optimize the polynomial coefficients such that the resultant trajectory is collision free. It requires a Euclidean signed distance field (ESDF) to evaluate the collision cost. Following the practice in~\cite{oleynikova2016ct}, we provide an initial polynomial trajectory as an initial guess, which is parameterized by a straight-line guiding path. The number of segments is determined by a $3$ m distance separation. The GS method shares a similar formulation to the CT method, with the difference being that the GS method starts from a collision-free initial guess, which is provided by A* search in the experiment. Both the CT method and GS method require \textit{time allocation}, and in the experiment, we use the trapezoid velocity profile and scale the total allocated time such that different optimization methods achieve a similar average velocity, as shown in Tab.~\ref{tab:optimization_comparison}. The CT method has a larger average velocity due to the cases where it fails to resolve collision and the trajectory does not contain necessary deceleration. The success fraction is calculated by counting the collision-free and dynamically feasible trajectories among the total number of rounds.

Firstly, we examine the optimization reliability, i.e., the success fraction for the different methods. As shown in Tab.~\ref{tab:optimization_comparison}, the CT method is sensitive to the obstacle density. For a density of $0.1$ pillars/$m^2$, the success fraction of the CT method is 95$\%$, which means that in obstacle-sparse environments, the CT method can resolve collision efficiently. However, when the density increases to $0.4$ pillars/$m^2$, we observe that the success fraction drops significantly to $46\%$. The reason is that the CT method easily gets stuck in the infeasible local minimum when the trajectory is inside the cluttered obstacle. As shown in Fig.~\ref{fig:analysis_opt_qualit_randmap}, the trajectory of the CT method gets stuck between two pillars where there is not enough clearance considering the quadrotor size. On the other hand, we do not observe a drop in the success fraction for either the GS method or EO method. The reason is that the GS method starts from a collision-free initial guess and has a dominant collision penalty compared to its smoothness cost. The EO method has a high success fraction according to its theoretical guarantee.

\begin{figure}[t]
  % \vspace{+0.2cm}
	\begin{center}
		\subfigure[Random map (0.2 pillars/$m^2$)\label{fig:analysis_eo_qualit_randmap}]{\includegraphics[width=0.49\columnwidth]{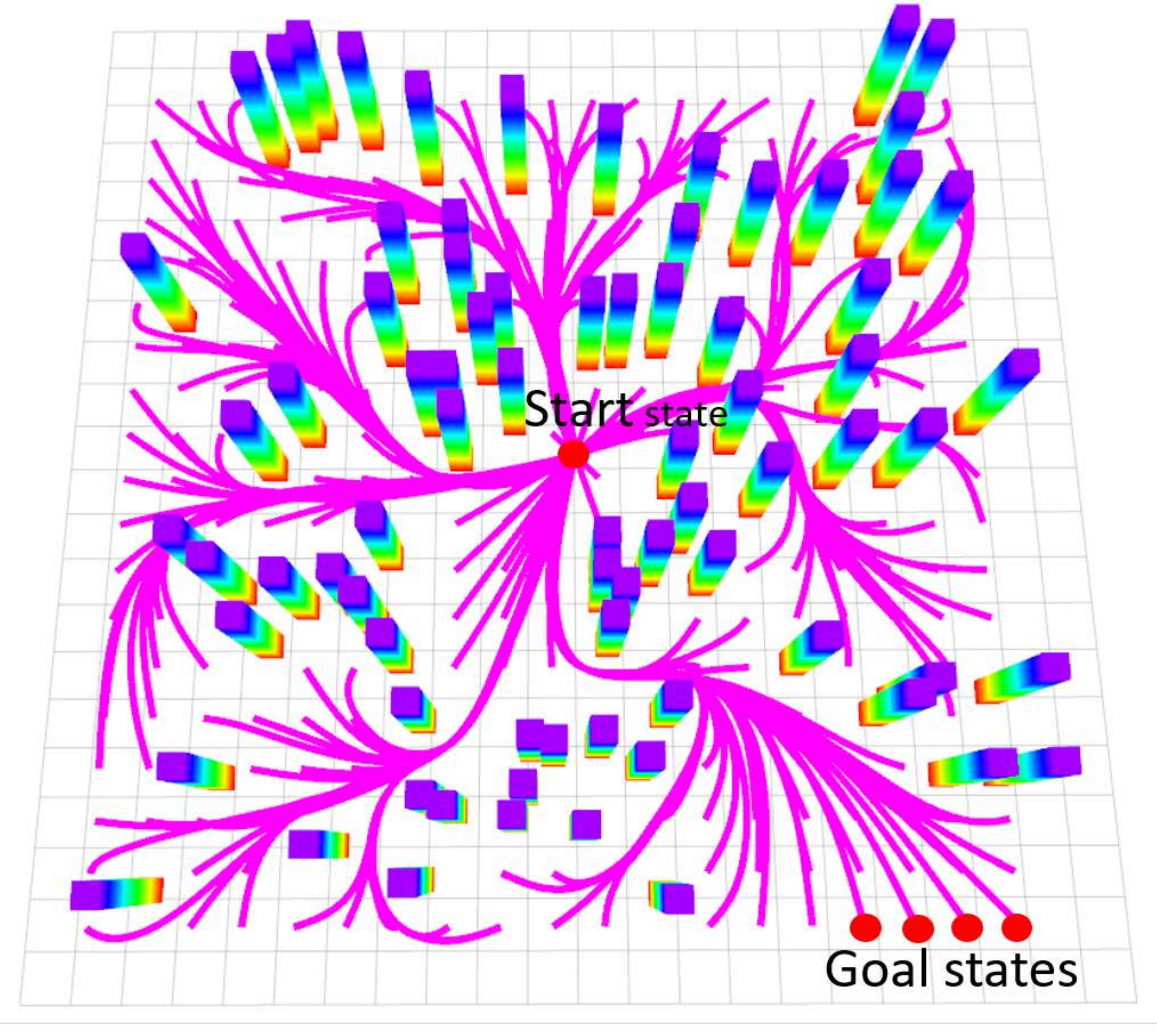}}
		\subfigure[Perlin noise map\label{fig:analysis_eo_qualit_perlin}]{\includegraphics[width=0.49\columnwidth]{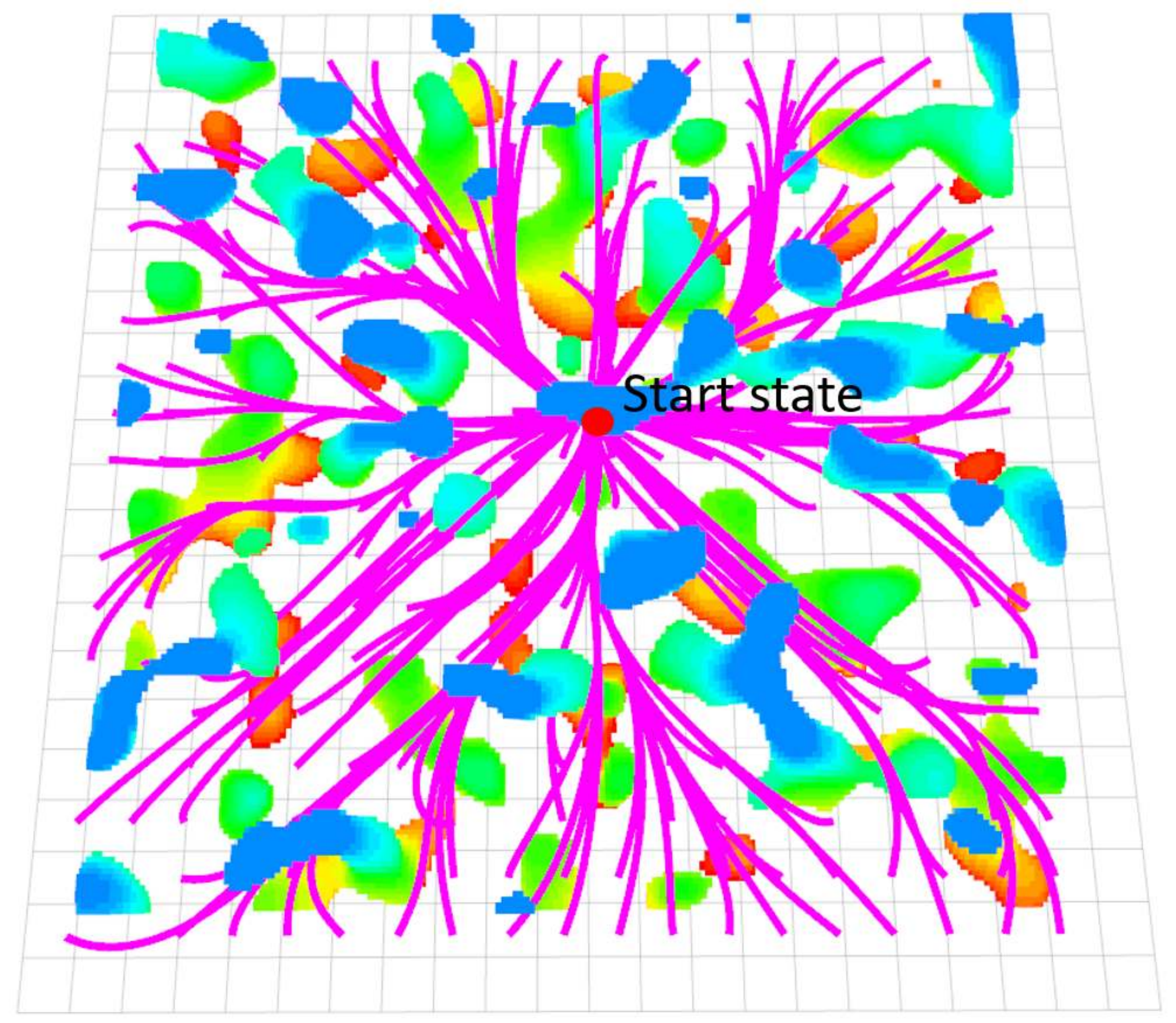}}
		\subfigure[Random map (0.2 pillars/$m^2$)\label{fig:analysis_opt_qualit_randmap}]{\includegraphics[width=0.49\columnwidth]{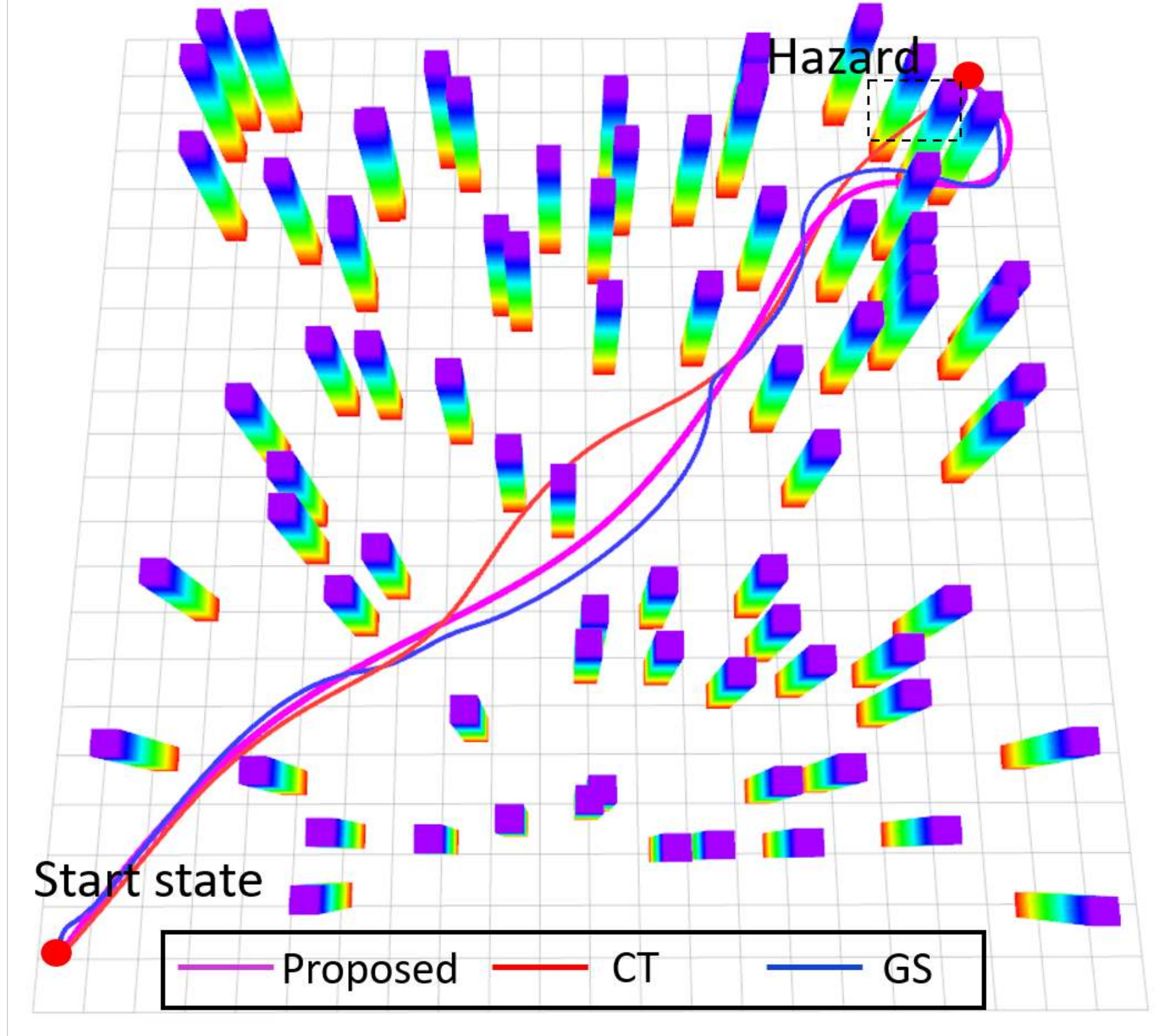}}
		\subfigure[Perlin noise map\label{fig:analysis_opt_qualit_perlin}]{\includegraphics[width=0.49\columnwidth]{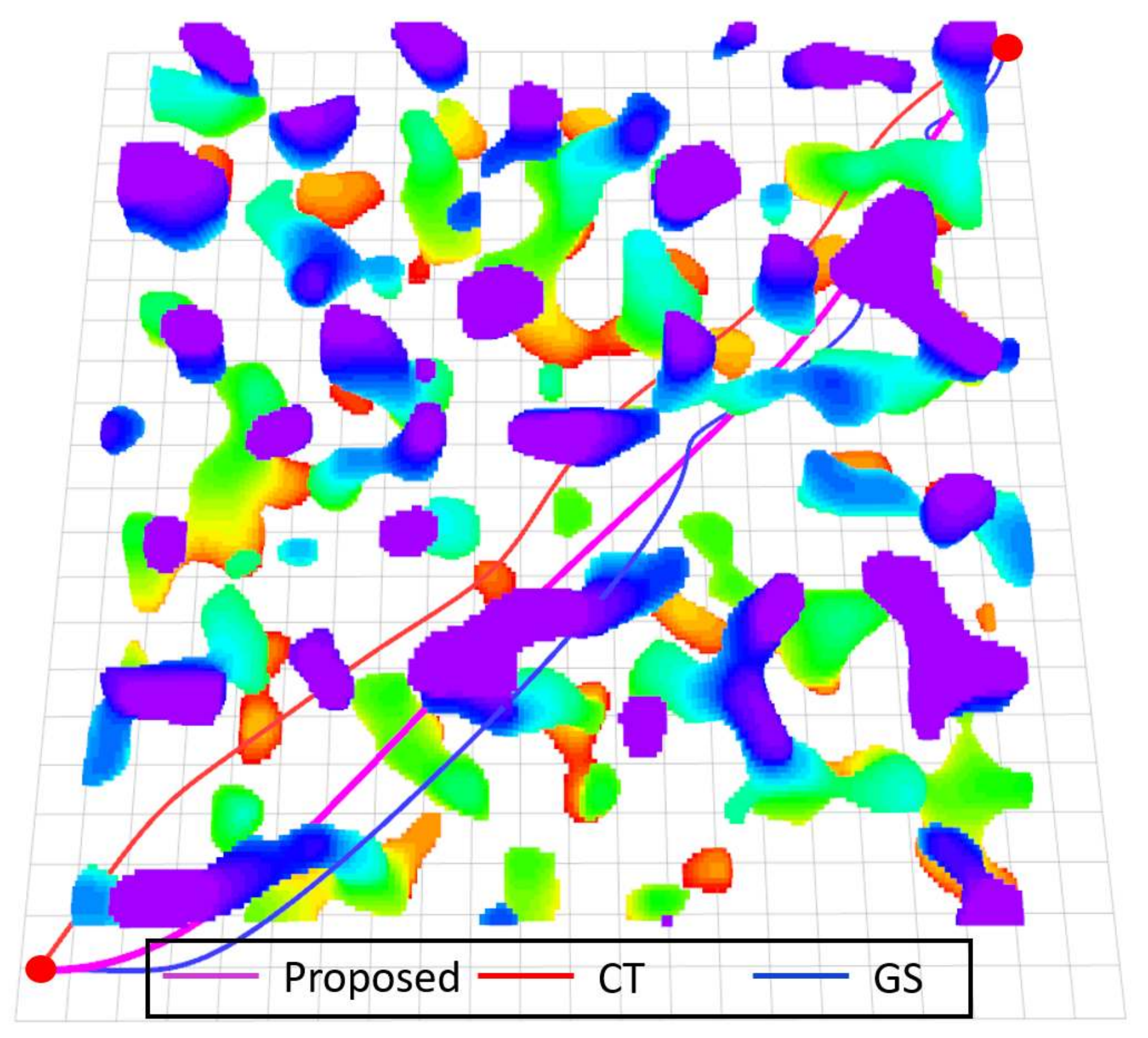}}
	\end{center}
	\caption{Comparisons of different trajectory optimization methods on two different maps. The EO method is shown in \textit{purple}, the CT method is shown in \textit{red} and the GS method is shown in \textit{blue}. \label{fig:optimization_qualitative}}
 \vspace{-0.6cm}
\end{figure}

Secondly, we focus on the two reliable methods, namely, the GS method and EO method, and further investigate the trajectory statistics. As shown in Tab.~\ref{tab:optimization_comparison}, the average trajectory durations of the two methods are close, so the comparison of the jerk cost is fair. For an obstacle density of $0.2$ pillars/$m^2$, averaged over the $135$ rounds, the jerk cost of the EO method is $181.2$ $m^2/s^5$, which is smaller than that of the GS method. Similar results are also observed for different densities. The reason is that the GS method is sensitive to the time allocation, since it also starts with an unparamterized path. However, for unknown environments, the shape of the initial path may vary and there is no systematic way to allocate the time. The heuristic trapezoid velocity profile may fail to provide a good initial guess when the initial path is not in a regular shape. For the EO method, since it starts from a time-parameterized B-spline trajectory, there is no need for time allocation.

From the efficiency perspective, for the GS method to achieve a similar jerk cost to the EO method, it needs run-times of $0.062$ s, $0.075$ s and $0.206$ s on average for the three densities,~which are significantly longer than the EO method.
Note that the GS method is based on a non-linear optimization formulation. In the experiments, the non-linear optimization of the GS method is terminated when the non-linear solver (NLOPT~\cite{Johnson2011nlopt}) reports convergence. For the GS method to converge to a compatible solution to that of our EO method, it requires a significantly longer run-time.
Moreover, the run-time of the GS method is sensitive to the density, since for cluttered environments, more segments of the polynomial are needed, which may affect the convergence rate of the GS method. However, the EO method has lower run-times for the three different densities. The efficiency of the EO method is affected by the total number of control points, but we observe that the efficiency difference is minor for the different densities.

\subsection{Analysis of the Run-Time Efficiency}
\label{sec:run_time_analysis}
In this section, we test our replanning system, combining the EBK search with EO refinement. For the EBK search, we adopt EBK-D1 since it is the most efficient and the trajectory quality is satisfactory as verified in Sect.~\ref{sec:analysis_bk}. We provide two different maps, namely, the random map and Perlin noise map. We evaluate the run-time efficiency of our replanning system, and list the statistics of all components in Tab.~\ref{tab:runtime_analysis}. The overall trajectory illustrating the whole round trip is shown in Fig.~\ref{fig:runtime_sim}.
\begin{table}[b]
	\caption{Run-time analysis on different maps}
	\label{tab:runtime_analysis}
	\resizebox{\columnwidth}{!}{
		\begin{tabular}{ ccccccccc} \toprule
			\textbf{Maps}    &
			\makecell{$\#$ \\ \textbf{Replans} }&
			\textbf{Time (s)} &
			\textbf{\makecell{EBK\\Search}} &
			\makecell{$\#$ \\ \textbf{Opt.}}&
			\textbf{Time (s)} &
			\textbf{\makecell{Tube\\Expan.}} &
			\textbf{\makecell{Traj.\\Opt.}} &
			\textbf{\makecell{Total\\ Opt. } } \\
			\midrule
			\makecell{Random map\\ ( 0.25 pillars/$m^2$ ) }  &  76  & \makecell{Avg\\Max\\Std}  & \makecell{\textbf{0.017}\\0.049\\0.010} & 993 &\makecell{Avg\\Max\\Std}  & \makecell{\textbf{0.002}\\0.009\\0.001} & \makecell{\textbf{0.021}\\0.043\\0.010}  & \makecell{\textbf{0.023}\\0.044\\0.010}\\
			\hline
			Perlin Map  &  19  & \makecell{Avg\\Max\\Std}  & \makecell{\textbf{0.014}\\0.026\\0.006} & 1044 &\makecell{Avg\\Max\\Std}  & \makecell{\textbf{0.002}\\0.008\\0.001} & \makecell{\textbf{0.028}\\0.058\\0.010}  & \makecell{\textbf{0.030}\\0.061\\0.011}\\
			\bottomrule
		\end{tabular}
	}
\end{table}
As shown in Tab.~\ref{tab:runtime_analysis}, on the random map, the EBK-D1 method consumes an average computing time of $0.017\,s$ with a standard deviation of $0.01\,s$. The elastic tube expansion method can be finished in $0.002\,s$, and the optimization can be done in $0.021\,s$. On the Perlin noise map, which contains unstructured 3D obstacles, our method has a similar performance, showing that our method works well in complex 3-D environments.
\begin{figure}[t]
	\begin{center}
		%trim left-down-right-up
		\subfigure[Replan on the random map]{\includegraphics[trim={13.5cm 2cm 15cm 0cm},clip,width=0.22\textwidth]{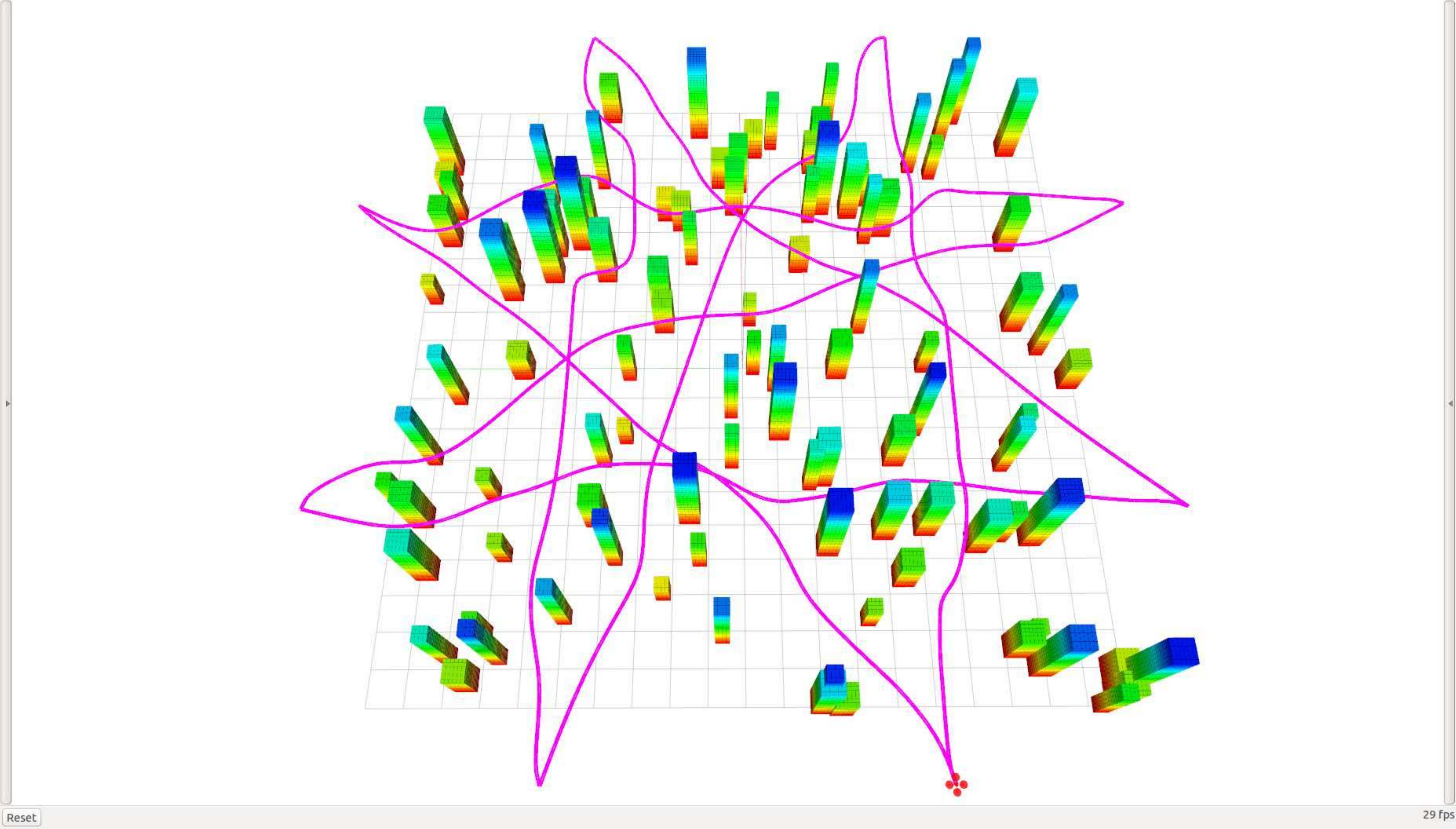}}
		\subfigure[Replan on the Perlin noise map]{\includegraphics[trim={10cm 1.5cm 14cm 1cm},clip,width=0.24\textwidth]{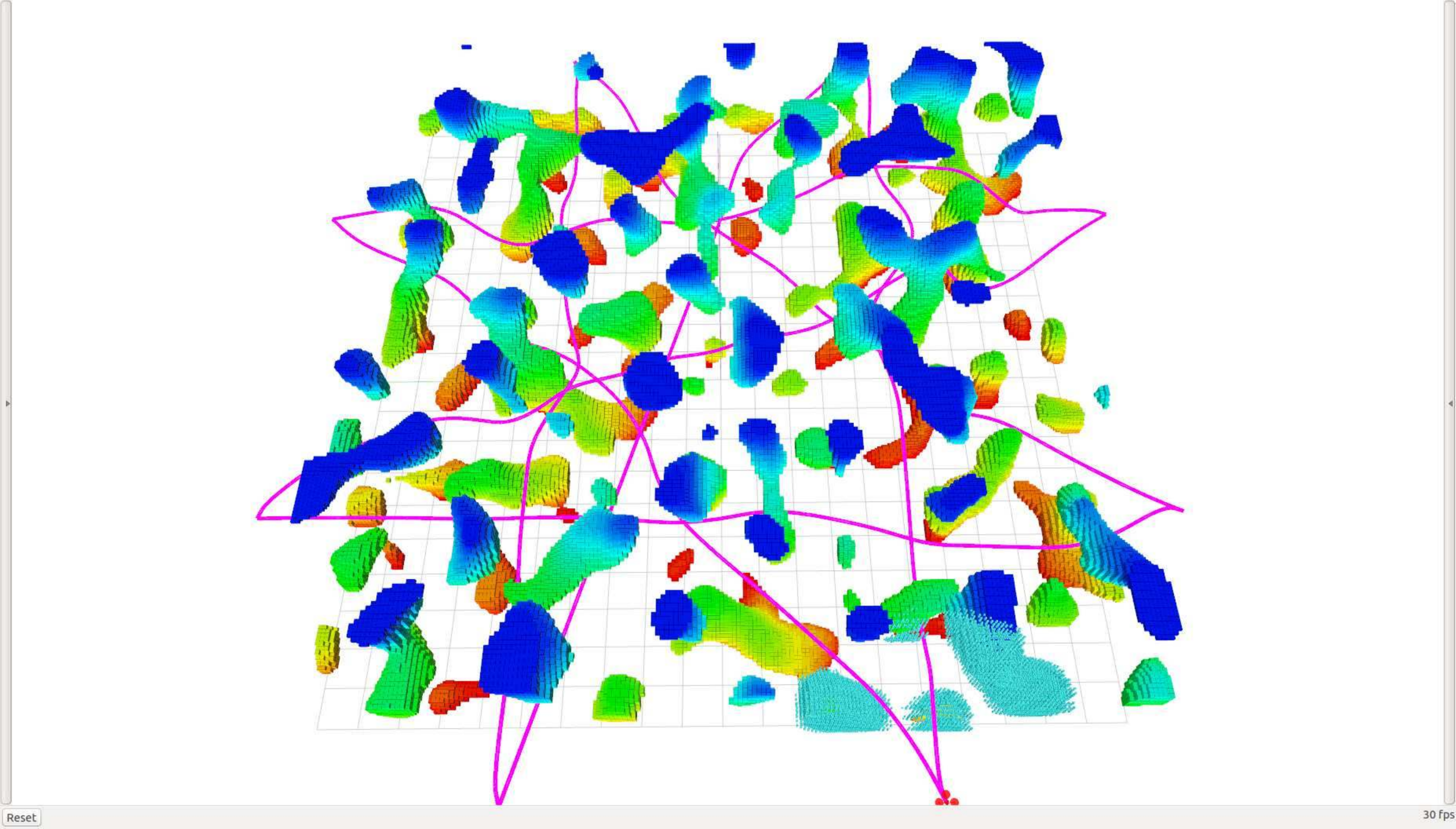}}
	\end{center}
	\caption{Illustration of our replanning system on different maps.\label{fig:runtime_sim}}
	\vspace{-0.7cm}
\end{figure}

\subsection{Comparison of the Replanning Framework}
In this section, we conduct a system comparison with the SMP method~\cite{liu2017smp}. We use SMP-U3 since SMP-U5 cannot work in real-time. Moreover, we conduct an ablation test, which excludes the initial-state aware EBK search and adopts the naive position-only A* search combined with EO local reshaping. We call the method for the ablation test A*-EO. The ablation test validates the critical role of the kinodynamic search in replanning.

We set up a challenging obstacle-cluttered 3-D complex simulation environment containing walls, 3-D steps (free space below) and pillars, as shown in Fig.~\ref{fig:replan_map}. The replanning strategy is choosing a local goal state on a given straight-line guiding path with a local replanning range of 5 $m$. The simulated quadrotor is equipped with a depth camera which has a sensing range of $4$ m. The maximum velocity and acceleration bound are set to 2 $m/s$ and 3.2 $m/s^2$ for each axis. For SMP-U3, we use a conservative bound with maximum acceleration 1.0 $m/s^2$ for each axis since SMP-U3 can only work with a narrow velocity and acceleration range. Using a large dynamic range for SMP-U3 will result in a very sparse primitive graph, which does not even contain one feasible solution. For the EBK search of our method, we use EBK-D1 with a $60\times 60 \times 20$ uniform grid. For the EO approach, 12 control points are refined as the window moves forward. We evaluate the replanning system from the trajectory statistics and time efficiency perspectives, as shown in Tab.~\ref{tab:simulation_replan}.
\begin{figure}[H]
	\begin{center}
		%trim left-down-right-up
		\includegraphics[trim={0cm 0cm 0cm 0cm},clip,width=0.35\textwidth]{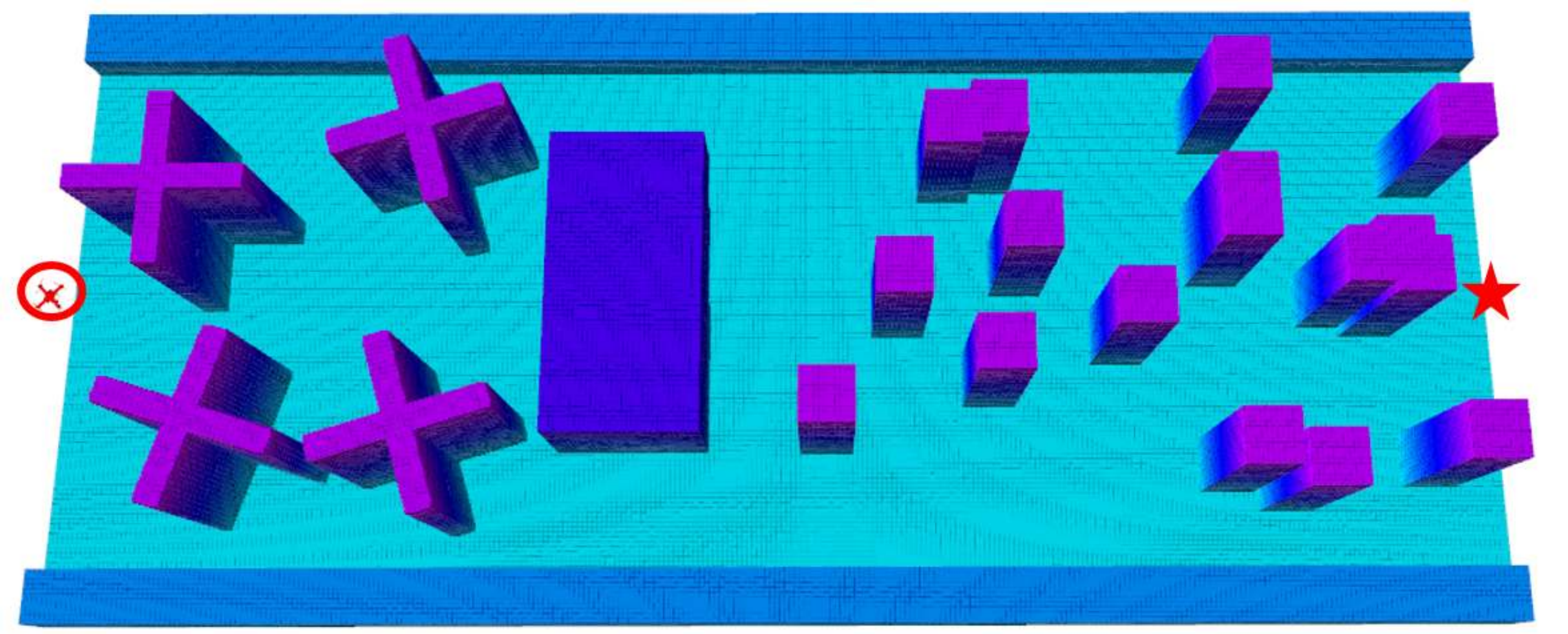}
	\end{center}
	\caption{Illustration of the simulated environment for benchmarking.\label{fig:replan_map}}
\end{figure}
\begin{figure*}[t]
	\begin{center}
		%trim left-down-right-up
		\subfigure[SMP-U3 (Conservative)\label{fig:replan_tuple_SMP}]{\includegraphics[trim={0cm 0cm 0cm 0cm},clip, width=0.30\textwidth]{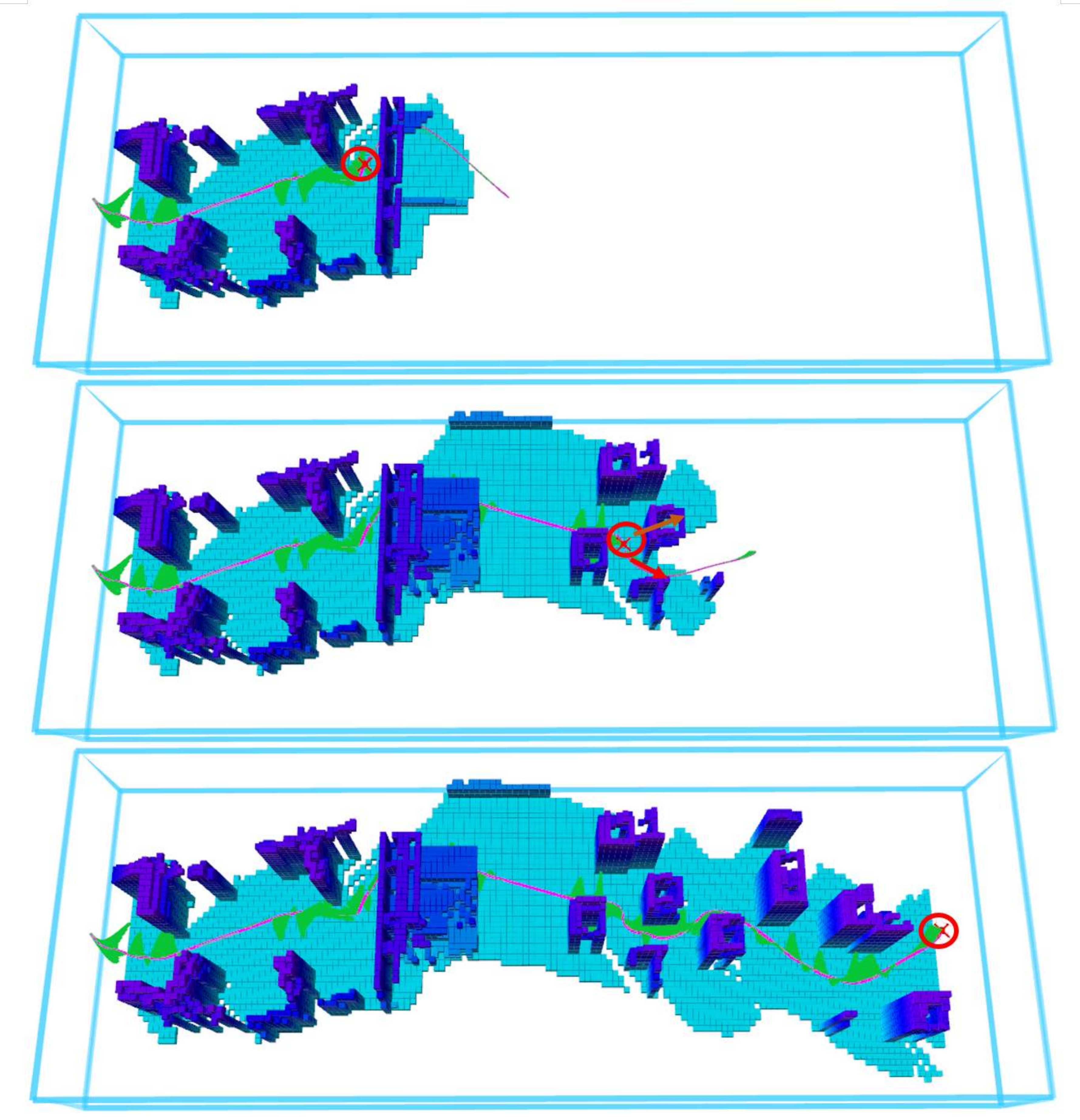}}
		\subfigure[A*-EO\label{fig:replan_tuple_astar} ]{\includegraphics[trim={0cm 0cm 0cm 0cm},clip,width=0.30\textwidth]{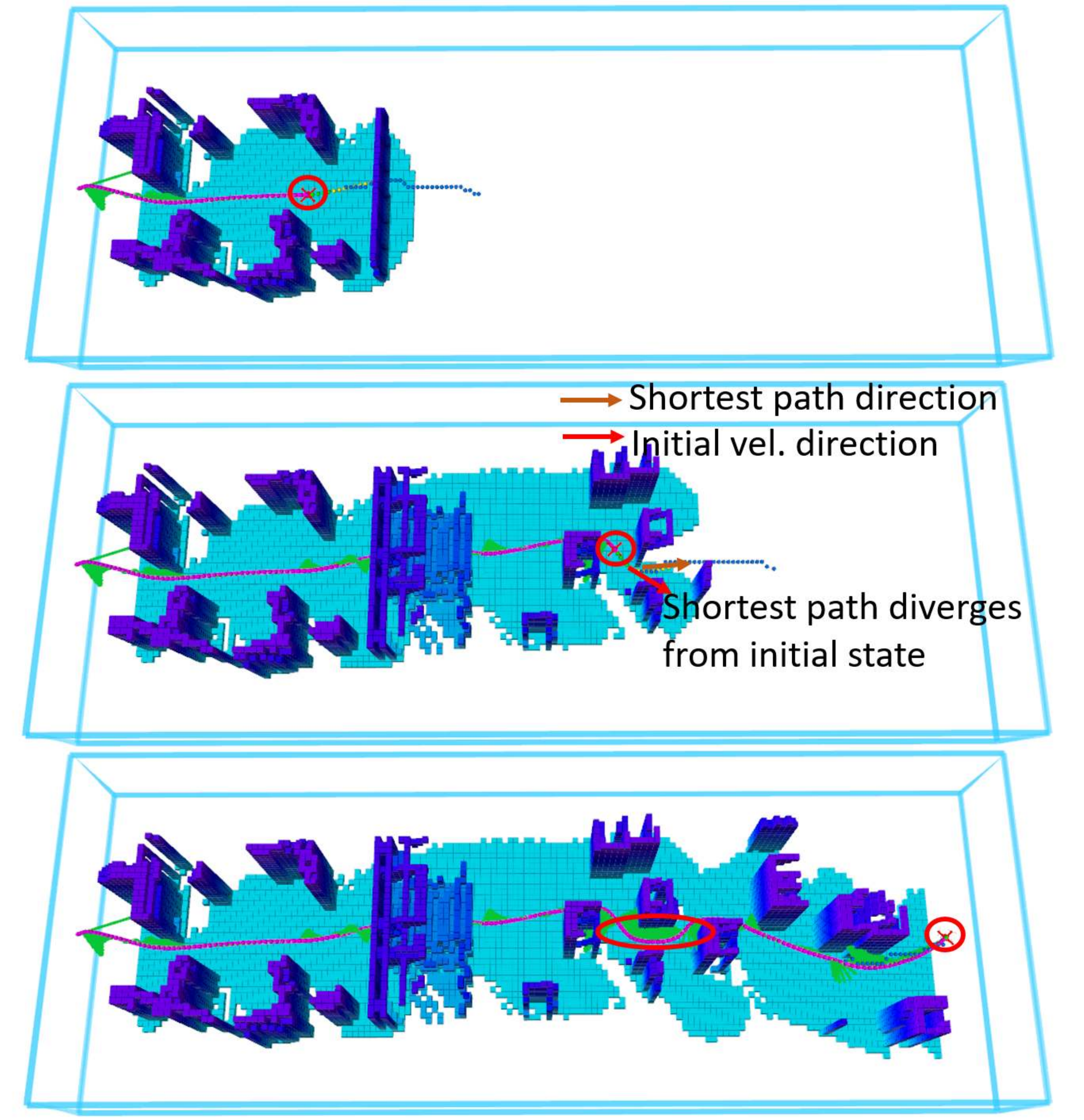}}
		\subfigure[Our method\label{fig:replan_tuple_our}]{\includegraphics[trim={0cm 0cm 0cm 0cm},clip,width=0.30\textwidth]{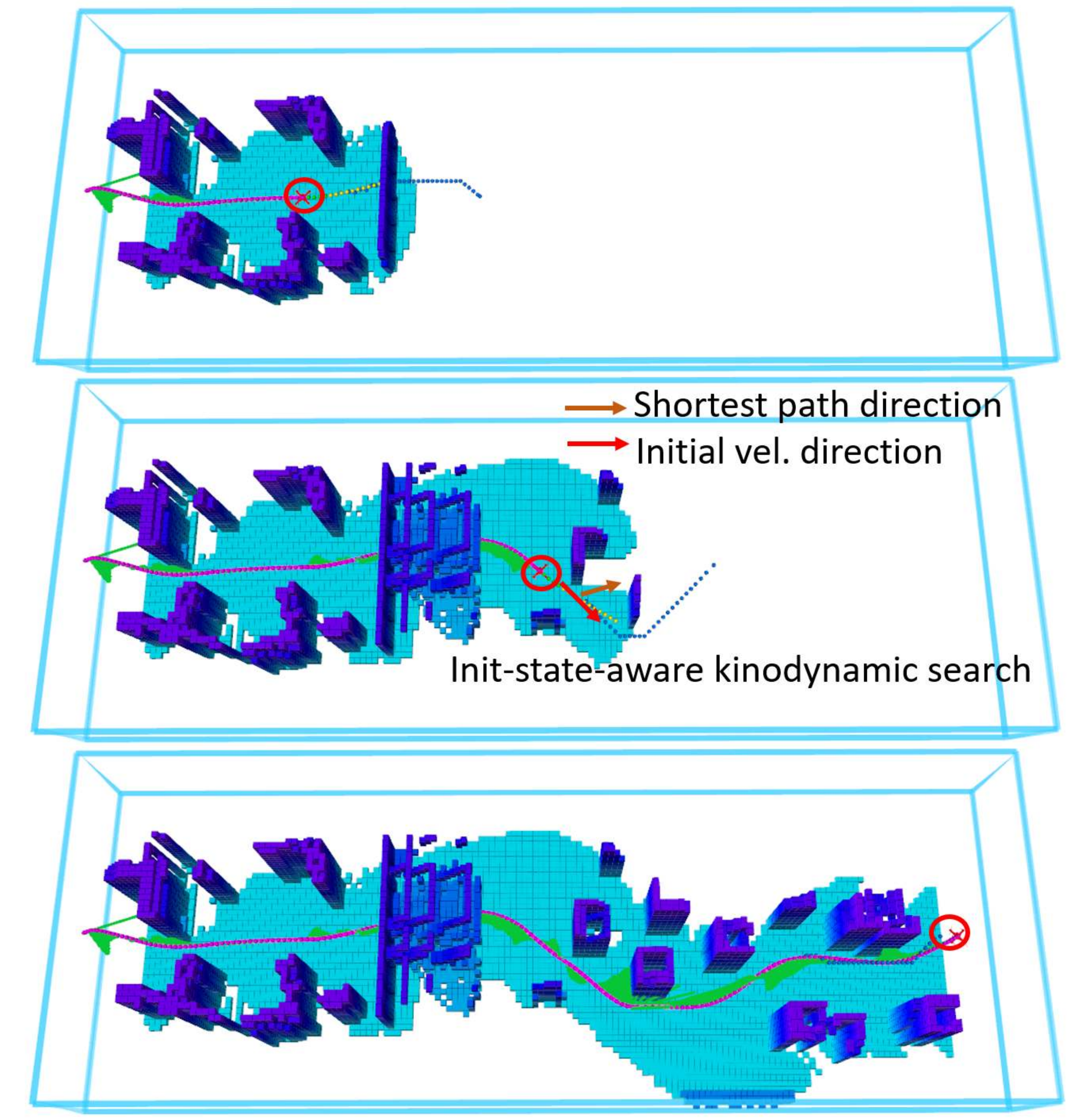}}
	\end{center}
	\caption{Illustration of different replanning methods in the same simulated environment. The trajectory is shown in \textit{purple}, and the acceleration profile is marked in \textit{green}. For the replanning cases where the shortest path direction is inconsistent with the initial state, we mark the initial velocity with a \textit{red arrow} and the shortest path direction in \textit{brown}.\label{fig:replan_tuple}}
\end{figure*}

Critical snapshots of the three methods are shown in Fig.~\ref{fig:replan_tuple}.
For SMP-U3 (Conservative) in Fig.~\ref{fig:replan_tuple_SMP}, when the quadrotor observes the 3-D step, it chooses to circle around instead of directly flying through the space below since the latter action requires a large acceleration range. This phenomenon demonstrates that SMP-U3 (Conservative) sacrifices maneuverability due to the restriction of the dynamic range. SMP-U3 takes the initial state into account, as shown in the middle snapshot in Fig.~\ref{fig:replan_tuple_SMP}.
The necessity of using the kinodynamic search instead of the position-only A* search is identified by the totally different maneuvers in Fig.~\ref{fig:replan_tuple_astar} and Fig.~\ref{fig:replan_tuple_our}. When the quadrotor enters the region of the pillars, it has two distinct choices, namely, ``pass on the left'' or ``pass on the right''. For the A*-EO method, as shown in the middle snapshot in Fig.~\ref{fig:replan_tuple_astar}, the quadrotor tends to choose the direction purely based on the shortest path. So there are cases where the quadrotor is passing in one direction and suddenly switches to the opposite direction due to finding a new shortest path, which results in an inconsistent and non-smooth replanning trajectory. Compared to A*-EO, the kinodynamic EBK search provides an initial-state-aware trajectory for the local reshaping, as shown in Fig.~\ref{fig:replan_tuple_our}. With the EBK search, the overall trajectory is clearly more natural and the replanning is more consistent.

As shown in Tab.~\ref{tab:simulation_replan}, SMP-U3 (Conservative) is efficient and has an average computation time of $0.010$ s. However, since the acceleration bound is conservative, the maneuverability is sacrificed and the trajectory duration is $33.3$ s, longer than the other two methods. Note that we use acceleration-controlled SMP~\cite{liu2017smp} with unconstrained QP~\cite{richter2016polyunqp} reparameterization. Since the unconstrained QP has no dynamical feasibility guarantee, the maximum acceleration is 1.3 $m/s^2$, which exceeds its designed maximum acceleration (1 $m/s^2$). Compared to SMP-U3 and A*-EO, our method has the lowest total jerk cost and the improvement is achieved by incorporating the kinodynamic search. The average velocity of our method is 1.37 $m/s$, slightly higher than the A*-EO, due to the fact that the kinodynamic search can reduce sharp decelerations. The maximum acceleration of our method is 3.18 $m/s^2$, which obeys the dynamical feasibility constraint.
Compared to the position-only A*-EO method, the jerk cost is reduced by $10\,\%$ by using the kinodynamic search. Considering that the advantages of the EBK search are outstanding for part of the trajectory where potential inconsistency exists, this quantitative improvement still faithfully identifies the gain of using the kinodynamic search. Note that navigating through this challenging environment with enough agility already requires considerable control efforts, and the $10\,\%$ cost reduction represents a reasonable overall improvement.

\begin{table}[t]
	\vspace{+0.5cm}
	\centering
	\caption{Performance of different replanning systems}
	\label{tab:simulation_replan}
	\resizebox{\columnwidth}{!}{
		\begin{tabular}{cccccccc} \toprule
			\multirow{3}{*}{Method} & \multicolumn{4}{c}{\textbf{Trajectory Statistics}} & \multicolumn{3}{c}{\textbf{Time Efficiency}(s)} \\
			\cline{2-8}
			&Traj.      & Jerk Cost  & Mean	Vel.	   & Max Acc.          & 	\multirow{2}{*}{Ave} & \multirow{2}{*}{Max} & \multirow{2}{*}{Std} \\
			&Dura.($s$) &  ($m^2/s^5$)  &  ($m/s$) & ($m/s^2$) &                       &                      &  \\
			\midrule
			\multicolumn{1}{c}{SMP-U3 (Conservative)} & 33.3 & 956.3 & 1.14  & \textbf{1.3}   & \textbf{0.010} & 0.027  & 0.004 \\
			\hline
			\multicolumn{1}{c}{A*-EO}              & 25.2 & 723.0 & 1.31  & 3.18  & 0.014 & 0.062  & 0.011 \\
			\hline
			\multicolumn{1}{c}{Our method}            & \textbf{24.4} & \textbf{648.1} & \textbf{1.37}  & 3.18  & 0.019 & 0.080  & 0.012  \\
			\bottomrule
		\end{tabular}
	}
	\vspace{-2.5cm}
\end{table}

\section{Experimental Results}\label{sec:experimental}
We conduct onboard experiments\footnote{\url{https://www.youtube.com/watch?v=sg46XT9-o1k}} with the two vision-based testbeds to show the general applicability of the proposed framework. For onboard testing, the parameters are as follows: the time step $\Delta_t$ is set to $0.35\,s$; the maximum velocity and maximum acceleration are set to 1.2 $m/s$ and 2.0 $m/s^2$, respectively;~\footnote{The speed limit and acceleration limit are set to be slightly conservative considering the perception delay in onboard experiments.}
and the local planning range is set to $10 m \times 6 m \times 1.1 m$. (The corresponding grid size is $55 \times 35 \times 6$.)

\subsection{Indoor Replanning Performance}
\subsubsection{Monocular-vision-based indoor navigation}
As shown in Fig.~\ref{fig:onboard_replan_mono}, our replanning system works in complex 3-D environments with only a local map. The quadrotor is commanded to navigate to a 3-D position where the environment is previously unknown. The whole trajectory and the final accumulated map is shown in Fig.~\ref{fig:onboard_replan_global}. The trajectory length of the final trajectory is $18.6\,m$ and total trajectory execution time is 43.4 $s$. The average velocity of the quadrotor is 0.45 $m/s$ with a maximum velocity of 0.79 $m/s$. The maximum acceleration of the trajectory is 0.58 $m/s^2$, the whole trajectory is dynamically feasible, and there are a total $125$ calls of the EBK search (active mode), with an average computation time of 0.010 $s$. There are $105$ calls of EO. The average computation times of the elastic tube expansion and the optimization are 0.001 $s$ and 0.031 $s$, respectively.
\begin{figure}
	\begin{center}
		%trim left-down-right-up
		\subfigure{\includegraphics[trim={0cm 0cm 0cm 0cm},clip, width=0.21\textwidth]{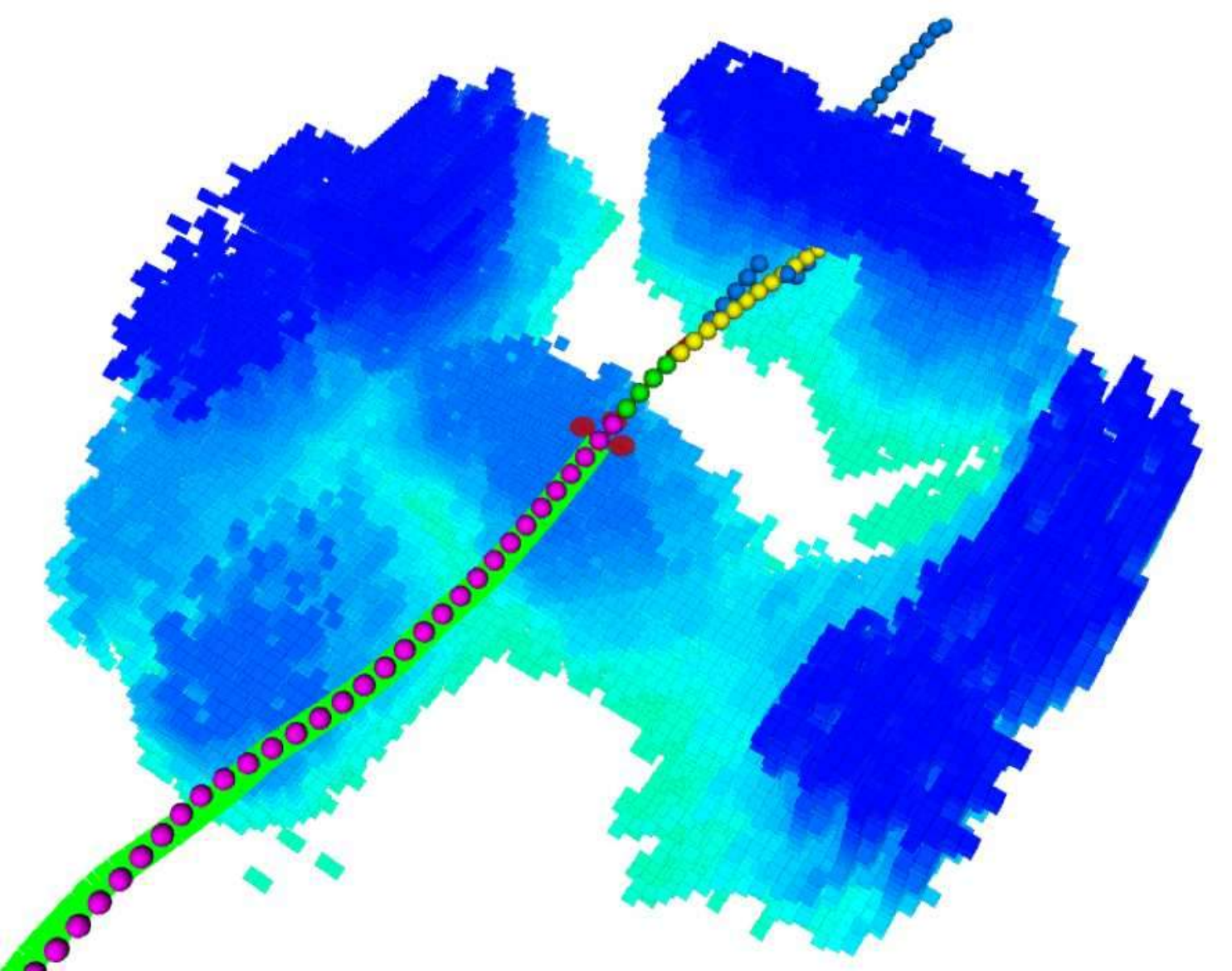}}
		\subfigure{\includegraphics[trim={5cm 0cm 0cm 0cm},clip, width=0.23\textwidth]{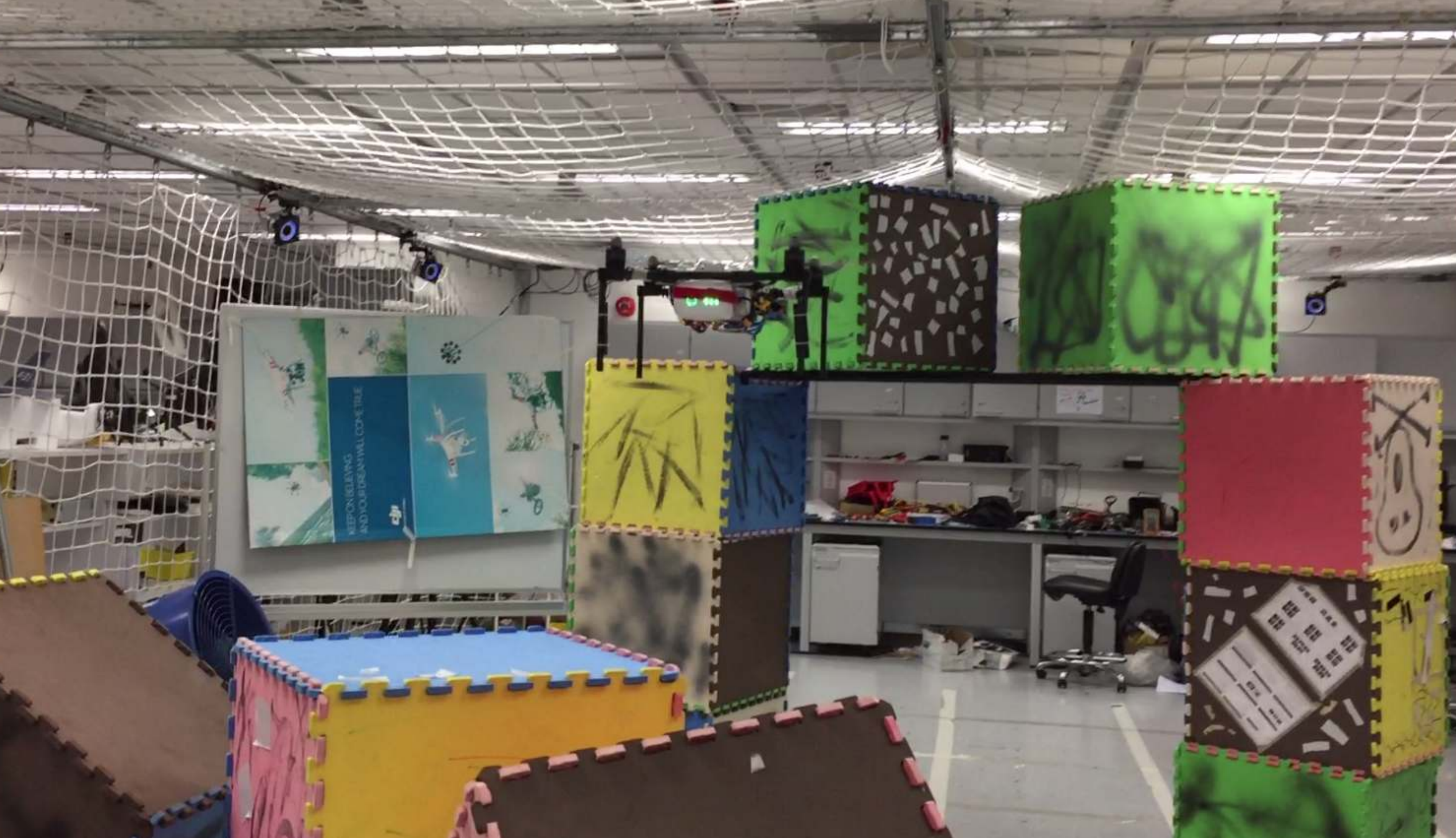}}
	\end{center}
	\caption{Illustration of the snapshot of the indoor replanning with the monocular vision. In (a), the control points found by EBK (\textit{blue}), executed control points (\textit{pink}), committed control points (\textit{green}), and control points under optimization (\textit{yellow}) are marked. The corresponding indoor environment is shown in (b).\label{fig:onboard_replan_mono}}
	\vspace{-0.3cm}
\end{figure}

\begin{figure}
	\centering
	\includegraphics[width=0.32\textwidth]{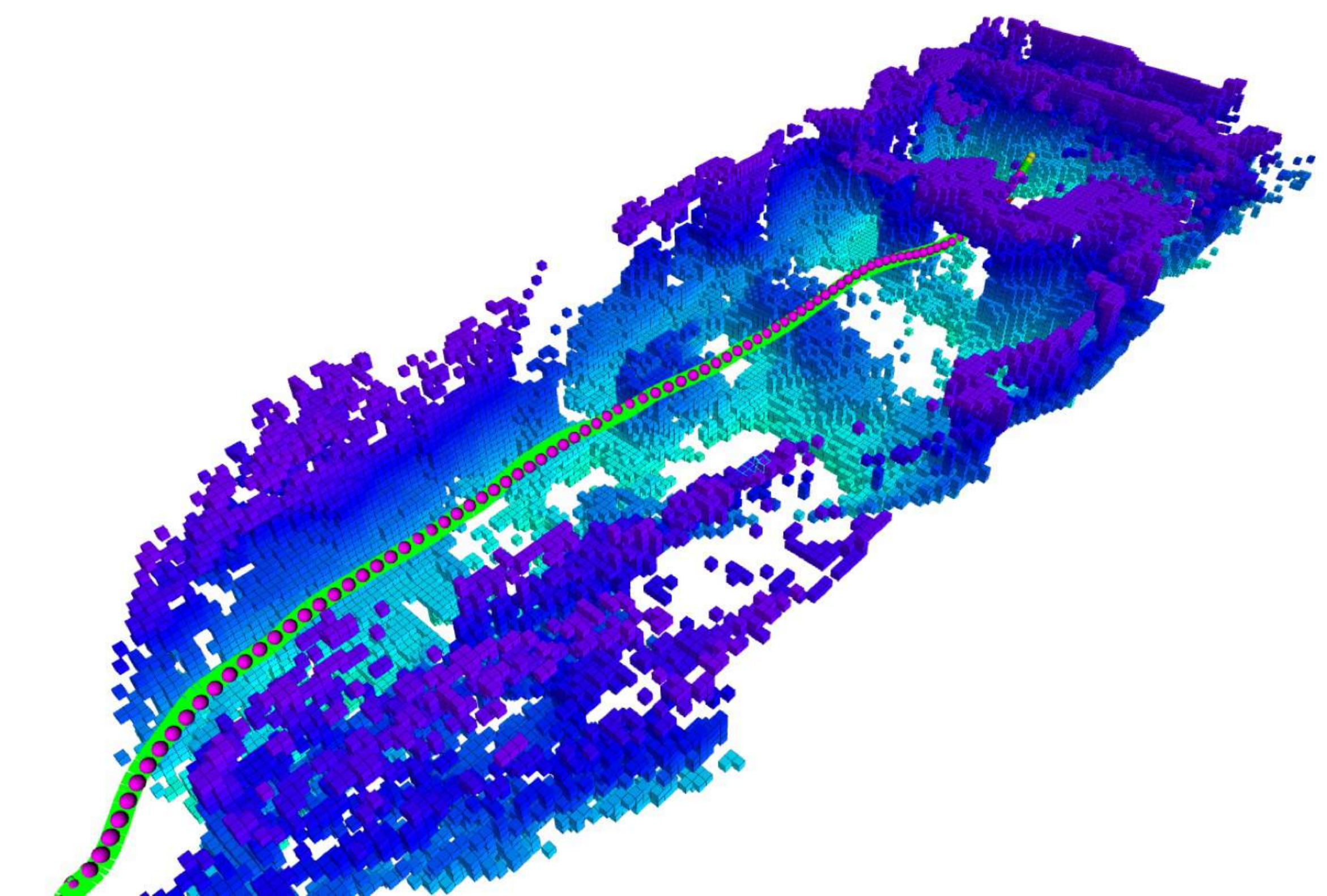}
	%\vspace{+0.1cm}
	\caption{Illustration of the whole trajectory and final accumulated map for the indoor replanning using the monocular perception system.}\label{fig:onboard_replan_global}
	\vspace{-0.3cm}
\end{figure}

\subsubsection{Dual-fisheye-based indoor round-trip navigation}
\begin{figure}
	%\vspace{-0.15cm}
	\begin{center}
		%trim left-down-right-up
		\subfigure{\includegraphics[trim={0cm 0cm 0cm 0cm},clip, width=0.21\textwidth]{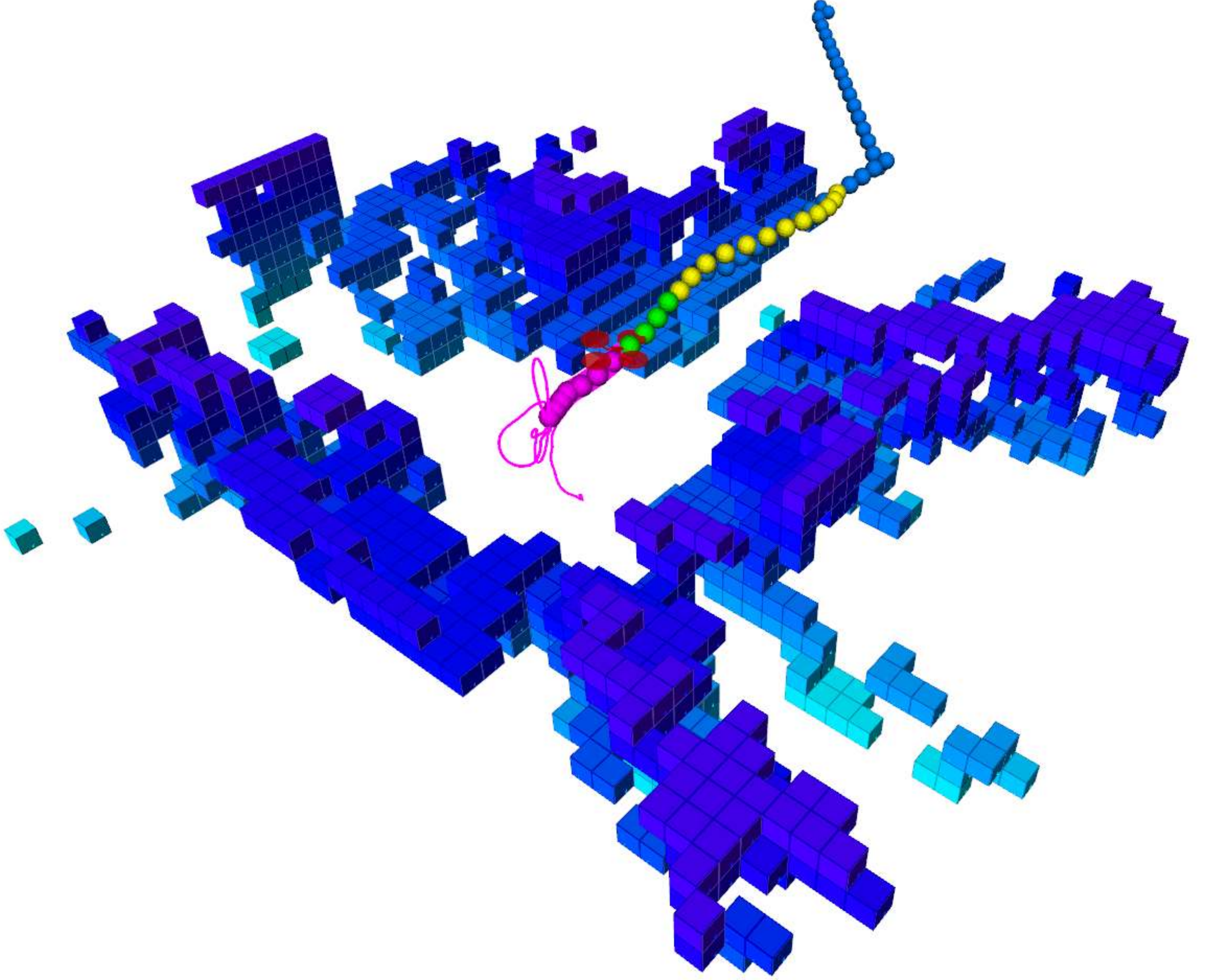}}
		\subfigure{\includegraphics[trim={0cm 0cm 0cm 0cm},clip, width=0.23\textwidth]{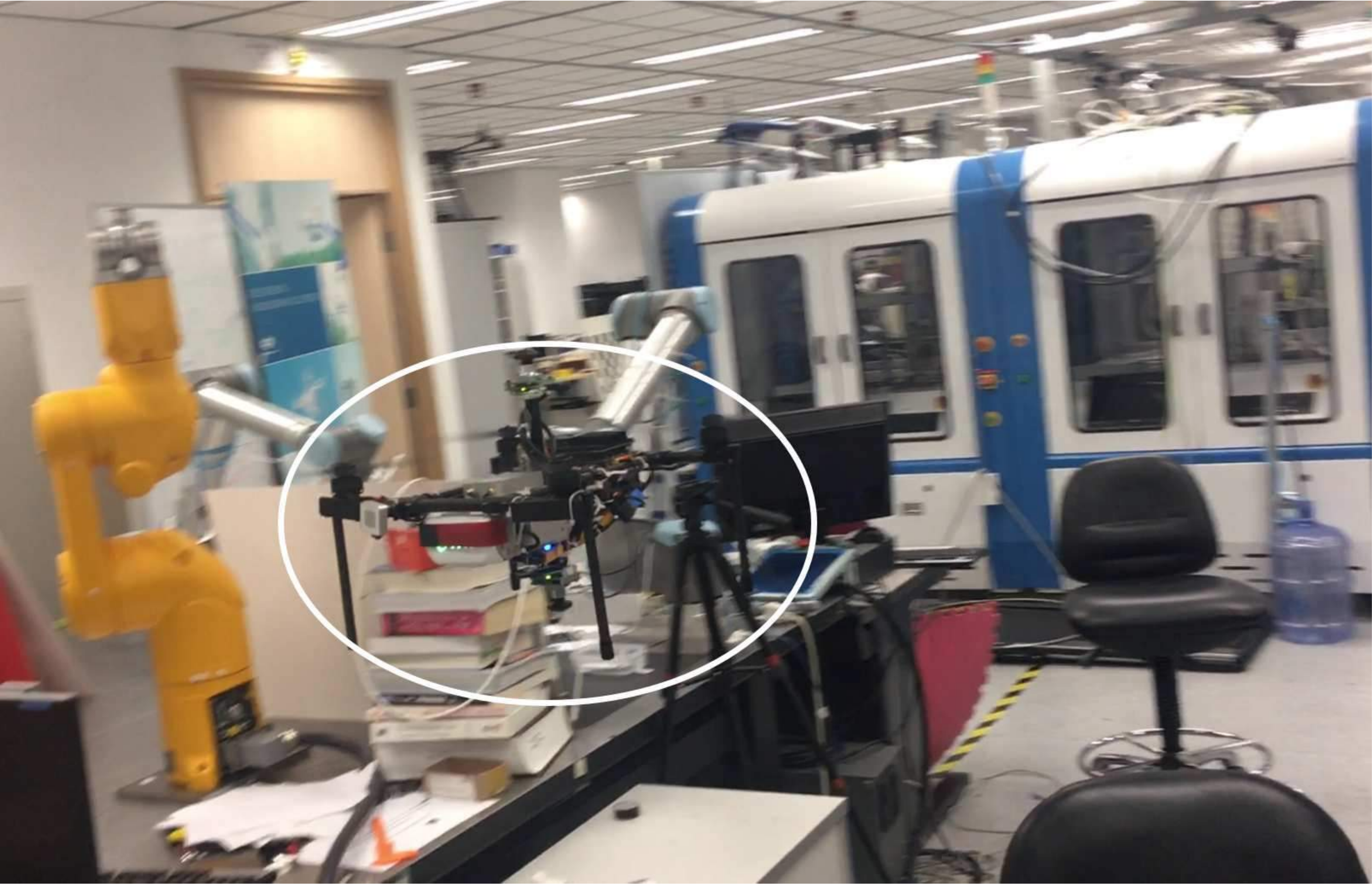}}
	\end{center}
	\caption{Illustration of the snapshot of the indoor replanning with the omnidirectional vision. With a fixed yaw angle, the obstacles around the quadrotor can be mapped.\label{fig:onboard_replan_omi}}
\end{figure}

\begin{figure}
	\centering
	\includegraphics[width=0.47\textwidth]{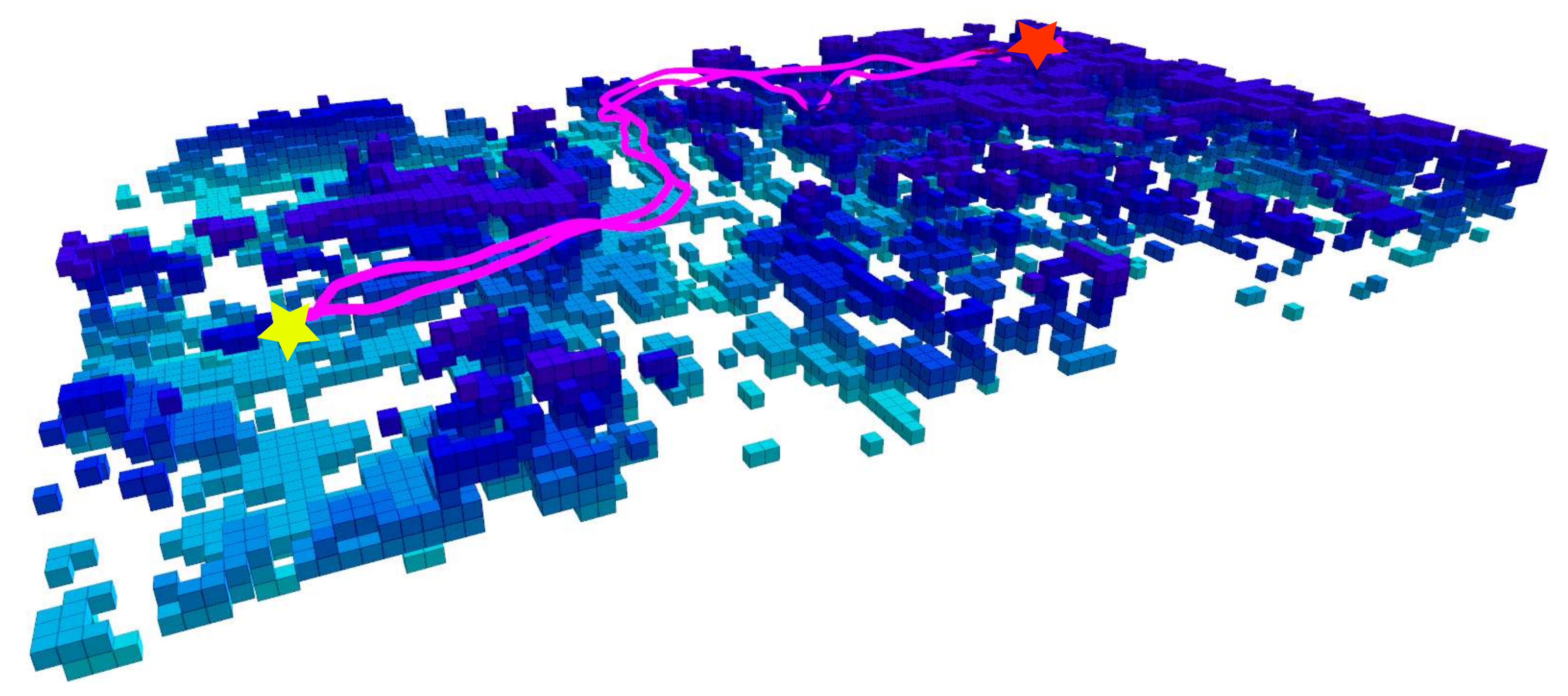}
	\caption{Illustration of the whole trajectory for the replanning using the dual-fisheye omnidirectional perception system. Unlike the experiments using the monocular testbed, we can achieve round-trip flight in an unknown complex indoor environment.}\label{fig:omi_indoor}
\vspace{-0cm}
\end{figure}
As shown in Fig.~\ref{fig:onboard_replan_omi} and Fig.~\ref{fig:omi_indoor}, with omnidirectional perception, our quadrotor testbed is able to fly a round-trip without controlling the yaw angle. Online mapping using the dual-fisheye cameras is challenging due to the high distortion of the images acquired from the fisheye cameras. Although the uncertainty of the map is larger than the monocular case, our replanning system is still able to robustly avoid the unexpected obstacles and navigate in the unstructured cluttered environment.
\subsection{Outdoor Replanning Performance}
As shown in Fig.~\ref{fig:onboard_plan_outdoor}, we demonstrate outdoor experiments using the monocular vision testbed. For Fig.~\ref{fig:outdoor_pavilion}, the trajectory length is $19.6\,m$ and total execution time is $41.3\,s$. The average velocity of the quadrotor is $0.49\,m/s$. The maximum acceleration of the trajectory is $1.06\,m/s^2$, and the whole trajectory is dynamically feasible. There are a total $35$ calls of EBK search (passive mode) with an average computation time of $0.025\,s$. The average computation times of the elastic tube expansion and optimization are $0.001\,s$ and $0.030\,s$, respectively. For the experiment shown in Fig.~\ref{fig:outdoor_trees}, the performance and trajectory statistics are similar.
\begin{figure}[t]
	\begin{center}
	\vspace{-0.2cm}
		%trim left-down-right-up
		\subfigure[\label{fig:outdoor_pavilion}]{\includegraphics[trim={0cm 0cm 0cm 0cm},clip,width=0.2\textwidth]{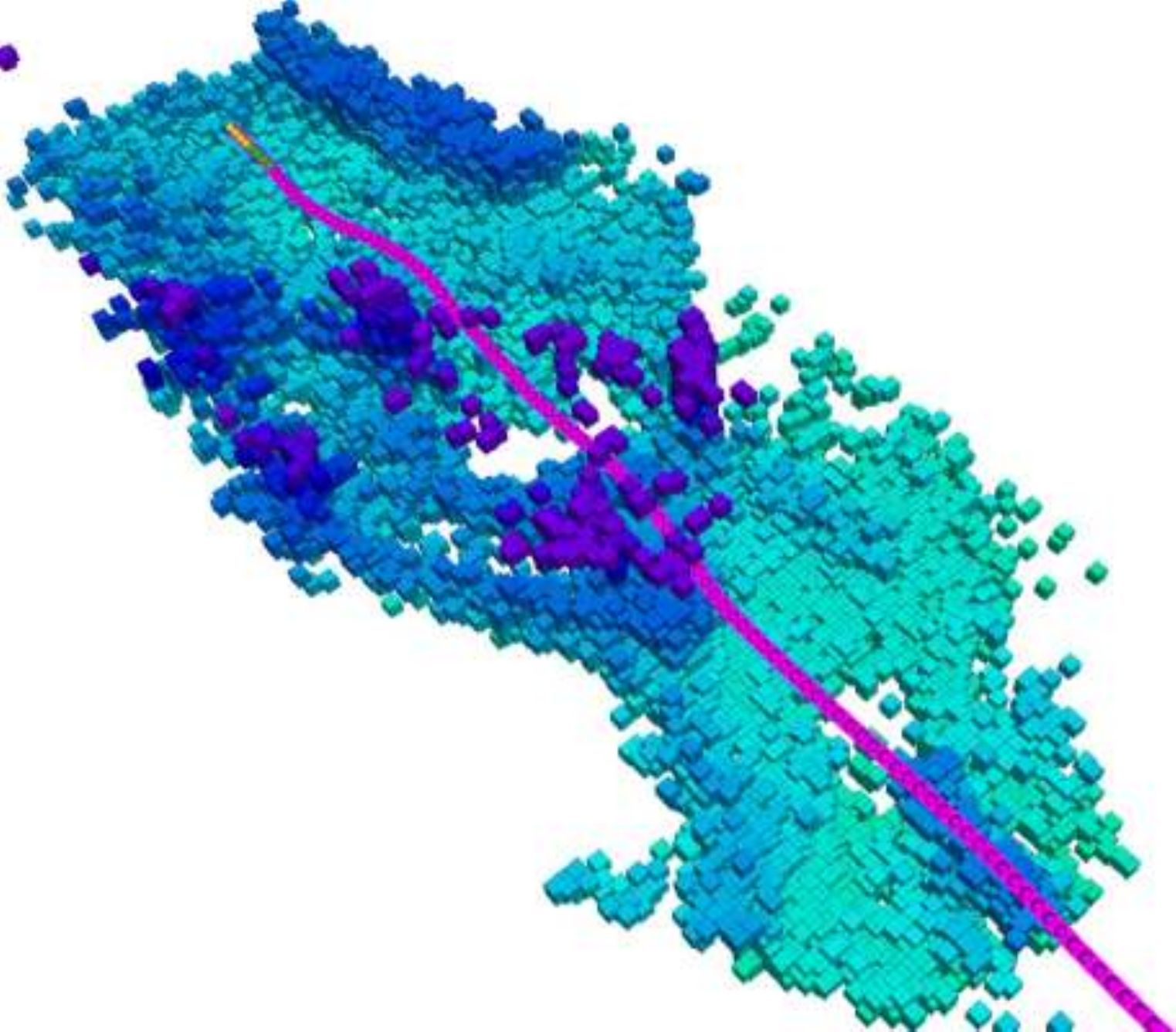}}
		\subfigure[\label{fig:outdoor_trees}]{\includegraphics[trim={0cm 0cm 0cm 0cm},clip,width=0.2\textwidth]{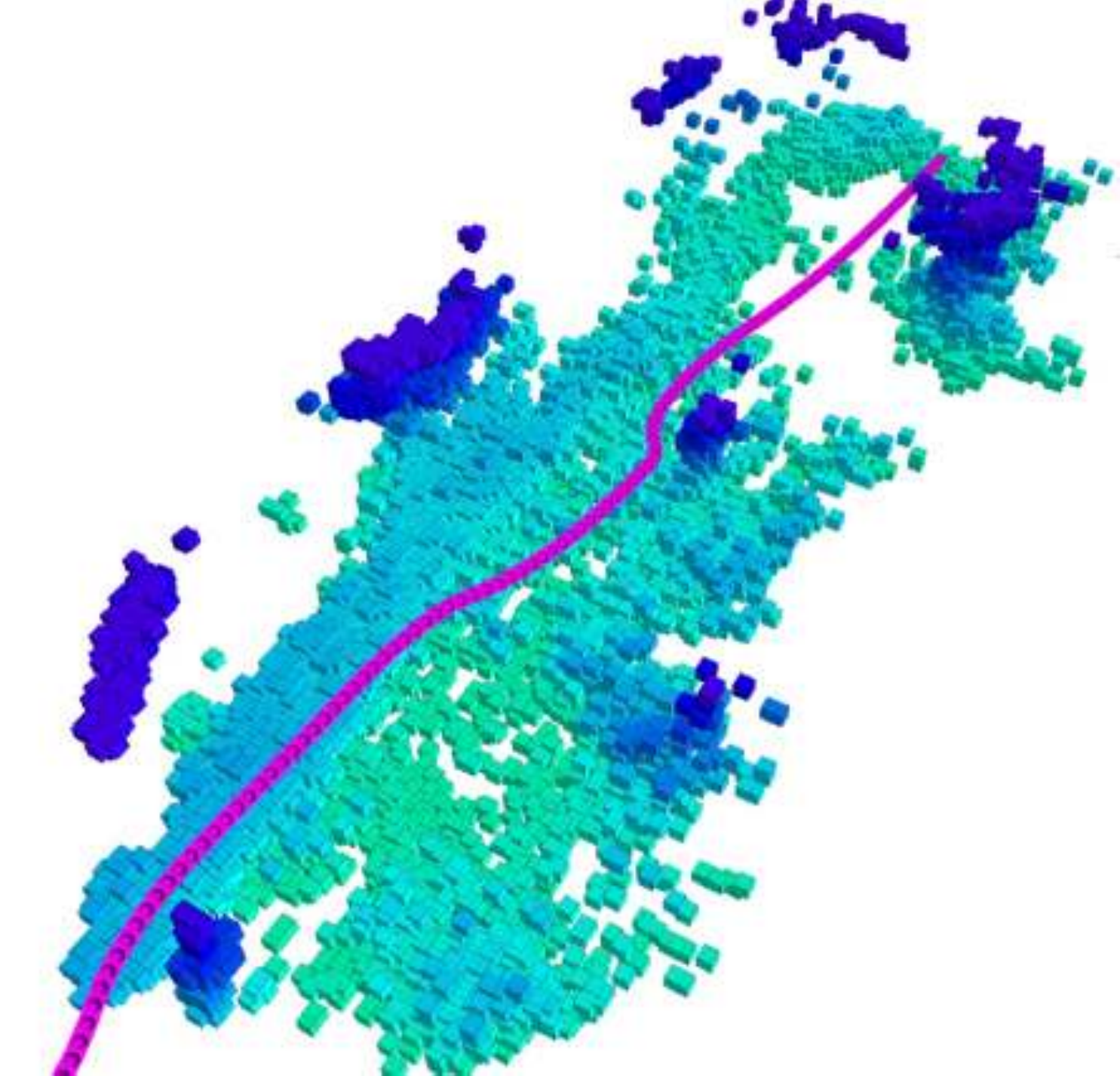}}
	\end{center}
	\caption{Illustration of the outdoor experiments using the monocular vision: A flight through a pavilion is shown in (a) (part of the map on the top is cut for visualization purposes); a flight avoiding trees is shown in (b).\label{fig:onboard_plan_outdoor}}
\end{figure}

\section{Conclusion and Future Work}\label{sec:conclusion}
In this paper, we present an efficient kinodynamic replanning framework by exploiting the advantageous properties of the B-spline.
The proposed EBK search algorithm is flexible and provides a user-specified parameter which can be used to control the algorithm efficiency and solution quality. The problem of the B-spline-based kinodynamic search on a spatial grid is characterized in detail, and the theoretical performance of the EBK search is analyzed. To compensate for the discretization, we propose an elastic optimization process. We combine the two components into a receding horizon framework.
Detailed analysis and comprehensive experiments are carried out to validate the performance. Systematic comparisons against the state-of-the-art are provided to verify the claims. The replanning framework is efficient and complete, and can be used in various kinds of exploration tasks and different kinds of quadrotor testbeds. The current limitation of the framework lies in the fact that for EBK search we are using the uniform B-spline on the spatial grid, which will result in limited B-spline patterns. In the future, we expect to explore NURBS for kinodynamic search, which will allow for various motion patterns in the kinodynamic search.

\appendix
\subsection{Proof of Prop.~\ref{prop:linear_derivative_bound}}\label{sec:appendix_prop_bound}
\noindent The correctness of the proposition follows from the fact that the derivative of the B-spline of degree $k$ is another B-spline of degree $k-1$, which enjoys the convex hull property. Specifically, for the $l$-th derivative, let $\mathbf{C}_l$ map the basis $\mathbf{b}$ to the derivatives; i.e., $\frac{d\mathbf{b}}{d^l u} = \mathbf{C}_l \mathbf{b}$. It follows that
\begin{equation}
	\frac{d \mathbf{s}_j(u)}{d^{l}u} = \frac{1}{ {(\Delta_t)}^l} \frac{d \mathbf{b}^{\intercal}}{d^l u} \mathbf{M}_k \mathbf{P}_j = \frac{1}{ {(\Delta t)}^l} \mathbf{b}^{\intercal} \mathbf{C}^{\intercal}_l \mathbf{M}_k \mathbf{P}_j.
\end{equation}

Plug in $\mathbf{S}_l = \mathbf{M}_k^{-1} \mathbf{C}_l \mathbf{M}_k /(\Delta_t) ^{l}$. It follows that
\begin{equation}
	\frac{d \mathbf{s}_j(u)}{d^{l}u} = \mathbf{b}^{\intercal} \mathbf{M}_k (\mathbf{S}_j \mathbf{P}_j) ,
\end{equation}
where $\mathbf{S}_j \mathbf{P}_j$ is the control point span of the derivative spline. By applying the convex hull property, we complete the proof.

\subsection{Relationship between $\mathcal{G}$ and $\mathcal{G}_H$}\label{sec:appendix_graph_relation}
\begin{lemma}\label{lemma:lift_uniqueness1}
Given the initial state $\pi_s$ and goal state $\pi_g$, let $\pi=(v_0, v_1, \ldots, v_T)$ be an admissible control point placement of Problem~\ref{prob:optimal_search}. The extended sequence $\tilde{\pi}$ uniquely corresponds to an admissible path $\Phi = (\hat{v}_0, \hat{v}_1, \ldots, \hat{v}_Q)$ on the graph $\mathcal{G}_H$, with $\hat{v}_0 = \pi_s$, $\hat{v}_Q = \pi_g$ and $Q=K+T+2$.
\end{lemma}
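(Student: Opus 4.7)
My plan is to construct the path $\Phi$ explicitly from $\tilde{\pi}$ using the tuple-windowing already introduced in Sect.~\ref{sec:problem_formulation}, and then verify admissibility, endpoint matching, and length in turn. Concretely, let $\tilde{\pi}=(w_0, w_1, \ldots, w_{2k+T+2})$ where $w_0,\ldots,w_k$ are the entries of $\pi_s$, $w_{k+1},\ldots,w_{k+T+1}$ are $v_0,\ldots,v_T$, and $w_{k+T+2},\ldots,w_{2k+T+2}$ are the entries of $\pi_g$. Define $\hat{v}_i := (w_i, w_{i+1}, \ldots, w_{i+k})$ for $i=0,1,\ldots,k+T+2$, i.e., the sliding $(k+1)$-window of $\tilde{\pi}$. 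This immediately fixes $Q=k+T+2$ and yields the endpoint identifications $\hat{v}_0=\pi_s$ and $\hat{v}_Q=\pi_g$ by definition.

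Next I would verify that each $\hat{v}_i$ is a legitimate vertex of $\mathcal{G}_H$. By the construction of $\mathcal{G}_H$ in Sect.~\ref{sec:search_algorithm}, this amounts to showing that $(w_{j-1}, w_j)\in E$ for each $j=i+1,\ldots,i+k$. This follows from three facts already encoded in the problem setup: $\pi_s$ and $\pi_g$ are themselves tuples of vertices drawn from $V$ with the same adjacency discipline (they are B-spline control point spans on the grid); $\pi$ is admissible in the sense of Sect.~\ref{sec:problem_formulation}, giving $(v_{i-1},v_i)\in E$; and the interface conditions $(v_s^k, v_0)\in E$ and $(v_T, v_g^0)\in E$ close the gaps between $\pi_s$--$\pi$ and $\pi$--$\pi_g$. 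Thus every consecutive pair of entries of $\tilde{\pi}$ is an edge of $\mathcal{G}$, so every length-$(k+1)$ window of $\tilde{\pi}$ lies in $V_H$.

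For the edges of $\Phi$, I would then observe that $\hat{v}_i$ and $\hat{v}_{i+1}$ share the vertices $w_{i+1},\ldots,w_{i+k}$ by construction, so the last $k$ entries of $\hat{v}_i$ coincide with the first $k$ entries of $\hat{v}_{i+1}$. This is exactly the adjacency condition for $E_H$, so $\Phi$ is an admissible path. The length count is routine: $\tilde{\pi}$ has $2k+T+3$ entries, giving $(2k+T+3)-(k+1)+1 = k+T+3$ windows, i.e., $Q+1 = k+T+3$, confirming $Q=k+T+2$.

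The only subtle point is \emph{uniqueness}: I must argue that the map $\tilde{\pi}\mapsto \Phi$ is well-defined (no arbitrary choices were made) and that distinct admissible $\tilde{\pi}$ produce distinct $\Phi$. Well-definedness is immediate from the sliding-window formula. For injectivity, note that $\tilde{\pi}$ can be recovered from $\Phi$ by reading off the first entry of each $\hat{v}_i$ for $i=0,\ldots,Q-1$ followed by all $k+1$ entries of $\hat{v}_Q$, which uniquely reconstructs the sequence of grid cells. I expect this reconstruction argument to be the only step that needs care, since it is where one must rule out the possibility that two different $\tilde{\pi}$'s could yield the same tuple sequence; the sliding-window structure makes this a one-line inverse but it is worth stating explicitly so that the subsequent equivalence of Problem~\ref{prob:optimal_search} with a shortest path problem on $\mathcal{G}_H$ rests on a genuine bijection between admissible placements and admissible paths with fixed endpoints.
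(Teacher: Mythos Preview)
Your proposal is correct and follows exactly the sliding-window construction the paper uses; the paper itself states this lemma in Appendix~\ref{sec:appendix_graph_relation} without a formal proof, relying instead on the informal description and Fig.~\ref{fig:graph_construction} in Sect.~\ref{sec:search_algorithm}. Your write-up makes explicit the adjacency checks at the $\pi_s$--$\pi$ and $\pi$--$\pi_g$ interfaces and the injectivity argument via reading off first entries, which the paper leaves to the reader, so there is nothing to add.
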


\begin{lemma}\label{lemma:lift_uniqueness2}
Any admissible path $\Phi = (\hat{v}_0, \hat{v}_1, \ldots, \hat{v}_Q)$ on the graph $\mathcal{G}_H$ which satisfies $\hat{v}_0=\pi_s$, $\hat{v}_Q=\pi_g$ and $(\hat{v}_{j-1},\hat{v}_j)\in E_H$ for $j=1,\ldots,Q$ uniquely corresponds to an admissible control point placement $\pi$ of Problem~\ref{prob:optimal_search}.
\end{lemma}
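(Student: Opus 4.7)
The plan is to construct the admissible control point placement $\pi$ directly from the path $\Phi$ by reading off the single new vertex contributed by each successive tuple, and then to verify that the resulting sequence satisfies every admissibility requirement of Problem~\ref{prob:optimal_search}. The key structural fact I would exploit is the adjacency rule in $\mathcal{G}_H$: by construction, $(\hat{v}_{j-1},\hat{v}_j)\in E_H$ forces the last $k$ vertices of $\hat{v}_{j-1}$ to coincide with the first $k$ vertices of $\hat{v}_j$, so consecutive tuples overlap in exactly $k$ vertices.

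Writing $\hat{v}_j = (w_j^0, w_j^1, \ldots, w_j^k)$, the overlap rule gives $w_j^i = w_{j-1}^{i+1}$ for $i=0,\ldots,k-1$, meaning each tuple $\hat{v}_j$ with $j\ge 1$ contributes exactly one new vertex, namely $w_j^k$. Starting from $\hat{v}_0 = \pi_s = (v_s^0, \ldots, v_s^k)$ and appending the new last-vertices $w_1^k, w_2^k, \ldots, w_Q^k$, I would form the concatenated sequence $\tilde{\pi} = (v_s^0, \ldots, v_s^k, w_1^k, \ldots, w_Q^k)$ of total length $k+1+Q$. Enforcing the terminal condition $\hat{v}_Q = \pi_g = (v_g^0, \ldots, v_g^k)$ forces the final $k+1$ entries of $\tilde{\pi}$ to be precisely $(v_g^0, \ldots, v_g^k)$, so I can split off the interior segment $\pi = (v_0, v_1, \ldots, v_T)$ with $T$ determined by the length identity $Q = k + T + 2$. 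By inspection this is exactly the extended-sequence decomposition from Section~\ref{sec:problem_formulation}.

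Next I would check the admissibility conditions one by one. Each tuple $\hat{v}_j \in V_H$ is itself obtained in Section~\ref{sec:search_algorithm} by concatenating $k+1$ vertices of $\mathcal{G}$ that are pairwise adjacent, so every consecutive pair inside $\tilde{\pi}$ (hence inside $\pi$) lies in $E$. The boundary adjacencies $(v_s^k, v_0)\in E$ and $(v_T, v_g^0)\in E$ drop out of the internal adjacency of $\hat{v}_1$ and $\hat{v}_{Q-1}$ respectively. Collision-freeness and dynamical feasibility are preserved tuple-by-tuple because the $k$-degree tuples induced by $\tilde{\pi}$ are precisely $\hat{v}_0, \hat{v}_1, \ldots, \hat{v}_Q$, each of which already lies in $V_H$ and therefore satisfies the per-tuple constraints of Problem~\ref{prob:optimal_search}. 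Finally, uniqueness is immediate: the extraction rule ``take the new last vertex of each successive tuple'' is a deterministic function of $\Phi$, so no two distinct placements $\pi,\pi'$ can lift to the same $\Phi$ via the construction of Lemma~\ref{lemma:lift_uniqueness1}.

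The main obstacle I anticipate is purely bookkeeping: aligning the two index ranges $j=0,\ldots,Q$ for tuples and $i=0,\ldots,T$ for the interior control points, and rigorously arguing that $Q = k+T+2$ from the length count $k+1+Q = 2k+T+3$. The argument is mechanical once the overlap rule is applied, but it must be stated carefully so that the boundary tuples $\hat{v}_0$ and $\hat{v}_Q$ are consumed correctly and the interior segment $(v_0,\ldots,v_T)$ is neither shifted nor truncated; this is where a sloppy off-by-one would break the correspondence with Lemma~\ref{lemma:lift_uniqueness1}.
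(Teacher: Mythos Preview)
Your construction is correct and is exactly the natural argument: read off the single new last vertex contributed by each successive tuple, reassemble $\tilde{\pi}$, and verify admissibility from the adjacency rule defining $V_H$ and $E_H$. The paper itself states Lemma~\ref{lemma:lift_uniqueness2} in Appendix~\ref{sec:appendix_graph_relation} without proof, so there is nothing to compare against; your write-up would fill that gap.

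One small correction: you claim that membership in $V_H$ automatically yields the collision-free and dynamical-feasibility constraints of Problem~\ref{prob:optimal_search}. It does not. In Section~\ref{sec:search_algorithm} the paper defines $V_H$ purely by the adjacency condition $(v_{j-1},v_j)\in E$; feasibility is enforced separately at search time via \textproc{FeasibleSuccs}. However, ``admissible'' in the paper's usage (Section~\ref{sec:problem_formulation}) refers only to $v_i\in V$, $(v_{i-1},v_i)\in E$, and the boundary connections to $\pi_s$ and $\pi_g$, all of which you do establish correctly. So the feasibility sentence is superfluous rather than wrong, and the lemma goes through as you wrote it. Your index bookkeeping $k+1+Q = 2k+T+3 \Rightarrow Q = k+T+2$ matches the paper's $J = k+T+2$.
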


\begin{prop}\label{prop:graph_transformation}
Given a strictly positive cost function $f_{k,\Delta_t}:{[\tilde{\pi}]}^k \to \mathbb{R}_{+}$, Problem~\ref{prob:optimal_search} is equivalent to the shortest path problem on the graph $\mathcal{G}_H$, where the cost is defined on the vertices according to the function $f_{k,\Delta_t}(\cdot)$.
\end{prop}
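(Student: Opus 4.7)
The plan is to establish the equivalence by combining Lemma~\ref{lemma:lift_uniqueness1} and Lemma~\ref{lemma:lift_uniqueness2} to build a bijection between feasible solutions of Problem~\ref{prob:optimal_search} and admissible paths from $\pi_s$ to $\pi_g$ in $\mathcal{G}_H$, and then showing that under this bijection the objective $\mathcal{J}_{k,\Delta_t}(\tilde{\pi})$ coincides with the standard vertex-additive shortest-path cost on $\mathcal{G}_H$. Once the bijection preserves cost, a global minimizer on one side pulls back to a global minimizer on the other, which is exactly what the equivalence requires.

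First I would lay out the bijection. For the forward direction, take any admissible placement $\pi=(v_0,\ldots,v_T)$ and form the extended sequence $\tilde{\pi}=(v_s^0,\ldots,v_s^k,v_0,\ldots,v_T,v_g^0,\ldots,v_g^k)$; by Lemma~\ref{lemma:lift_uniqueness1} the sliding-window grouping $\hat{v}_j \coloneqq {[\tilde{\pi}]}^k_j$ yields a unique path $\Phi=(\hat{v}_0,\ldots,\hat{v}_Q)$ on $\mathcal{G}_H$ with $\hat{v}_0=\pi_s$, $\hat{v}_Q=\pi_g$, and $Q=J$. For the reverse direction, Lemma~\ref{lemma:lift_uniqueness2} states that any admissible path $\Phi$ on $\mathcal{G}_H$ with endpoints $\pi_s,\pi_g$ deconstructs uniquely back to some admissible placement $\pi$. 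Composition in either order is the identity by construction, giving a genuine bijection.

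Next I would verify cost equivalence. By the identification $\hat{v}_j = {[\tilde{\pi}]}^k_j$, the objective rewrites as
\begin{equation*}
\mathcal{J}_{k,\Delta_t}(\tilde{\pi}) = \sum_{j=0}^{J} f_{k,\Delta_t}({[\tilde{\pi}]}^k_j) = \sum_{j=0}^{Q} f_{k,\Delta_t}(\hat{v}_j),
\end{equation*}
which is precisely the vertex-additive path cost on $\mathcal{G}_H$ when the vertex weight is $f_{k,\Delta_t}(\hat{v})$. The collision-free and dynamical-feasibility predicates in Problem~\ref{prob:optimal_search} are also phrased per vertex tuple, so they can be absorbed into $\mathcal{G}_H$ either by restricting $V_H$ to feasible tuples or by assigning infeasible tuples cost $+\infty$; either way admissibility of $\pi$ and admissibility of $\Phi$ describe the same set.

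The main obstacle I anticipate is handling the fact that $\pi$ is allowed to repeat vertices of $\mathcal{G}$, which could at first suggest that a shortest-path argument on $\mathcal{G}_H$ is ill-posed (infinite cyclic traversals). This is precisely where the hypothesis $f_{k,\Delta_t}>0$ pays off: strict positivity of the vertex weights makes any nontrivial cycle in $\mathcal{G}_H$ strictly cost-increasing, so an optimum is attained on a simple (or at least finite, acyclic-in-cost) path; equivalently, repetition of a grid cell in $\pi$ is paid for by a strictly positive contribution in the sum and therefore cannot be exploited to beat the optimum. I would finish by combining the bijection and the cost identity: the minimum of $\mathcal{J}_{k,\Delta_t}$ over admissible $\pi$ equals the minimum of $\sum_j f_{k,\Delta_t}(\hat{v}_j)$ over admissible $\hat{v}_0$-to-$\hat{v}_Q$ paths in $\mathcal{G}_H$, and the minimizers correspond to one another under the bijection, which is the claimed equivalence.
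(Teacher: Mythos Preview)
Your proposal is correct and follows exactly the route the paper sets up: the paper's own treatment in Appendix~\ref{sec:appendix_graph_relation} simply states Lemma~\ref{lemma:lift_uniqueness1} and Lemma~\ref{lemma:lift_uniqueness2} and then asserts Proposition~\ref{prop:graph_transformation} without further argument, so your bijection-plus-cost-identity writeup is precisely the intended (but omitted) proof, and your remark on strict positivity handling repetitions matches the paper's motivation for that hypothesis.
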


\subsection{Characterization of the inflation for the B-spline-based kinodynamic search}\label{sec:appendix_prop_collifree}
We denote by $\mathcal{C}^{\text{BK}}$ the configuration space in which we conduct the B-spline-based kinodynamic search. The configuration space $\mathcal{C}^{\text{BK}}$ is generated by inflating all the obstacles by $\delta^{\text{BK}}$ in the workspace. We take a 26-connected 3-D grid with fixed cell size $d_\text{x} \times d_\text{y} \times d_\text{z}$ and a fifth-degree B-spline as an example. There is a finite number of possible span patterns (${27}^5$ in total). The minimum clearance $c_{\min}^{\text{BK}} $ of the configuration space $\mathcal{C}^{\text{BK}}$ can be expressed by the cell size and obstacle inflation $\delta_{\text{BK}} $ according to $c_{\min}^{\text{BK}} = \min \left( d_\text{x}/2 + \delta^{\text{BK}}, d_\text{y}/2 + \delta^{\text{BK}}, d_\text{z}/2 + \delta^{\text{BK}} \right) $. The problem of finding the sufficient condition such that the overall trajectory is collision free is equivalent to finding the minimum inflation $\delta^{\text{BK}}$ such that the trajectories for all the B-spline patterns are completely bounded inside the inflated cells. Since the total number of patterns is finite, $\delta^{\text{BK}}$ can be found by enumerating all the possible span patterns and picking out the one with the largest deviation. The script is available.\footnote{The corresponding script can be found at \url{https://github.com/WenchaoDing/kinodynamic_replanning.git}.} Note that the process of finding the sufficient inflation is one-time work prior to the planning process. Therefore, it does not affect the EBK search efficiency. Typically, for a 26-connected 3-D grid with fixed cell size $0.16\,m\times 0.16\,m \times 0.16\,m$, the inflation needed is less than $0.03\,m$, which is easy to satisfy in practice.

\subsection{Performance analysis of the EBK search}\label{sec:appendix_resolution_complete}
To analyze the performance of the EBK search, we show than the modified \textproc{Index} function in Algo.~\ref{algo:index_efficient} induces another search graph. The characterization of the search graph unveils the complexity of the EBK search. In the following, we give a formal definition of the search graph given by the modified \textproc{Index} function.

We begin with the definition of the nodes which are called \textit{virtual nodes}. Specifically, given the encoding level $d$ and encoding index $e$, we denote by $\mathcal{H}^d_e \coloneqq  \{\hat{v}\in V_H | \Call{Index}{\hat{v},d} = e\}$ the virtual node aggregating all the vertex tuples which share the same encoding, i.e., the same last $d$ coordinates.
It follows that each vertex tuple $\hat{v}\in V_H$ belongs to exactly one virtual node due to the uniqueness induced by the function \textproc{UniqueEncode}$(\cdot)$.
For each vertex tuple $\hat{v}_i \in \mathcal{H}^d_{e_i}$, we can obtain the set of neighboring virtual nodes $\{\mathcal{H}^d_{e_j} | e_j =\Call{Index}{\hat{v}_j, d}, \forall(\hat{v}_i, \hat{v}_j)\in E_H\}$.
The interesting part is that every vertex tuple of $\mathcal{H}^d_{e_i}$ has exactly the same set of neighboring virtual nodes.
We denote by $\mathcal{E}$ the set of edges between virtual nodes, and we denote by $\mathcal{G}_{D}=(\mathcal{H}^d, \mathcal{E})$ the \textit{virtual graph} formed by the virtual nodes, where $\mathcal{H}^d$ denotes the set of all virtual nodes.

There are several unique transformations between~$\mathcal{G}_{D}$ and~$\mathcal{G}_{H}$. Given the initial state $\pi_s\in\mathcal{H}^d_{e_s}$ and the goal state $\pi_g \in \mathcal{H}^d_{e_g}$, an \textit{admissible} path $\Theta=(\mathcal{H}^d_{e_0}, \mathcal{H}^d_{e_1}, \ldots, \mathcal{H}^d_{e_Q})$ on the virtual graph $\mathcal{G}_{D}$ should satisfy $\mathcal{H}^d_{e_0} = \mathcal{H}^d_{e_s}$, $\mathcal{H}^d_{e_Q} = \mathcal{H}^d_{e_g}$ and $(\mathcal{H}^d_{e_{j-1}},\mathcal{H}^d_{e_j}) \in \mathcal{E}$ for $j=1,\ldots,Q$. By the construction, any admissible path $\Phi$ on $\mathcal{G}_H$ uniquely corresponds to an admissible path $\Theta$ on $\mathcal{G}_{D}$ due to the uniqueness of encoding. Given a known initial state $\pi_s$, any admissible path $\Theta$ on $\mathcal{G}_{D}$ also uniquely corresponds to an admissible path $\Phi$ on $\mathcal{G}_H$.\footnote{Without $\pi_s$, the uniqueness no longer holds.} The reason is that the encoding is based on the last $d > 1$ coordinates, and with the known initial virtual node, the original vertex tuple can be reconstructed.

We define the cost to the virtual node to be $c[\mathcal{H}^d_e] = \min \{c[\pi_s, \hat{v}] | \forall \hat{v} \in \mathcal{H}^d_e \}$, where $c[\pi_s, \hat{v}]$ is the minimum cost from the start vertex tuple $\pi_s$ to $\hat{v}$ according to Eq.~\ref{eq:node_cost}. The EBK search cannot reach the exact goal state $\pi_g$, and instead it can only reach the virtual node $\mathcal{H}^d_{e_g}$. In other words, the EBK search can only reach the  relaxed goal state. Given the start and goal virtual nodes, the EBK search is complete (finds the optimal admissible path if one exists) with respect to the aggregated graph $\mathcal{G}_D$, as stated below:
\begin{theorem}\label{thm:resolution_complete}
	Given the graph $\mathcal{G}_D = (\mathcal{H}^d, \mathcal{E})$, the start virtual node $\mathcal{H}^d_{e_s}\in \mathcal{H}^d$ and the goal virtual node $\mathcal{H}^d_{e_g}\in \mathcal{H}^d$, the EBK search finds the optimal admissible path $\Theta=(\mathcal{H}^d_{e_s}, \mathcal{H}^d_{e_1}, \ldots, \mathcal{H}^d_{e_g})$ on $\mathcal{G}_D$ if one exists.
\end{theorem}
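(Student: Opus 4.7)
The plan is to reduce Theorem~\ref{thm:resolution_complete} to the classical optimality guarantee of Dijkstra's algorithm (or A* with an admissible heuristic) on the virtual graph $\mathcal{G}_D$. Concretely, I would show that Algorithm~\ref{algo:optimal_search} equipped with the modified \textproc{Index} of Algorithm~\ref{algo:index_efficient} is a faithful implementation of a standard label-correcting search on $\mathcal{G}_D$ with strictly positive vertex costs induced by $f_{k,\Delta_t}$. Once this correspondence is established, the theorem follows from the textbook optimality result: a label-correcting search with a priority queue ordered by $g+h$ and a non-negative, admissible heuristic returns the minimum-cost admissible path whenever one exists in the underlying graph.

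The first step is to justify the well-definedness of the virtual graph and of the successor structure it inherits from $\mathcal{G}_H$. Using the uniqueness of \textproc{UniqueEncode}, I would argue that the modified \textproc{Index} partitions $V_H$ into disjoint classes $\{\mathcal{H}^d_e\}$. I would then verify the key structural claim already asserted in the appendix: every $\hat{v}\in\mathcal{H}^d_{e_i}$ admits the same set of neighboring virtual nodes $\{\mathcal{H}^d_{e_j}\}$. This follows because a successor tuple of $\hat{v}$ is obtained by dropping $\hat{v}$'s first vertex and appending a new one, and since $d\leq k$, the last $d$ coordinates of any successor are determined by the last $d-1$ shared coordinates of $\mathcal{H}^d_{e_i}$ together with the appended vertex. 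Hence the edge set $\mathcal{E}$ is well-defined on virtual nodes.

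The second step is to establish the cost correspondence. I would show by induction on the order of pops from $\mathcal{O}$ that, when a virtual node $\mathcal{H}^d_e$ is popped, the stored $g$-value equals $c[\mathcal{H}^d_e]=\min\{c[\pi_s,\hat{v}]:\hat{v}\in\mathcal{H}^d_e\}$. Two observations drive this induction: (i) every time a successor tuple $\hat{v}_j$ with $\Call{Index}{\hat{v}_j,d}=e$ is generated, the relaxation $g(\hat{v}_j)\leftarrow g(\hat{v}_i)+f_{k,\Delta_t}(\hat{v}_j)$ keeps the minimum over all tuples in $\mathcal{H}^d_e$ seen so far, because the hash $\mathcal{L}$ collapses them onto the same key; and (ii) since $f_{k,\Delta_t}$ is strictly positive, any unexplored path to $\mathcal{H}^d_e$ with lower cost must pass through some virtual node currently in $\mathcal{O}$ with $g$-value not exceeding $g(\mathcal{H}^d_e)$, so it would have been popped earlier, yielding a contradiction. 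Admissibility of the heuristic from~\cite{liu2017smp} (taken as the minimum over the representative tuple) preserves the A* invariant.

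The final step is to invoke the standard optimality theorem: once $\mathcal{H}^d_{e_g}$ is popped, $g(\mathcal{H}^d_{e_g})$ is the cost of an optimal admissible path $\Theta^\ast$ on $\mathcal{G}_D$, and the pointers inherited through the relaxations reconstruct $\Theta^\ast$. The main obstacle I anticipate is the cost-correspondence step: strictly speaking, cost is attached to vertex tuples rather than to virtual nodes, so one must carefully check that the relaxation in terms of the concrete tuple $\hat{v}_j$ gives the correct minimum over the entire class $\mathcal{H}^d_e$ when the class is popped. This is where the strict positivity of $f_{k,\Delta_t}$ together with the priority-queue ordering is essential, and where an inductive argument analogous to the classical Dijkstra proof (adapted to node-weighted graphs) is needed to close the gap.
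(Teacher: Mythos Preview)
Your proposal is correct and follows essentially the same approach as the paper: both reduce the claim to a Dijkstra-style induction on the order in which virtual nodes are expanded, using strict positivity of $f_{k,\Delta_t}$ and the observation that the modified \textproc{Index} collapses tuples sharing the last $d$ coordinates onto a single key whose associated tuple (and hence cost) is updated by relaxation. If anything, your outline is more explicit than the paper's about the two genuine subtleties---the well-definedness of the successor relation on $\mathcal{G}_D$ and the fact that costs live on tuples rather than virtual nodes---which the paper asserts but does not spell out.
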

\begin{proof}
We prove by induction and contradiction. The proof follows a similar reasoning process to proving the correctness of Dijkstra's algorithm~\cite{dijkstra1959note}. And the difference is that for one virtual node, there is one vertex tuple picked out to associate with the virtual node, and the association will be updated during the search process. For the expanded virtual node, the association will be fixed and is supposed to yield the minimum cost to the virtual node among all the aggregated vertex tuples.

Suppose the following hypothesis holds: for each expanded virtual node $\mathcal{H}^d_{v}$, $c[\mathcal{H}^d_{v}]$ is the lowest cost from the source virtual node to $\mathcal{H}^d_{v}$; and for unexpanded virtual node $\mathcal{H}^d_{u}$, $c[\mathcal{H}^d_{u}]$ is the lowest cost from the source virtual node to $\mathcal{H}^d_{u}$ via expanded virtual nodes only. The base case is that there is just the initial virtual node, and the hypothesis holds obviously.

Assume there are $n-1$ expanded virtual nodes, and the hypothesis holds, in which case, the lowest costs to the $n-1$ virtual nodes are known, and given the source virtual node, the vertex tuple associations for the $n-1$ virtual nodes are fixed accordingly. We choose $\mathcal{H}^d_{v}$ from the $n-1$ nodes such that it has the least $c[\mathcal{H}^d_{u}] = c[\mathcal{H}^d_{v}] + f[\mathcal{H}^d_{u}]$ while satisfying $(\mathcal{H}^d_{v},\mathcal{H}^d_{u})\in \mathcal{E}_d$, where $f[\mathcal{H}^d_{u}]$ is the cost of the virtual node.

First, $c[\mathcal{H}^d_{u}]$ should be the shortest path from the source node to $\mathcal{H}^d_{u}$. Since if there is a shorter path reaching $\mathcal{H}^d_{u}$ via the nodes other than the $n-1$ expanded nodes, and $\mathcal{H}^d_{w}$ is the first unexpanded node on that path, it follows that $c[\mathcal{H}^d_{w}] > c[\mathcal{H}^d_{u}]$, which yields a contradiction.

Second, for any of the remaining unvisited nodes $\mathcal{H}^d_{w}$, $c[\mathcal{H}^d_{w}]$ should still be the shortest path via the expanded nodes. Since if a lower cost of $c[\mathcal{H}^d_{w}]$ is found by adding $\mathcal{H}^d_{u}$ to the expanded nodes, Line 17 of Algo.~\ref{algo:optimal_search} will have updated it. Note that the update of $c[\mathcal{H}^d_{w}]$ will update the association with $\mathcal{H}^d_{w}$, so the cost and association are consistent.
\end{proof}

\subsection{Proof of Theorem~\ref{thm:finite_safety}}\label{sec:appendix_thm_safety}
The correctness of the theorem follows from the convex hull property of the B-spline. For brevity, we only consider one control point span which consists of $k+1$ control points, but can be generalized to a long control point sequence without any difficulty. The original tube of one control point span consists of $k+1$ balls, and the connectivity is already guaranteed by the two-level inflation scheme. As such, there are $k$ intersection areas for the sequence of $k+1$ control points. Note that here we only consider the intersection between the two balls associated with two neighboring control points. In the extreme case, we add $k$ control points to each of the intersection areas, and the overall control point sequence consists of $k+1+k^2$ control points, i.e., $k^2+1$ control point spans. By applying the convex hull property to each of the spans, the trajectory for each control point span is bounded inside one of the balls. As such, the iterative process can succeed in $k^2$ iterations if no violation of the dynamical feasibility is reported.

\bibliography{paper}
\vspace{-1.0cm}

\begin{IEEEbiography}[{\includegraphics[width=1in,height=1.25in,clip,keepaspectratio]{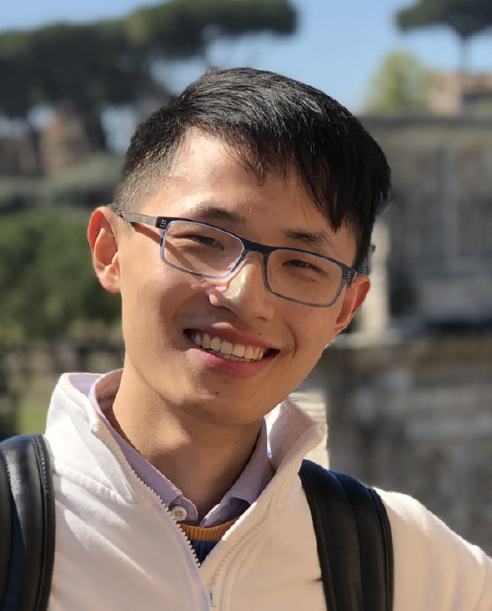}}]{Wenchao Ding}
	received his B.Eng. degree in Electronic and Information Engineering from Huazhong University of Science and Technology, China, in 2015. He is currently pursuing his PhD degree in the Hong Kong University of Science and Technology under the supervision of Prof. Shaojie Shen.

	His research interests include decision making, prediction, motion planning and autonomous navigation for aerial robots and autonomous vehicles.
\end{IEEEbiography}

\vskip -1.8\baselineskip plus -1fil

\begin{IEEEbiography}[{\includegraphics[width=1in,height=1.25in,clip,keepaspectratio]{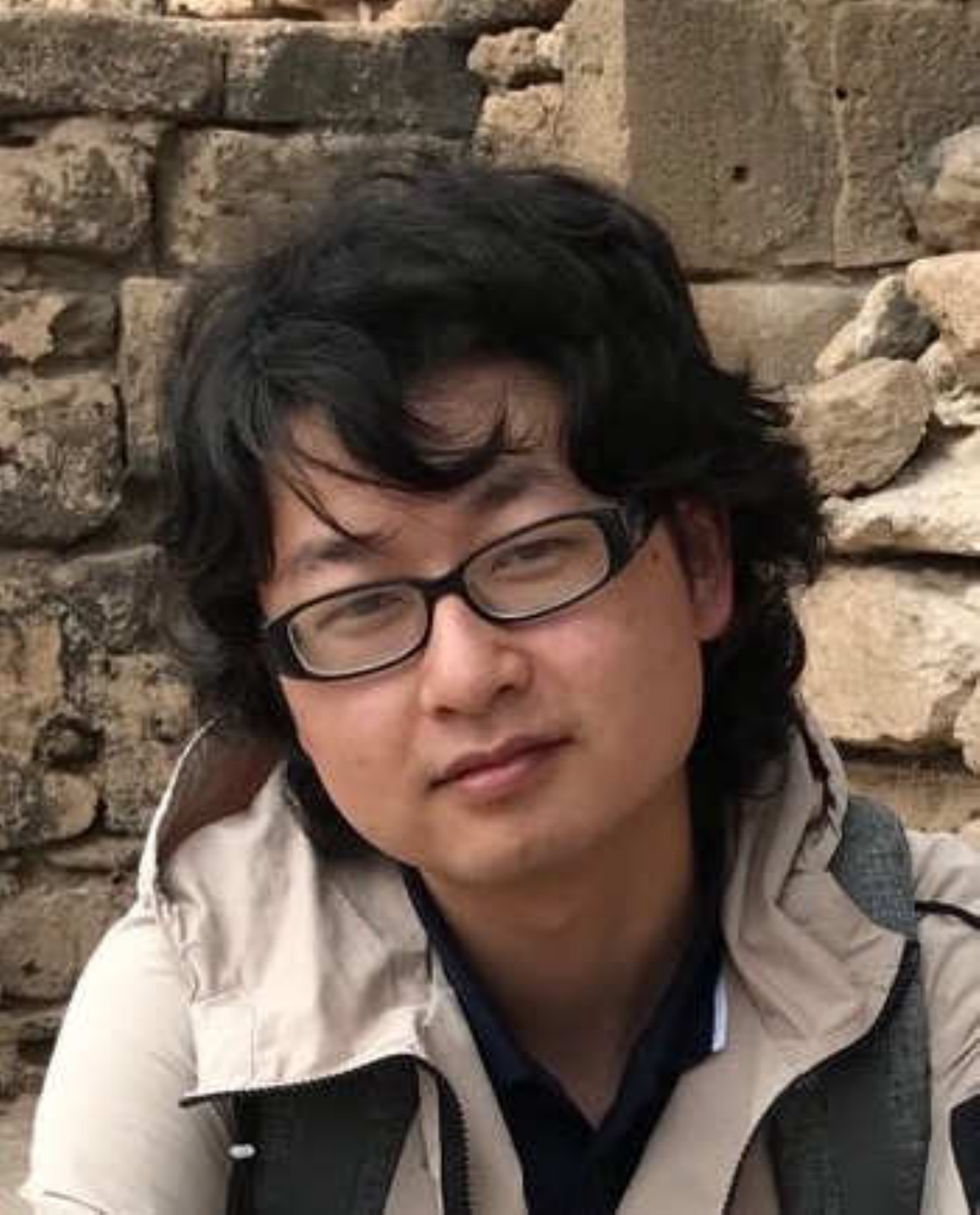}}]{Wenliang Gao}
	received his B.Eng. degree in optical engineering from Beijing Institute of Technology, Beijing, China, in 2016. He received his M.Phil. degree in robotics from Hong Kong University of Science and Technology, Hong Kong, in 2018. He is currently working as an algorithm engineer in DJI.

	His research interests include state estimation, navigation, and mapping with the visual-inertial system of autonomous robots.
\end{IEEEbiography}

\vskip -1.8\baselineskip plus -1fil

\begin{IEEEbiography}[{\includegraphics[width=1in,height=1.25in,clip,keepaspectratio]{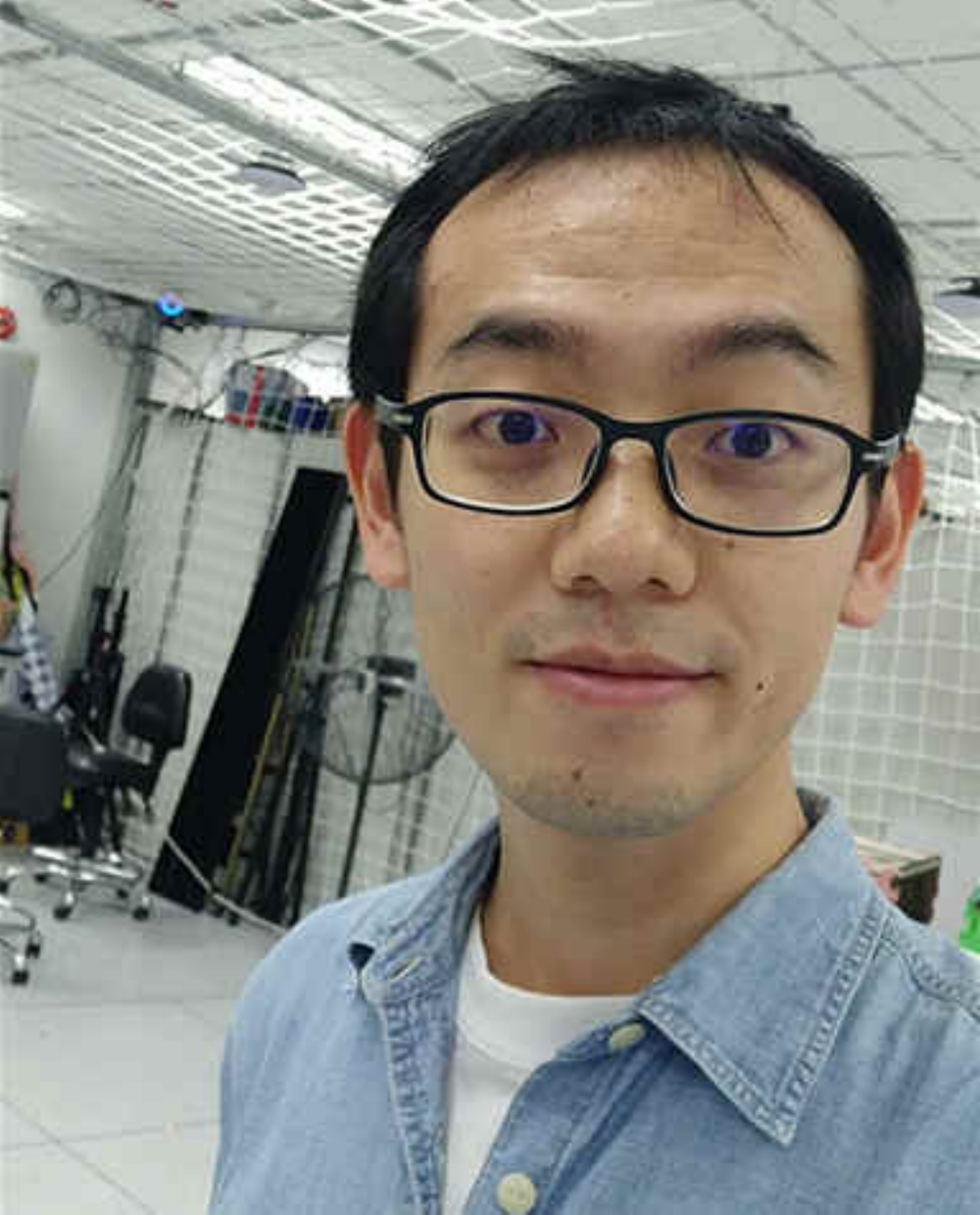}}]{Kaixuan Wang}
	received his B.Eng. degree in automation from Southeast University, Nanjing, China, in 2016. He then joined HKUST Aerial Robotics Group at the Hong Kong University of Science and Technology, under the supervision of Prof. Shaojie Shen.

	His research interests cover real-time 3D perception and mapping for robotics navigation, especially for UAV autonomous flight and autonomous vehicles.
\end{IEEEbiography}

\vskip -1.8\baselineskip plus -1fil

\begin{IEEEbiography}[{\includegraphics[width=1in,height=1.25in,clip,keepaspectratio]{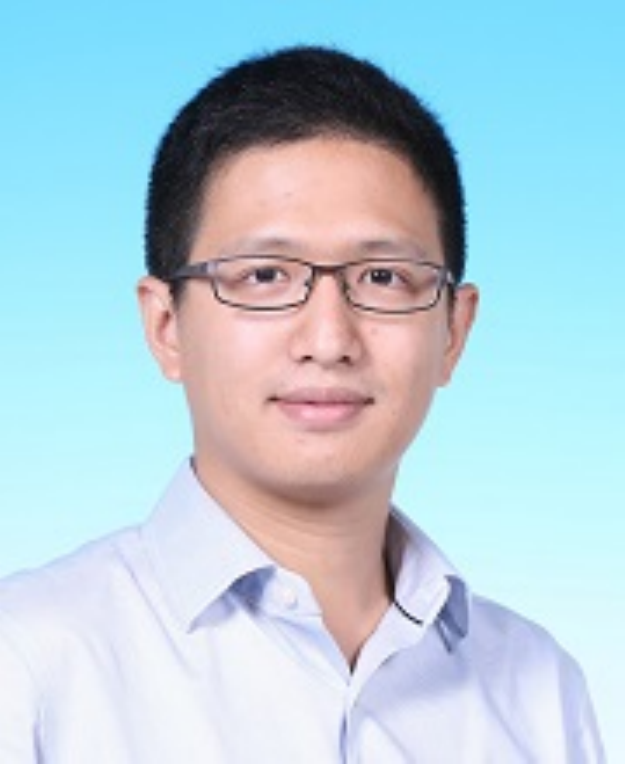}}]{Shaojie Shen}
	received his B.Eng. degree in Electronic Engineering from the Hong Kong University of Science and Technology (HKUST) in 2009. He received his M.S. in Robotics and Ph.D. in Electrical and Systems Engineering in 2011 and 2014, respectively, all from the University of Pennsylvania. He joined the Department of Electronic and Computer Engineering at the HKUST in September 2014 as an Assistant Professor.

	His research interests are in the areas of robotics and unmanned aerial vehicles, with focus on state estimation, sensor fusion, localization and mapping, and autonomous navigation in complex environments. He is currently serving as associate editors for T-RO and AURO.
\end{IEEEbiography}
\end{document}